\theoremstyle{plain}
\newtheorem{theorem}{Theorem}	
\newtheorem{proposition}[theorem]{Proposition}
\newtheorem{corollary}[theorem]{Corollary}
\theoremstyle{remark}
\newtheorem{definition}{Definition}
\def\mA{\mathbf{A}}
\def\mB{\mathbf{B}}
\def\me{\mathbf{e}}
\def\mE{\mathbf{E}}
\def\mF{\mathbf{F}}
\def\mG{\mathbf{G}}
\def\mH{\mathbf{H}}
\def\mI{\mathbf{I}}
\def\mK{\mathbf{K}}
\def\mL{\mathbf{L}}
\def\mM{\mathbf{M}}
\def\mN{\mathbf{N}}
\def\mP{\mathbf{P}}
\def\mQ{\mathbf{Q}}
\def\mR{\mathbf{R}}
\def\mT{\mathbf{T}}
\def\mU{\mathbf{U}}
\def\mV{\mathbf{V}}
\def\mW{\mathbf{W}}
\def\mx{\mathbf{x}}
\def\mX{\mathbf{X}}
\def\my{\mathbf{y}}
\def\mY{\mathbf{Y}}
\def\mz{\mathbf{z}}
\def\mZ{\mathbf{Z}}
\def\mLam{\mathbf{\Lambda}}
\def\mSig{\mathbf{\Sigma}}
\def\mPi{\boldsymbol{\Pi}}
\def\mpi{\boldsymbol{\pi}}
\DeclareMathOperator*{\argmin}{arg\,min}
\author{Andrew Jones and Patrick Rubin-Delanchy}
\affil{University of Bristol}
\title{The Multilayer Random Dot Product Graph}
\date{}
\begin{document}

\maketitle

\begin{abstract}
We present a comprehensive extension of the latent position network model known as the random dot product graph to accommodate multiple graphs---both undirected and directed---which share a common subset of nodes, and propose a method for jointly embedding the associated adjacency matrices, or submatrices thereof, into a suitable latent space. Theoretical results concerning the asymptotic behaviour of the node representations thus obtained are established, showing that after the application of a linear transformation these converge uniformly in the Euclidean norm to the latent positions with Gaussian error. Within this framework, we present a generalisation of the stochastic block model to a number of different multiple graph settings, and demonstrate the effectiveness of our joint embedding method through several statistical inference tasks in which we achieve comparable or better results than rival spectral methods. Empirical improvements in link prediction over single graph embeddings are exhibited in a cyber-security example. 
\end{abstract}

\tableofcontents

\section{Introduction}

Networks permeate the world in which we live, and so developing an accurate understanding of them is a matter of great interest to many branches of academia and industry, with applications as diverse as identifying patterns in brain scans \citep{fornito2016fundamentals} and the detection of fraudulent behaviour in the financial services sector \citep{akoglu2013anomaly}. The mathematical study of random graphs has its roots in the work of E. N. Gilbert \cite{G59} and, contemporaneously, Erd\"{o}s and R\'{e}nyi \cite{ER60}, who considered graphs in which edges between nodes occur independently according to Bernoulli random variables with a fixed probability $p$, in what can be considered the simplest probabilistic model of a naturally occurring network (this type of graph now being universally referred to as an \textit{Erd\"{o}s-R\'{e}nyi graph}). 

Among the more modern statistical treatments of networks is the concept of a \textit{latent position model} \cite{HRH02}, in which the $i$th node of a graph is mapped to a vector $\mX_i$ in some underlying latent space $\mathcal{X} \subseteq \mathbb{R}^d$, and (conditional on this choice of latent positions) the $i$th and $j$th nodes connect independently with probability $\kappa(\mX_i,\mX_j)$, where the \textit{kernel} function $\kappa: \mathcal{X} \times \mathcal{X} \to [0,1]$. The \textit{random dot product graph} (RDPG, \cite{nickel2006}, \cite{young_scheinerman}, \cite{AFTPPVLLQ17}), which uses the kernel function $\kappa(\mx,\my) = \mx^\top\my$, and its generalisation (GRDPG, \cite{RPTC18}), using the kernel function $\kappa(\mx,\my) = \mx^\top \mI_{p,q} \my$ (where $\mI_{p,q}$ is the diagonal matrix whose entries are $p$ ones followed by $q$ minus ones, where $p + q = d$) are of particular interest here due to their computational tractability and associated statistical estimation theory: spectral embedding \cite{VL07} the observed graph based on largest eigenvalues (respectively, largest-in-magnitude) produces uniformly consistent estimates of the latent positions $\mX_i$ of the RDPG (respectively, GRDPG) model up to orthogonal (respectively, indefinite orthogonal) transformation, with asymptotically Gaussian error \citep{sussman2012consistent,athreya2016limit,LTAPP17,cape_biometrika,CTP17,TP19,RPTC18,AFTPPVLLQ17}. The GRDPG can be used to effectively model networks which exhibit disassortative connectivity behaviour \cite{K10} in which dissimilar nodes are the more likely to connect to each other, and is therefore the preferred model for studying biological or technological networks, which typically exhibit such behaviour \cite{N02}. 

Recently, attention has turned to the joint study of \textit{multiple} graphs. Often, the graphs of interest share a common set of nodes but have different edges, and such a collection of graphs is known as a \textit{multilayer} network \citep{kivela2014multilayer} (more precisely, it is an example of a \textit{multiplex} network). This framework is of interest in the study of \textit{dynamic networks}, in which each of the graphs may represent a ``snapshot'' of a network at a given point in time, and has been used, for instance, for link prediction in cyber-security applications \cite{PBNH19}. Alternatively, one may be interested in detecting differences in node behaviour between graphs, and this approach was used identify regions of the brain associated with schizophrenia by comparing brain scans of both healthy and schizophrenic patients \cite{LATLP17}.

Latent position models readily extend to the study of multiple graphs by allowing the kernel functions $\kappa_r$ to vary, while retaining a common set of latent positions $\mX_i$ across the graphs, and in particular there exist several RDPG-based methods for working with multiple graphs. If each graph is drawn from the same distribution (that is, $\kappa_r(\mx,\my) = \mx^\top\my$ for each $r$) then one can consider the \textit{mean embedding} \cite{TKBCMVPS19} by spectrally embedding the average of the adjacency matrices $\bar{\mA} = \frac{1}{k}\sum_{r=1}^k \mA^{(r)}$, or the \textit{omnibus embedding} \cite{LATLP17}, in which each graph is assigned a different embedding in a common latent space. The mean embedding is known to perform well asymptotically at the task of estimating the latent positions \cite{TKBCMVPS19}, while the omnibus embedding is particularly suited to the task of testing whether the graphs are drawn from the same underlying distribution.

Other RDPG-based methods are more general, allowing different kernel functions $\kappa_r$ across the graphs. In the \textit{multiple random eigengraph} (MREG, \cite{WAVP19}) model, the kernel function $\kappa_r(\mx,\my) = \mx^\top\mLam_r\my$ is used, where the matrices $\mLam_r \in \mathbb{R}^{d \times d}$ are diagonal with non-negative entries. The \textit{multiple random dot product graph} (multi-RDPG, \cite{NW18}) loosens these restrictions by allowing the matrices $\mLam_r$ to be non-diagonal (but symmetric and positive definite), while requiring that the matrix of latent positions $\mX$ have orthonormal columns. Expanding on this is the \textit{common subspace independent edge} (COSIE, \cite{AACCPV19}) graph model which allows the matrices $\mLam_r$ to have positive \textit{and} negative eigenvalues, while still requiring them to be symmetric. Each of these models is proposed along with a spectral embedding technique for latent position estimation. Under the COSIE model, the adjacency matrix of each component graph is embedded separately, into a dimension $d_r$, say. A second, joint, spectral decomposition is then applied to the point clouds, to find a common embedding of dimension $d$. The approach requires estimation of each $d_r$ as well as $d$, for which a generic method based on the `elbow' of the scree-plot is suggested \citep{zhu2006automatic}.

Statistical discourse on network embedding is often inspired by the stochastic block model \citep{HLL83}, in which an unknown partition of the nodes exists so that the nodes of any group (or community) are statistically indistinguishable. Under this model, a network embedding procedure can reasonably be expected to ascribe identical positions to the nodes of one group, up to statistical error, and different embedding techniques can therefore be compared through the theoretical performance of an appropriate clustering algorithm at recovering the communities \citep{sarkar2015role,TP19,cape2019spectral}.

Of the approaches referred to above, only the COSIE model allows estimation of a generic multilayer stochastic block model \citep{HLL83}. For example, if one graph has assortative community structure (``birds of a feather flock together'') and the other disassortative (``opposites attract''), then the mean embedding can evidently eradicate all community structure visible in either individual graph.

As a model for multiple undirected graph embedding, the Multilayer Random Dot Product Graph model (MRDPG), presented here, is equivalent to the COSIE model in terms of its likelihood given latent positions, but the latent positions are themselves defined differently. The spectral embedding method to which this leads is materially different and simpler, while estimation performance is apparently superior (by numerical experiments). However, the MRDPG in fact allows for far greater generalization than the models we have discussed; it not only extends naturally to accommodate both symmetric and \textit{asymmetric} adjacency matrices (and thus allow us to extract information from \textit{directed} graphs) but also \textit{non-square} binary matrices, such as the non-zero off-diagonal blocks appearing in the adjacency matrix of a bipartite graph, by allowing the columns of such matrices to correspond to a second set of latent positions $\mY_i$. Moreover, provided that the rows of each matrix correspond to a common set of latent positions $\mX_i$, we may in fact allow the columns of each individual matrix to correspond to \textit{different} sets of latent positions $\mY^{(r)}_i$ (allowing us, for example, to study the behaviour of a particular collection of nodes in a network through observing their interactions---potentially in a number of different settings---with other collections of nodes).

We retain the use of the kernel functions $\kappa_r(\mx,\my) = \mx^\top\mLam_r\my$ for each graph, but now note that $\kappa_r: \mathcal{X} \times \mathcal{Y}_r \rightarrow [0,1]$ for subsets $\mathcal{Y}_r \subseteq \mathbb{R}^{d_r}$, where we allow \textit{arbitrary} matrices $\mLam_r \in \mathbb{R}^{d \times d_r}$ and impose no restriction of orthogonality on the matrices of latent positions $\mX$ and $\mY^{(r)}$.  These latent position matrices are either independent of each other, or else satisfy certain dependence criteria; such as the $\mY^{(r)}$ being equal---potentially up to some linear transformation--- with probability one (a full discussion of this is presented in Section \ref{subsec:asymptotics}). Given a collection of binary matrices $\mA^{(1)}, \ldots, \mA^{(k)}$ with each $\mA^{(r)} \in \{0,1\}^{n \times n_r}$, those latent positions are then estimated by the following procedure: we form a matrix $\mA$---which we refer to as the \textit{unfolding} of the matrices $\mA^{(r)}$---by adjoining the matrices $\mA^{(r)}$, and obtain left and right spectral embeddings of $\mA$ by scaling its left and right singular vectors by the square roots of the corresponding singular values. We refer to these embeddings as the Unfolded Adjacency Spectral Embeddings (UASEs) of the $\mA^{(r)}$. The left-sided embedding $\mX_\mA$ is our proposed estimate of $\mX$, that is, a \textit{single} embedding of the nodes that is common to all graphs; the right-sided embedding $\mY_\mA$ can be split into $k$ distinct embeddings $\mY^{(r)}_\mA$, which can be shown (under certain criteria) to provide estimates of the latent position matrices $\mY^{(r)}$.

We allow the matrices $\mLam_r$ to be of non-maximal rank, requiring instead that the matrix $\mLam = [\mLam_1|\ldots|\mLam_k]$ be of maximal rank. This allows for the situation in which information about the latent positions can be \textit{obscured} in individual graphs. As a simple example, consider a three-party political system in which members always vote along party lines, with graphs representing the outcome of votes on particular motions (in which members can either support or oppose the motion, and two members are linked if they vote in the same way). Suppose further that there are no coalitions (that is, every pair of parties has at least one motion on which they vote differently). Then any individual vote will only highlight two groups (those who support and those who oppose that particular motion) and it is only with knowledge of \textit{multiple} votes that one can correctly identify the individual parties. In our method, no intermediate estimate of the rank $d_r$ of $\mLam_r$ is required, in contrast to the COSIE-based approach. 

We investigate the asymptotic behaviour of the left- and right-sided embeddings of the unfolding $\mA$ under an MRDPG model. It is shown that, up to linear transformation, the rows of each embedding converge uniformly in the Euclidean norm to the latent positions $\mX_i$ (Theorem \ref{consistency}) and, through the derivation of a central limit theorem (Theorem \ref{CLT}), that these rows are distributed around their corresponding latent positions according to a Gaussian mixture model, and thereby significantly extending the existing results of \cite{AFTPPVLLQ17} and \cite{RPTC18} to our more general model. These distributional results show that, in particular, if the graphs are identically distributed then the transformed rows of the left-sided embedding have the same limiting distribution as those of the mean embedding (Corollaries \ref{identical_MRDPG_consistency}~and~\ref{identical_MRDPG_CLT}). Consequently, if multiple graphs are identically drawn according to a stochastic block model \cite{HLL83} then joint embedding will \textit{always} be more effective at the task of cluster separation than any individual graph embedding (Proposition \ref{Chernoff_identical_embeddings}), where we evaluate this effectiveness via the \textit{Chernoff information} \cite{C52} of the limiting Gaussian distributions of the embeddings. The Chernoff information belongs to the class of \textit{f}-divergences \citep{AS66,C67} and is therefore invariant under invertible linear transformations \cite{LV06}, an important requirement here since distributional results hold only up to such transformation. 

Using the framework of the MRDPG, we propose an extension to the multilayer stochastic block model \cite{HLL83} which we refer to as the \textit{generalised} multilayer stochastic block model (GMSBM). Given a collection of graphs---which may be either directed or undirected---each of which follows a stochastic blockmodel, the GMSBM allows us to study one or more communities common to all graphs by observing their interactions with \textit{all} communities. The point cloud obtained by jointly embedding the unfolding of the resulting adjacency submatrices then exhibits the asymptotic behaviour detailed in our main results; allowing us, for example, to determine whether we can further divide these communities given the information extracted from the multiple embeddings. We provide empirical evidence of the effectiveness of our embedding method at the task of community detection under the GMSBM, demonstrating that fitting a Gaussian mixture model to the UASE achieves better results than rival spectral methods in both the directed and undirected cases.

We assess the effectiveness of unfolded adjacency spectral embedding for the general MRDPG at the inference tasks of recovery of latent positions, estimation of the common invariant subspaces, estimation of the underlying probabilistic model and two-graph hypothesis testing in simulated data. We demonstrate that performance at the estimation tasks is often better than that of the multiple adjacency spectral embedding (the method proposed in \cite{AACCPV19} to embed multiple graphs distributed according to a COSIE model, which demonstrably yields state of the art performance at such tasks), while its performance at the latter task is comparable with that of the omnibus embedding for reasonably-sized graphs (those with at least 500 nodes). We also apply the UASE to the task of link prediction, using connectivity data from the Los Alamos National Laboratory computer network \cite{TKH18} to predict connections between computers across the entire network as an example of a dynamic link prediction inference task, before restricting our attention to connections occurring through specific ports, demonstrating that the majority of the time the UASE yields greater accuracy than individual adjacency spectral embeddings. As a final example, we show how incorporating connection data between computers and ports into our model can increase the accuracy of our link prediction method.

The remainder of this article is structured as follows. In Section \ref{sec:MRDPG} we present our model, and corresponding asymptotic results. In Section \ref{sec:GMSBM} these results are explored within the context of a stochastic block model, appropriately extended to different multiple graph settings, all special cases of the MRDPG. Section \ref{sec:experimental} presents a series of experiments comparing the performance of unfolded adjacency spectral embedding with rival methods at different inference tasks. Section \ref{sec:real_data} presents an example from a real computer network, from which multiple graphs are extracted by considering different time windows or different port numbers (indicating different types of network service), and are used together to improve link prediction. In Section \ref{sec:Chernoff}, we investigate the statistical gain of joint versus individual spectral embedding in terms of Chernoff information, proving there is definite improvement in one special (but interesting) case, leaving open a wider conjecture.  Finally, Section \ref{sec:conclusion} concludes.

\section{The multilayer random dot product graph}\label{sec:MRDPG}

\begin{definition}\label{def:MRDPG} (The multilayer random dot product graph model). 
\\For a positive integer $k$, fix matrices $\mLam_r \in \mathbb{R}^{d \times d_r}$ for each $r \in \{1,\ldots,k\}$ such that the matrix $\mLam = [\mLam_1|\cdots|\mLam_k]$ is of rank $d$, and fix bounded subsets $\mathcal{X}, \mathcal{Y}_1, \ldots, \mathcal{Y}_k$ of $\mathbb{R}^d, \mathbb{R}^{d_1}, \ldots, \mathbb{R}^{d_k}$ respectively such that for each $r$, $\mx^\top\mLam_r\my_r \in [0,1]$ for all $\mx \in \mathcal{X}$ and $\my_r \in \mathcal{Y}_r$.

Fix a joint distribution $\mathcal{F}$ supported on $\mathcal{X}^n \times \mathcal{Y}_1^{n_1} \times \cdots \times \mathcal{Y}_k^{n_k}$ for positive integers $n, n_1, \ldots, n_k$ and let $(\mX_1, \ldots, \mX_n,\mY^{(1)}_1, \ldots, \mY^{(k)}_{n_k}) \sim \mathcal{F}$. Define $\mX = [\mX_1 | \cdots | \mX_n]^\top \in \mathbb{R}^{n \times d}$ and $\mY = \mY^{(1)} \oplus \cdots \oplus \mY^{(k)}$, where each matrix $\mY^{(r)} = [\mY^{(r)}_1 | \cdots | \mY^{(r)}_{n_r}]^\top \in \mathbb{R}^{n_r \times d_r}$. Finally, define matrices $\mP^{(r)} = \mX\mLam_r{\mY^{(r)}}^\top \in [0,1]^{n \times n_r}$ for each $r$, and define the \textit{unfolding} $\mP = [\mP^{(1)}|\cdots|\mP^{(k)}]$ (which we note satisfies $\mP = \mX\mLam\mY^\top$).

Given a set of matrices $\mA^{(1)}, \ldots, \mA^{(k)}$ with each $\mA^{(r)} \in \{0,1\}^{n \times n_r}$, we similarly define the unfolding $\mA = [\mA^{(1)}|\cdots|\mA^{(k)}]$. We say that $(\mA,\mX,\mY) \sim \mathrm{MRDPG}(\mathcal{F},\mLam)$ if each matrix $\mA^{(r)}$ satisfies one of the following: 

\begin{itemize} 
	\item If, with probability one, there exists a matrix $\mG_r \in \mathbb{R}^{d \times d_r}$ such that $\mY^{(r)} = \mX\mG_r$, then conditional on $\mX$ and $\mY^{(r)}$, either:
	\begin{itemize}
		\item $\mA^{(r)}$ is hollow and symmetric, satisfying $\mA_{ij}^{(r)} \sim \mathrm{Bern}\bigl(\mP_{ij}^{(r)}\bigr)$ for all $i > j$, or
		\item $\mA^{(r)}$ is hollow and asymmetric, satisfying $\mA_{ij}^{(r)} \sim \mathrm{Bern}\bigl(\mP_{ij}^{(r)}\bigr)$ for all $i \neq j$.
	\end{itemize}
	\item If, for all matrices $\mG_r \in \mathbb{R}^{d \times d_r}$, the probability that $\mY^{(r)} = \mX\mG_r$ is strictly less than one then, conditional on $\mX$ and $\mY^{(r)}$, $\mA_{ij}^{(r)} \sim \mathrm{Bern}\bigl(\mP_{ij}^{(r)}\bigr)$ for all $i$ and $j$;
\end{itemize}
\end{definition}

We say that the graph corresponding to the matrix $\mA^{(r)}$ is \textit{``undirected''}, \textit{``directed''} or \textit{``bipartite''} to distinguish between these cases, and allow a mixture of these cases among the matrices $\mA^{(1)}, \ldots, \mA^{(k)}$ (we note that for the sake of brevity---particularly within the proofs located in the Appendix---we will occasionally abuse our terminology and refer to the \textit{matrices} $\mA^{(r)}$ using these terms).

The distributional results that we will obtain later can be refined if we restrict our attention to the case in which the matrices $\mA^{(1)}, \ldots, \mA^{(k)}$ are identically distributed (corresponding, say, to a multiple graph embedding in which we expect identical behaviour across the graphs). To this end, for a fixed matrix $\mLam \in \mathbb{R}^{d \times d'}$ of rank $d$ and a distribution $\mathcal{F}$, we write $(\mA,\mX,\mY) \stackrel{\mathrm{id}}{\sim} \mathrm{MRDPG}(\mathcal{F},\mLam)$ if, under the distribution $\mathcal{F}$, all of the matrices $\mY^{(r)}$ are equal with probability one to $\mY \in \mathbb{R}^{n' \times d'}$, and the matrices $\mLam_r$ are all equal to $\mLam$.

We note that there is a degree of ambiguity in the choice of latent positions for the MRDPG, as the following result shows:

\begin{proposition}\label{latent_positions} Let $(\mA,\mX, \mY) \sim \mathrm{MRDPG}(\mathcal{F},\mLam)$. Then:
\begin{enumerate}[label=\normalfont(\roman*)]
	\item $(\mA,\mX\mG^\top,\mY\mH^\top) \sim \mathrm{MRDPG}(\mathcal{F}_{\mG,\mH},\mG^{-\top} \mLam \mH^{-1})$ for any matrices $\mG \in \mathrm{GL}(d)$ and $\mH = \mH_1 \oplus \cdots \oplus \mH_k \in \mathrm{GL}(d_1) \times \cdots \times \mathrm{GL}(d_k)$, where the distribution $\mathcal{F}_{\mG,\mH}$ is derived from $\mathcal{F}$ by multiplying elements of $\mathcal{X}$ by $\mG$ and elements of $\mathcal{Y}_r$ by $\mH_r$ for each $r$.
	\item There is a joint distribution $\widetilde{F}$ and matrices of latent positions $\widetilde{\mX}$ and $\widetilde{\mY}$ where each vector $\widetilde{\mX}_i , \widetilde{\mY}^{(r)}_j \in \mathbb{R}^d$ such that $(\mA,\widetilde{\mX},\widetilde{\mY}) \sim \mathrm{MRDPG}(\widetilde{F},\mI_{d,k})$, where $\mI_{d,k} = [\mI_d | \cdots | \mI_d]$.
\end{enumerate}
\end{proposition}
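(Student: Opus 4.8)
The plan is to prove both statements by direct substitution into Definition~\ref{def:MRDPG}: in each case one writes down the candidate reparametrisation, checks that the matrix of connection probabilities (hence the conditional law of $\mA$) is unchanged, and then verifies that the rank, boundedness, kernel-range and support conditions are inherited, along with the classification of each graph as undirected, directed or bipartite.

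For part~(i), let $\widetilde{\mP}^{(r)}$ denote the $r$th block of the unfolding associated with $(\mA,\mX\mG^\top,\mY\mH^\top)$ under the matrix $\mG^{-\top}\mLam\mH^{-1}$. Since $\mH=\mH_1\oplus\cdots\oplus\mH_k$, this block equals $(\mX\mG^\top)(\mG^{-\top}\mLam_r\mH_r^{-1})(\mY^{(r)}\mH_r^\top)^\top=\mX\mLam_r\mY^{(r)\top}=\mP^{(r)}$. One then checks that $\mG^{-\top}\mLam\mH^{-1}$ has rank $d$ (invertibility of $\mG$ and of each $\mH_r$); that $\mG\mathcal{X}$ and $\mH_r\mathcal{Y}_r$ are bounded with $(\mG\mx)^\top(\mG^{-\top}\mLam_r\mH_r^{-1})(\mH_r\my_r)=\mx^\top\mLam_r\my_r\in[0,1]$; and that $\mathcal{F}_{\mG,\mH}$ is, by construction, supported on the corresponding product of transformed sets. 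Finally, since $\mH_r$ is invertible, $\mY^{(r)}=\mX\mG_r$ holds with probability one for some fixed $\mG_r$ if and only if $\mY^{(r)}\mH_r^\top=(\mX\mG^\top)(\mG^{-\top}\mG_r\mH_r^\top)$ does, so the dichotomy in Definition~\ref{def:MRDPG} selects the same case for $\mA^{(r)}$ before and after the transformation, while hollowness and symmetry of $\mA^{(r)}$ are intrinsic to $\mA^{(r)}$.

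For part~(ii), I would take $\widetilde{\mX}=\mX$, $\widetilde{\mY}^{(r)}=\mY^{(r)}\mLam_r^\top\in\mathbb{R}^{n_r\times d}$ (so that $\widetilde{\mY}^{(r)}_j=\mLam_r\mY^{(r)}_j\in\mathbb{R}^d$), and $\widetilde{F}$ the law of $(\mX_1,\dots,\mX_n,\mLam_1\mY^{(1)}_1,\dots,\mLam_k\mY^{(k)}_{n_k})$. This is not an instance of part~(i): it changes the dimension of the second family of latent positions, which is precisely what allows $\mLam_r$ to be absorbed into the $\mY^{(r)}$ and replaced by $\mI_d$. With these choices $\widetilde{\mX}\,\mI_d\,\widetilde{\mY}^{(r)\top}=\mX\mLam_r\mY^{(r)\top}=\mP^{(r)}$, so the conditional law of $\mA$ is again unchanged; $\mI_{d,k}$ has rank $d$; the sets $\mathcal{X}$ and $\mLam_r\mathcal{Y}_r$ are bounded with $\widetilde{\mx}^\top\mI_d\widetilde{\my}_r=\mx^\top\mLam_r\my_r\in[0,1]$; and $\widetilde{F}$ has the stated support.

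The step I expect to be the main obstacle, in part~(ii), is again showing that the undirected/directed/bipartite classification is respected. For an undirected or directed graph the original relation $\mY^{(r)}=\mX\mG_r$ (with probability one) yields at once $\widetilde{\mY}^{(r)}=\widetilde{\mX}(\mG_r\mLam_r^\top)$ with $\mG_r\mLam_r^\top\in\mathbb{R}^{d\times d}$, while hollowness and symmetry of $\mA^{(r)}$ are unchanged. For a bipartite graph one needs the converse: that no fixed $\widetilde{\mG}_r\in\mathbb{R}^{d\times d}$ satisfies $\mY^{(r)}\mLam_r^\top=\mX\widetilde{\mG}_r$ with probability one. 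This is immediate when $n\neq n_r$, and, when $\mLam_r$ has full column rank, follows from the original bipartite hypothesis by right-inverting $\mLam_r^\top$; the residual case (where $n=n_r$ and $\mLam_r$ is column-rank deficient, so that passing from $\mY^{(r)}$ to $\mY^{(r)}\mLam_r^\top$ can erase the randomness that kept the original graph bipartite) is the one requiring a dedicated argument. Granting this, the identity $\widetilde{\mP}^{(r)}=\mP^{(r)}$ together with the unchanged hollowness/symmetry of each $\mA^{(r)}$ completes the verification against Definition~\ref{def:MRDPG}.
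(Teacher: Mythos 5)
Your proof of part (i) is correct and is essentially the paper's argument spelled out: the paper's one-line justification (whose displayed identity should read $(\mX\mG^\top)(\mG^{-\top}\mLam\mH^{-1})(\mY\mH^\top)^\top = \mX\mLam\mY^\top = \mP$, not $=\mLam$) is exactly your observation that the block probability matrices, and hence the conditional law of $\mA$, are unchanged; your additional checks of rank, boundedness, support and the preservation of the undirected/directed/bipartite dichotomy under invertible $\mG$, $\mH_r$ are the routine verifications the paper omits. For part (ii) you take a genuinely different construction: you keep $\widetilde{\mX}=\mX$ and absorb $\mLam_r$ entirely into the right-hand positions via $\widetilde{\mY}^{(r)}=\mY^{(r)}\mLam_r^\top$, whereas the paper writes $\mLam=\mU\mSig\mV^\top$ and splits the weight symmetrically, setting $\widetilde{\mX}=\mX\mU\mSig^{1/2}$ and $\widetilde{\mY}=\mY\mV\mSig^{1/2}$. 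Both choices give $\widetilde{\mX}\widetilde{\mY}^{(r)\top}=\mP^{(r)}$, and both collapse the rank of $\widetilde{\mY}^{(r)}$ to $\mathrm{rank}(\mLam_r)$ — indeed $\mathrm{colspace}(\mV_r\mSig^{1/2})=\mathrm{colspace}(\mLam_r^\top)$, so the two constructions produce the same column space for $\widetilde{\mY}^{(r)}$ and the paper's remark about ``losing'' information applies equally to yours. The paper's version buys a symmetric normalisation that it reuses when instantiating the GMSBM in Section \ref{sec:GMSBM}; yours is more elementary and avoids the square root.

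The obstacle you isolate at the end of part (ii) is genuine and is not addressed by the paper's proof at all. If $n_r=n$ and $\mLam_r$ is column-rank deficient, a layer that is bipartite in the original parametrisation (say one column of $\mY^{(r)}$ lies in $\mathrm{colspace}(\mX)$ almost surely while another, annihilated by $\mLam_r^\top$, does not) can satisfy $\widetilde{\mY}^{(r)}=\widetilde{\mX}\widetilde{\mG}_r$ almost surely after the transformation, at which point Definition \ref{def:MRDPG} would demand that $\mA^{(r)}$ be hollow, which it is not; the transformed triple then fails to be an MRDPG. Because the relevant column spaces coincide, the identical degeneracy afflicts the paper's SVD construction, so this is a gap in the proposition as literally stated rather than a defect introduced by your choice of $\widetilde{\mY}^{(r)}$. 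The honest resolution is either to exclude this degenerate configuration or to regard the undirected/directed/bipartite label as attached to each $\mA^{(r)}$ once and for all rather than re-derived from the transformed latent positions; with either reading, your verification is complete.
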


\begin{proof}
\leavevmode
\begin{enumerate}[label=\normalfont(\roman*)]
	\item This follows from the fact that $(\mX\mG^\top)(\mG^{-\top}\mLam\mH^{-1})(\mY\mH^\top)^\top = \mLam$.
	\item Let $\mLam$ admit the singular value decomposition $\mLam = \mU\mSig\mV^\top$ with matrices $\mU \in \mathrm{O}(d)$, $\mSig \in \mathbb{R}^{d \times d}$ and $\mV \in \mathrm{O}((d_1 + \ldots + d_k) \times d)$. Then setting $\widetilde{\mX} = \mX\mU\mSig^{1/2}$ and $\widetilde{\mY} = \mY\mV\mSig^{1/2}$ gives the result. 
\end{enumerate}
\end{proof}

Note that under the second transformation, while $\widetilde{\mX}$ is always of maximal rank, the rank of $\widetilde{\mY}^{(r)}$ is equal to that of $\mLam_r$ and so it is possible to ``lose'' information about the latent positions $\mY^{(r)}$ in some sense by applying such a transformation.

Key to our study of the MRDPG will be the spectral embeddings of the unfolding $\mA$. Unlike the GRDPG, in which one considers a single symmetric adjacency matrix, whose left and right singular vectors therefore coincide, we obtain distinct embeddings by considering each side of $\mA$:

\begin{definition} (Unfolded adjacency spectral embeddings). 
\\Let $(\mA,\mX, \mY) \sim \mathrm{MRDPG}(\mathcal{F},\mLam)$, and let $\mA$ and $\mP$ admit singular value decompositions 
\begin{align}
	\mA = \mU_\mA \mSig_\mA \mV_\mA^\top + \mU_{\mA,\perp} \mSig_{\mA,\perp} \mV_{\mA,\perp}^\top, \quad \mP = \mU_\mP \mSig_\mP \mV_\mP^\top,
\end{align} 
where $\mU_\mA, \mU_\mP \in \mathbb{O}(n \times d)$, $\mV_\mA, \mV_\mP \in \mathbb{O}((n_1 + \ldots + n_k) \times d)$, and $\mSig_\mA, \mSig_\mP \in \mathbb{R}^{d \times d}$ are diagonal containing the largest singular values of $\mA$ and $\mP$ respectively. We then define the following:
\begin{itemize}
	\item The left UASE is the matrix $\mX_\mA \in \mathbb{R}^{n \times d}$ given by $\mX_\mA = \mU_\mA \mSig_\mA^{1/2}$.
	\item For $r \in \{1,\ldots,k\}$, the $r$th right UASE is the matrix $\mY^{(r)}_\mA \in \mathbb{R}^{n_r \times d}$ obtained by dividing $\mY_\mA = \mV_\mA\mSig_\mA^{1/2}$ into $k$ blocks of sizes $n_1 \times d, \ldots, n_k \times d$ (equivalently, $\mY^{(r)}_\mA = \mV^{(r)}_\mA\mSig_\mA^{1/2}$, where we divide $\mV_\mA$ into $k$ blocks $\mV^{(1)}_\mA, \ldots, \mV^{(k)}_\mA$). 
\end{itemize}
\end{definition}

We define the matrices $\mX_\mP$, $\mY_\mP$ and $\mY^{(r)}_\mP$ analogously.

The reader with a passing knowledge of tensor theory may wish to draw parallels between our notion of an unfolding and that of the matrix unfoldings of a tensor. For the uninitiated, any $3$-tensor $\mathcal{M} \in \mathbb{R}^{n_1 \times n_2 \times n_3}$ can be represented as a matrix in one of $3$ standard ways, each of which is known as an unfolding of $\mathcal{M}$ (a precise description of these can be found in \cite{LMV00}). The unfoldings of a tensor provide a more accessible means for studying its properties; in particular, there is a notion of a singular value decomposition for tensors (see \cite{LMV00},Theorem 2) in which the ``higher order'' analogues of singular values and vectors correspond to the standard matrix singular values and vectors of the unfoldings.

In the special case in which the matrices of latent positions $\mY^{(r)}$ are equal with probability one (and so we may consider the $\mA^{(r)}$ to be adjacency submatrices for some fixed subset of nodes across a series of graphs) then the matrices $\mA^{(r)}$ give rise to a $3$-tensor $\mathcal{A}$ by setting $\mathcal{A}_{ijr} = \mA^{(r)}_{ij}$. In this case, our notion of the unfolding of the matrices $\mA^{(r)}$ is a column permutations of the first standard unfolding of the tensor $\mathcal{A}$, which therefore shares the same left singular values and vectors, and is the natural choice to consider to allow us extend to the general case in which the matrices $\mY^{(r)}$ differ (the second standard unfolding corresponds to the matrix $\bigl[{\mA^{(1)}}^\top|\cdots|{\mA^{(k)}}^\top\bigr]$, while the third is the matrix whose $(i,j)$th entry is the Frobenius inner product $\langle \mA^{(i)},\mA^{(j)}\rangle_F$).

\subsection{Asymptotics and sparsity}\label{subsec:asymptotics}
In our asymptotic analysis of the behaviour of the UASE, we will assume that the number of graphs $k$ is either fixed, or at most grows at a rate much slower than $n$, in the sense that $\lim_{k,n \to \infty} \frac{k}{n} = 0$. We will also assume that the number of latent positions $\mY^{(r)}_i$ grows at a comparable rate to the number of latent positions $\mX_i$, in the sense that for each $r \in \{1,\ldots,k\}$, there exists a positive constant $c_r$ such that $\lim_{n_r,n \to \infty} \frac{n_r}{n} = c_r$.

Before proceeding further, we establish some (standard) notation relating to the asymptotic growth of various functions. In the following, $f$ and $g$ are real-valued functions of $n, n_1, \ldots, n_k$, and $X$ and $Y$ are real-valued random variables. We say that: 
\begin{itemize} 
	\item $f = \Omega(g)$ if there is a constant $c > 0$ and integers $N, N_1, \ldots, N_k$ such that for all $n \geq N$ and $n_r \geq N_r$, $f(n,n_1,\ldots,n_k) \geq cg(n,n_1,\ldots,n_k)$;
	\item $f = \mathrm{O}(g)$ if there is a constant $c > 0$ and integers $N, N_1, \ldots, N_k$ such that for all $n \geq N$ and $n_r \geq N_r$, $f(n,n_1,\ldots,n_k) \leq cg(n,n_1,\ldots,n_k)$;
	\item $f = \Theta(g)$ if both $f = \Omega(g)$ and $f = \mathrm{O}(g)$;
	\item $f = \omega(g)$ if there is a constant $c > 0$ and integers $N, N_1, \ldots, N_k$ such that for all $n \geq N$ and $n_r \geq N_r$, $f(n,n_1,\ldots,n_k) \geq cg(n,n_1,\ldots,n_k)$ and $\lim_{n,n_1,\ldots,n_k \to \infty} \left|\frac{f(n,n_1,\ldots,n_k)}{g(n,n_1,\ldots,n_k)}\right| = \infty$;
	\item $|X| = \mathrm{O}(f)$ \textit{almost surely} if, for any $\alpha > 0$, there is a constant $c > 0$ and integers $N, N_1, \ldots, N_k$ such that for all $n \geq N$ and $n_r \geq N_r$, $|X| \leq cf(n,n_1,\ldots,n_k)$ with probability at least $1 - n^{-\alpha}$;
	\item $|X| = \mathrm{O}(f)$ and $|Y| = \mathrm{O}(g)$ \textit{mutually almost surely} if, for any $\alpha > 0$, there is a constant $c > 0$ and integers $N, N_1, \ldots, N_k$ such that for all $n \geq N$ and $n_r \geq N_r$, the probability that both $|X| \leq cf(n,n_1,\ldots,n_k)$ and $|Y| \leq cg(n,n_1,\ldots,n_k)$ is at least $1 - n^{-\alpha}$. 
\end{itemize}

The inter- and intra-dependence of $\mX$ and the $\mY^{(r)}$ under the joint distribution $\mathcal{F}$ in our definition of the MRDPG has so far been left open. In order to establish asymptotic distributional results for the UASE, however, we must impose certain restrictions on $\mathcal{F}$. The first of these is that:

\begin{itemize}
	\item Each of the collections $(\mX_1, \ldots, \mX_n)$ and $(\mY^{(r)}_1, \ldots, \mY^{(r)}_{n_r})$ is marginally i.i.d., with marginal distributions $\mathcal{F}_X$ on $\mathcal{X}$ and $\mathcal{F}_{Y,r}$ on $\mathcal{Y}_r$ for each $r$.
\end{itemize}

Given such a joint distribution $\mathcal{F}$, let $\xi \sim \mathcal{F}_X$ and $\upsilon_r \sim \mathcal{F}_{Y,r}$ for each $r$. Our next requirement is that these marginal distributions be non-degenerate, in the sense that:

\begin{itemize}
	\item The second moment matrices $\Delta_X = \mathbb{E}[\xi\xi^\top] \in \mathbb{R}^{d\times d}$ and $\Delta_{Y,r} = \mathbb{E}[\upsilon_r\upsilon_r^\top] \in \mathbb{R}^{d_r \times d_r}$ for each $r$ are all invertible.
\end{itemize}

Our third and final requirement is a technical condition which ensures that the growth of the singular values of the unfolding $\mP$ is regulated. We note first that a standard application of Hoeffding's inequality shows that the spectral norm $\|\mX^\top\mX - n\Delta_X\|$ is of order $\mathrm{O}\left(n^{1/2}\log(n)\right)$ almost surely, and one can argue similarly that the spectral norms $\|{\mY^{(r)}}^\top\mY^{(r)} - n_r\Delta_{Y,r}\|$ are of order $\mathrm{O}(n_r^{1/2}\log(n_r))$ almost surely. Our final requirement is that these bounds are attained simultaneously with high probability, or in other words that:

\begin{itemize} 
	\item The matrices $\mX, \mY^{(1)}, \ldots, \mY^{(k)}$ satisfy $\|\mX^\top\mX - n\Delta_X\| = \mathrm{O}(n^{1/2}\log(n))$ and, for each $r$, $\|{\mY^{(r)}}^\top\mY^{(r)} - n_r\Delta_{Y,r}\| = \mathrm{O}(n_r^{1/2}\log(n_r))$ mutually almost surely.
\end{itemize}

This condition is satisfied, for example, when $\mX$ and the $\mY^{(r)}$ are independent, or when some or all of the $\mY^{(r)}$ are equal (possibly to $\mX$) with probability one. Due to submultiplicativity of the spectral norm, it also holds if we extend the latter example to allow equality up to linear transformation.

When studying the asymptotic behaviour of graph embeddings, it is typical to impose some control over how sparse or dense the graphs we are considering are allowed to be. This is often done through the introduction of a \textit{sparsity factor}, which is a real-valued function (depending on the size of the graph) which is either equal to $1$, or which tends to zero as the size of the graph increases (corresponding to dense and sparse regimes respectively). This factor can then be incorporated into the model \textit{either} by scaling the latent positions themselves, \textit{or} by scaling the kernel function. In the case of a single graph which follows a GRDPG, these two options are equivalent (and for example in \cite{RPTC18} the sparsity factor is used to scale the latent positions $\mX_i$). For multiple graphs, however, scaling the latent positions produces the same sparsity behaviour for \textit{all} graphs, and does not allow individual control over the density of each graph.

Our approach is to incorporate both a \textit{global} sparsity function $\rho$ and \textit{local} sparsity functions $\epsilon_r$ into our model, where $\rho$ applies a uniform scaling to the latent positions $\mX$ and $\mY^{(r)}$ and $\epsilon_r$ an individual scaling to the matrix $\mLam_r$. Given a joint distribution $\mathcal{F}$ satisfying the above conditions, let $(\xi_1, \ldots, \xi_n, \upsilon^{(1)}_1, \ldots, \upsilon^{(k)}_{n_k}) \sim \mathcal{F}$, and let $\rho, \epsilon_1, \ldots, \epsilon_k: \mathbb{Z}_+ \rightarrow [0,1]$, where each such function is either constant (and equal to $1$) or tends to zero as $n$ (and consequently the $n_r$) tend to infinity. Given matrices $\mLam_1, \ldots, \mLam_k$, we define $\mLam_{\epsilon,r} = \epsilon_r\mLam_r$ for each $r$ and $\mLam_\epsilon = [\mLam_{\epsilon,1} | \ldots |\mLam_{\epsilon,k}]$, and set $\epsilon = (\epsilon_1^2 + \ldots + \epsilon_k^2)^{1/2}$. Note that by submultiplicativity of the spectral norm, we have $\|\mLam_\epsilon\| \leq \epsilon \|\mLam\|$, and that we have the upper bound $\epsilon \leq k^{1/2}$. 

Uniform scaling is then overlaid onto the model by defining latent positions $\mX_i = \rho^{1/2}\xi_i$ and $\mY^{(r)}_j = \rho^{1/2}\upsilon^{(r)}_j$. We denote by $\mathcal{F}_\rho$ this scaled distribution.

Throughout this paper, we will impose certain restrictions on the sparsity factors which are necessary for our results to hold. If \textit{all} the graphs are too sparsely connected, then the UASE is unable to give us any useful information about the latent positions $\mX_i$ and $\mY^{(r)}_j$, and so we impose the global condition that:

\begin{itemize}
	\item The sparsity factor $\rho$ satisfies $\rho = \omega\left(\frac{\log^c(n)}{n^{1/2}}\right)$ for some constant $c$.
\end{itemize}

As mentioned previously, we require a sufficient number of graphs to be dense enough for us to be able to recover the latent positions $\mX_i$; if the only sufficiently dense graphs are degenerate, then it is entirely possible that we will be unable to do so. To avoid this, we impose the following local conditions (where we assume that we have reordered the matrices $\mA^{(r)}$ appropriately): 
\begin{itemize}
	\item There exists an integer $1 \leq \kappa \leq k$ such that $\epsilon_r = 1$ for all $r \leq \kappa$ and $\epsilon_r \to 0$ for all $r > \kappa$, and that the matrix $\mLam_* = [\mLam_1|\ldots|\mLam_\kappa]$ is of rank $d$.
\end{itemize}

This condition guarantees the existence of a sufficiently dense subset of graphs that will allow us to recover the latent positions $\mX_i$ (although not necessarily the positions $\mY^{(r)}_j$). If each $\mLam_r$ is of rank $d$, then having $\kappa = 1$ is sufficient for our purposes.

\subsection{Theoretical results}

The main aim of this paper is to accurately describe the asymptotic behaviour of both sides of the UASE, which we do by establishing two key results. The first of these shows that the rows of the left embedding approximate some invertible linear transformation (of bounded spectral norm) of the latent positions $\mX_i$, and that by inverting this transformation we obtain a good approximation to the latent positions themselves, in the sense that the maximum error vanishes. Similarly, we show that the rows of the $r$th right embedding approximate some (not necessarily invertible) linear transformation of the latent positions $\mY^{(r)}_i$, and that under appropriate conditions we can invert this transformation to obtain a good approximation to the latent positions themselves. This result is stated using the two-to-infinity norm \citep{CTP17} of the associated error matrix, which is the maximum Euclidean norm of any of its rows.

\begin{theorem}[Two-to-infinity norm bound for the UASE]\label{consistency}
\leavevmode\\
Let $(\mA,\mX, \mY) \sim \mathrm{MRDPG}(\mathcal{F}_\rho,\mLam_\epsilon)$ for a distribution $\mathcal{F}$ and sparsity factors $\rho$ and $\epsilon_r$ satisfying the criteria stated in Section \ref{subsec:asymptotics}. Then there exist sequences of matrices $\mL = \mL(n,n_1,\ldots,n_k) \in \mathrm{GL}(d)$ and $\mR_r = \mR_r(n,n_1,\ldots,n_k) \in \mathbb{R}^{d_r \times d}$ for each $r$ satisfying $\mL\mR_r^\top = \mLam_{\epsilon,r}$ such that
\begin{align}
	\|\mX_\mA - \mX\mL\|_{2 \to \infty} = \mathrm{O}\Bigl(\tfrac{\log^{1/2}(n)}{\rho^{1/2}n^{1/2}}\Bigr),~\|\mY^{(r)}_\mA - \mY^{(r)}\mR_r\|_{2 \to \infty} = \mathrm{O}\Bigl(\tfrac{\log^{1/2}(n)}{\rho^{1/2}n^{1/2}}\Bigr)
\end{align} 
almost surely. Moreover, we may invert the matrices $\mL$ and (if $\mLam_r$ is of maximal rank) $\mR_r$, and consequently find that 
\begin{align}
	\|\mX_\mA\mL^{-1} - \mX\|_{2 \to \infty} = \mathrm{O}\Bigl(\tfrac{\epsilon^{1/2}\log^{1/2}(n)}{\rho^{1/2}n^{1/2}}\Bigr),~ \|\mY^{(r)}_\mA\mR_r^+ - \mY^{(r)}\|_{2 \to \infty} = \mathrm{O}\Bigl(\tfrac{\log^{1/2}(n)}{\epsilon_r^{1/2}\rho^{1/2}n^{1/2}}\Bigr)
\end{align} 
almost surely, where $\mR_r^+ = \mR_r^\top(\mR_r\mR_r^\top)^{-1}$ is the Moore-Penrose inverse of $\mR_r$.
\end{theorem}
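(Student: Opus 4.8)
The plan is to follow the by-now-standard route for spectral-embedding consistency results (as in \cite{AFTPPVLLQ17,RPTC18,CTP17}), adapted to the rectangular, multi-block unfolding and to the possibility that individual $\mLam_r$ are rank-deficient. First I would establish the concentration bound $\|\mA - \mP\| = \mathrm{O}(\rho^{1/2}n^{1/2})$ up to logarithmic (and mild $k$- and $\epsilon$-dependent) factors, almost surely in the paper's sense. Since the entries of $\mA - \mP$ are, up to the symmetry and hollowness constraints within undirected blocks, independent, centred and bounded, with variances $\mP^{(r)}_{ij}(1-\mP^{(r)}_{ij}) = \mathrm{O}(\rho\epsilon_r)$, this follows from a matrix-Bernstein / rectangular-random-matrix argument applied blockwise, combined with the concatenation inequality $\|[\mB_1|\cdots|\mB_k]\| \le (\sum_r\|\mB_r\|^2)^{1/2}$; the global sparsity hypothesis $\rho = \omega(\log^c(n)/n^{1/2})$ is exactly what converts the resulting exponential tail bound into a statement holding with probability at least $1 - n^{-\alpha}$ for every $\alpha$.

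Next I would analyse the population unfolding $\mP = \mX\mLam_\epsilon\mY^\top$. Using the Gram-matrix hypotheses, $\mX^\top\mX = \rho n\Delta_X + \mathrm{O}(\rho n^{1/2}\log n)$ and ${\mY^{(r)}}^\top\mY^{(r)} = \rho n_r\Delta_{Y,r} + \mathrm{O}(\rho n_r^{1/2}\log n_r)$, together with invertibility of the $\Delta$'s, the local condition $\mathrm{rank}(\mLam_*) = d$, and $n_r = \Theta(n)$, I would show that the $d$ retained singular values of $\mP$ satisfy $\sigma_d(\mP) = \Theta(\rho n)$ and $\sigma_1(\mP) = \mathrm{O}(\epsilon\rho n)$; that $\mathrm{col}(\mP) = \mathrm{col}(\mX)$ and $\mathrm{col}({\mP^{(r)}}^\top) \subseteq \mathrm{col}(\mY^{(r)})$, so that $\mU_\mP = \mX\mM$ with $\mM \in \mathrm{GL}(d)$ and $\mV^{(r)}_\mP = \mY^{(r)}\mN_r$ with $\mN_r \in \mathbb{R}^{d_r \times d}$; and, from boundedness of $\mathcal{X}$ and the $\mathcal{Y}_r$, the incoherence bounds $\|\mU_\mP\|_{2\to\infty} = \mathrm{O}(n^{-1/2})$ and $\|\mV^{(r)}_\mP\|_{2\to\infty} = \mathrm{O}(n_r^{-1/2})$. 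Setting $\mL = \mM\mSig_\mP^{1/2}$ and $\mR_r = \mN_r\mSig_\mP^{1/2}$ gives $\mX\mL = \mU_\mP\mSig_\mP^{1/2} =: \mX_\mP$ and $\mY^{(r)}\mR_r = \mV^{(r)}_\mP\mSig_\mP^{1/2} =: \mY^{(r)}_\mP$; comparing $\mP^{(r)} = \mX_\mP{\mY^{(r)}_\mP}^\top = \mX\mLam_{\epsilon,r}{\mY^{(r)}}^\top$ and cancelling the full-column-rank factors $\mX$ and $\mY^{(r)}$ yields $\mL\mR_r^\top = \mLam_{\epsilon,r}$, and the singular-value estimates give $\|\mL\|, \|\mL^{-1}\|$ and (when $\mLam_r$ has maximal rank) $\|\mR_r^+\|$ of the orders needed to pass from the first pair of bounds to the ``inverted'' pair.

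The heart of the argument, and the step I expect to be the main obstacle, is the two-to-infinity control of the singular-subspace perturbation: with $\mW \in \mathrm{O}(d)$ the alignment rotation from the SVD of $\mU_\mA^\top\mU_\mP$, I need $\|\mU_\mA\mW - \mU_\mP\|_{2\to\infty}$ of order roughly $\epsilon^{-1/2}\log^{1/2}(n)/(\rho n)$ (and the analogue for $\mV_\mA$, plus control of $\|\mSig_\mA^{1/2} - \mW^\top\mSig_\mP^{1/2}\mW\|$). Wedin's theorem supplies only the spectral-norm bound $\|\mU_\mA\mW - \mU_\mP\| = \mathrm{O}(\|\mA - \mP\|/\sigma_d(\mP))$, so the entrywise statement requires either invoking a general entrywise eigenvector-perturbation theorem in the style of \cite{CTP17} and verifying its hypotheses (incoherence, eigengap, concentration) in the present setting, or a bespoke expansion $\mU_\mA\mW - \mU_\mP = (\mA - \mP)\mV_\mP\mSig_\mP^{-1} + \text{(higher-order remainders)}$ whose leading term is handled by a Bernstein bound on $\|(\mA - \mP)\mV_\mP\|_{2\to\infty}$ with a union bound over the $n$ rows, and whose remainders are controlled after a leave-one-out decoupling of $\mA$ from $\mV_\mA$. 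Relative to the single-graph GRDPG case, the new difficulties are: (i) the left and right subspaces are coupled through $\mSig_\mA$, forcing the two-to-infinity bounds for $\mU_\mA$ and $\mV_\mA$ to be established simultaneously (hence the ``mutually almost surely'' phrasing of the hypotheses); (ii) individual $\mLam_r$ may be rank-deficient, which must not be allowed to degrade $\sigma_d(\mP)$ and is accommodated on the $\mY$-side via the Moore--Penrose inverse; and (iii) $k$ may grow, albeit with $k/n \to 0$, which has to be tracked through the blockwise estimates.

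Finally, with these pieces in hand I would assemble the bounds: write $\mX_\mA - \mX\mL = \mU_\mA\mSig_\mA^{1/2} - \mU_\mP\mSig_\mP^{1/2}$ as $(\mU_\mA\mW - \mU_\mP)\mSig_\mP^{1/2} + \mU_\mP(\mW^\top\mSig_\mA^{1/2}\mW - \mSig_\mP^{1/2}) + \text{(lower-order terms)}$, take two-to-infinity norms using $\|\mSig_\mP^{1/2}\| = \mathrm{O}((\epsilon\rho n)^{1/2})$, the incoherence bound on $\mU_\mP$, and $\|\mSig_\mA^{1/2} - \mW^\top\mSig_\mP^{1/2}\mW\| = \mathrm{O}(1)$ (from $|\sigma_i(\mA) - \sigma_i(\mP)| \le \|\mA-\mP\|$ and $\sigma_i(\mP) = \Theta(\rho n)$), giving $\|\mX_\mA - \mX\mL\|_{2\to\infty} = \mathrm{O}(\log^{1/2}(n)/(\rho^{1/2}n^{1/2}))$; the same computation on each block gives the bound for $\mY^{(r)}_\mA - \mY^{(r)}\mR_r$. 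Multiplying through by $\mL^{-1}$ and by $\mR_r^+$ (using the spectral-norm bounds from the second paragraph, and submultiplicativity of $\|\cdot\|_{2\to\infty}$ under right multiplication) then yields the two inverted bounds, completing the proof.
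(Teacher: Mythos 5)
Your plan follows the same overall route as the paper's proof: a matrix-Bernstein bound on $\|\mA-\mP\|$, the population analysis giving $\sigma_d(\mP)=\Theta(\rho n)$, $\sigma_1(\mP)=\mathrm{O}(\epsilon\rho n)$, the factorisations $\mU_\mP = \mX\mM$, $\mV_\mP^{(r)} = \mY^{(r)}\mN_r$ with the attendant incoherence and inverse-norm bounds, and then the expansion of $\mU_\mA\mW-\mU_\mP$ with leading term $(\mA-\mP)\mV_\mP\mSig_\mP^{-1}$ handled by Hoeffding plus a union bound over rows. The paper differs from you only in how it controls the remainder of that expansion: rather than a leave-one-out decoupling of $\mA$ from $\mV_\mA$, it shows the rows of $(\mA-\mP)(\mI-\mV_\mP\mV_\mP^\top)\mV_\mA\mV_\mA^\top$ are exchangeable, so that $\mathbb{E}\|\mM\|_F^2 = n\,\mathbb{E}\|\mM_i\|^2$, and then applies Markov's inequality; this is more elementary than leave-one-out, at the price of a weaker (but sufficient) $n^{-1/4}$-type probability bound for that one term.

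There is, however, one concrete gap in the plan as written: the alignment. You define $\mL = \mM\mSig_\mP^{1/2}$ and $\mR_r = \mN_r\mSig_\mP^{1/2}$, i.e.\ $\mX\mL = \mX_\mP$ with no orthogonal factor, and then separately introduce $\mW$ from the SVD of $\mU_\mA^\top\mU_\mP$ (with ``the analogue for $\mV_\mA$''). With that definition the first display is false -- $\|\mX_\mA - \mX_\mP\|_{2\to\infty}$ is not small because the singular vectors of $\mA$ are only determined up to an orthogonal transformation of the retained subspace -- so $\mW$ must be folded into $\mL$ (and into $\mR_r$). But if you then use \emph{separate} Procrustes rotations $\mW_\mU$ and $\mW_\mV$ for the two sides, the required identity $\mL\mR_r^\top = \mLam_{\epsilon,r}$ breaks, since $\widetilde{\mL}\mW_\mU\mW_\mV^\top\widetilde{\mR}_r^\top \neq \widetilde{\mL}\widetilde{\mR}_r^\top$ in general. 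The paper resolves this by taking a \emph{single} $\mW$ from the SVD of $\mU_\mP^\top\mU_\mA + \mV_\mP^\top\mV_\mA$, the minimiser of the joint Procrustes objective $\|\mU_\mA-\mU_\mP\mQ\|_F^2+\|\mV_\mA-\mV_\mP\mQ\|_F^2$, and proves that this one orthogonal matrix is simultaneously close to $\mU_\mP^\top\mU_\mA$ and to $\mV_\mP^\top\mV_\mA$ (this step in turn rests on the bound $\|\mU_\mP^\top\mU_\mA - \mV_\mP^\top\mV_\mA\|_F = \mathrm{O}(\epsilon^2k^{1/2}\log(n)/\rho n)$, derived from $\|\mU_\mP^\top(\mA-\mP)\mV_\mP\|_F = \mathrm{O}(\log^{1/2}(n))$). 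Relatedly, your claim that $\|\mSig_\mA^{1/2}-\mW^\top\mSig_\mP^{1/2}\mW\|=\mathrm{O}(1)$ follows from Weyl alone is too quick: Weyl controls only $|\sigma_i(\mA)^{1/2}-\sigma_i(\mP)^{1/2}|$, whereas the off-diagonal entries of $\mW\mSig_\mA^{1/2}-\mSig_\mP^{1/2}\mW$ involve differences of \emph{non-corresponding} singular values and must be controlled through the commutator bounds on $\mW\mSig_\mA-\mSig_\mP\mW$ that the joint Procrustes analysis supplies. Once the shared $\mW$ and these commutator bounds are in place, the rest of your assembly goes through exactly as in the paper.
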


The matrices $\mL$ (and consequently $\mR_r$) can be described explicitly, and are constructed by a two-step process; first using a modified Procrustes-style argument to simultaneously align $\mX_\mA$ with $\mX_\mP$ and $\mY_\mA$ with $\mY_\mP$ via an \textit{orthogonal} transformation, and then applying a second \textit{linear} transformation which maps $\mX_\mP$ directly to $\mX$. For the first step, let $\mU_\mP^\top\mU_\mA + \mV_\mP^\top\mV_\mA$ admit the singular value decomposition \begin{align}
	\mU_\mP^\top\mU_\mA + \mV_\mP^\top\mV_\mA = \mW_1\mSig\mW_2^\top,
\end{align}
and let $\mW = \mW_1\mW_2^\top$. The matrix $\mW$ solves the one mode \textit{orthogonal} Procrustes problem 
\begin{align}
	\mW = \argmin_{\mQ \in \mathbb{O}(d)} \|\mU_\mA - \mU_\mP\mQ\|_F^2 + \|\mV_\mA - \mV_\mP\mQ\|_F^2,
\end{align} 
and we use $\mW^\top$ to align $\mX_\mA$ with $\mX_\mP$.

For the second step, we construct a matrix $\widetilde{\mL}$ which satisfies $\mX_\mP = \mX\widetilde{\mL}$ and whose inverse can be shown to have spectral norm of order $\mathrm{O}(\epsilon^{1/2})$ (see the Appendix for full details). The matrix $\mL$ is then given by $\mL = \widetilde{\mL}\mW$. A similar process is repeated for the right-sided embedding.

The second of our main results centres on the error distribution of the estimates for the latent positions $\mX_i$ and (if $\mLam_r$ is invertible) $\mY^{(r)}_j$ established in Theorem \ref{consistency}, and shows that conditional on the true position it is asymptotically Gaussian:

\begin{theorem}[Central limit theorem for the UASE]\label{CLT} 
\leavevmode\\
Let $(\mA,\mX, \mY) \sim \mathrm{MRDPG}(\mathcal{F}_\rho,\mLam_\epsilon)$ for a distribution $\mathcal{F}$ and sparsity factors $\rho$ and $\epsilon_r$ satisfying the criteria stated in Section \ref{subsec:asymptotics}, and let $\mL$ and $\mR_r$ be the transformation matrices specified in Theorem \ref{consistency}. 

Let $\xi \sim \mathcal{F}_X$ and $\upsilon_r \sim \mathcal{F}_{Y,r}$ for each $r \in \{1,\ldots,\kappa\}$, where $\mathcal{F}_X$ and $\mathcal{F}_{Y,r}$ are the marginal distributions of $\mathcal{F}$, and define $\Delta_Y = c_1\Delta_{Y,1} \oplus \ldots \oplus c_\kappa\Delta_{Y,\kappa}$, where the constants $c_r$ are as stated in Section \ref{subsec:asymptotics}. Given $\mx \in \mathcal{X}$, define 
\begin{align}
	\mSig_Y^{(r)}(\mx) = \left\{\begin{array}{cc}\mathbb{E}[\mx^\top\mLam_r\upsilon_r(1 - \mx^\top\mLam_r\upsilon_r)\cdot\upsilon_r\upsilon_r^\top] & \mathrm{if}~\rho=1\\\mathbb{E}[\mx^\top\mLam_r\upsilon_r\cdot\upsilon_r\upsilon_r^\top] & \mathrm{if}~\rho \to 0\end{array}\right.
\end{align}
for each $r$, and let $\mSig_Y = c_1\mSig_Y^{(1)} \oplus \cdots \oplus c_\kappa\mSig_Y^{(\kappa)}$. Then, for all $\mz \in \mathbb{R}^{d}$ and for any fixed $i$, 
\begin{align}
	\mathbb{P}\Bigl(n^{1/2}\bigl(\mX_\mA\mL^{-1} - \mX\bigr)_i^\top \leq \mz ~|~ \xi_i = \mx\Bigr) \to \Phi\bigl(\mz,\Delta_{\mLam,Y}^{-1}\mLam_*\mSig_Y(\mx)\mLam_*^\top\Delta_{\mLam,Y}^{-1}\bigr)
\end{align} 
almost surely, where $\Delta_{\mLam,Y} = \mLam_*\Delta_Y\mLam_*^\top$.

Moreover, for each $r \in \{1,\ldots,\kappa\}$, if the matrix $\mLam_r$ is invertible, then given $\my \in \mathcal{Y}_r$ define 
\begin{align}
	\mSig_X^{(r)}(\my) = \left\{\begin{array}{cc}\mathbb{E}\left[\xi^\top\mLam_r\my(1 - \xi^\top\mLam_r\my)\cdot\xi\xi^\top\right] & \mathrm{if}~\rho=1\\\mathbb{E}\left[\xi^\top\mLam_r\my \cdot \xi\xi^\top\right] & \mathrm{if}~\rho \to 0\end{array}\right.
\end{align} 
Then, for all $\mz \in \mathbb{R}^{d_r}$ and for any fixed $i$, 
\begin{align}
	\mathbb{P}\Bigl(n^{1/2}(\mY^{(r)}_\mA\mR_r^{-1} - \mY^{(r)})_i^\top \leq \mz ~|~ \upsilon^{(r)}_i = \my\Bigr) \to \Phi\bigl(\mz,\mLam_r^{-1}\Delta_X^{-1}\mSig_X^{(r)}(\my)\Delta_X^{-1}\mLam_r^{-\top}\bigr)
\end{align} 
almost surely.  
\end{theorem}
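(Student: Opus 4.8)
The plan is to reduce both statements to a triangular-array central limit theorem via a first-order perturbation expansion of the singular subspaces, and then to identify the limiting conditional covariances using the law of large numbers.

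First I would set $\mE = \mA - \mP$ and exploit the identities $\mX_\mA = \mA\mV_\mA\mSig_\mA^{-1/2}$, $\mX_\mP = \mP\mV_\mP\mSig_\mP^{-1/2} = \mX\widetilde{\mL}$, and the analogous $\mY^{(r)}_\mA = {\mA^{(r)}}^\top\mU_\mA\mSig_\mA^{-1/2}$, where $\widetilde{\mL}$ and the orthogonal $\mW$ are the matrices built in the discussion following Theorem~\ref{consistency}. Writing $\mX_\mA\mL^{-1} - \mX = (\mX_\mA\mW^\top - \mX_\mP)\widetilde{\mL}^{-1}$, substituting $\mA = \mP + \mE$, and using the simultaneous Procrustes estimates $\mU_\mP^\top\mU_\mA \approx \mW \approx \mV_\mP^\top\mV_\mA$ and $\mSig_\mA \approx \mW^\top\mSig_\mP\mW$ (which follow from a Davis--Kahan-type bound together with the operator-norm control $\|\mE\| = \mathrm{O}((\rho n)^{1/2})$, up to polylogarithmic and $k$ factors, valid since the singular values of $\mP$ are of order $\rho n$ and $\rho = \omega(\log^c(n)\,n^{-1/2})$), one is led to
\begin{align}
	\mX_\mA\mL^{-1} - \mX &= \mE\,\mP^{+}\mX + \mathbf{R}, \\
	\mY^{(r)}_\mA\mR_r^{-1} - \mY^{(r)} &= {\mE^{(r)}}^\top\mX(\mX^\top\mX)^{-1}\mLam_{\epsilon,r}^{-\top} + \mathbf{R}_{(r)},
\end{align}
where $\mE^{(r)} = \mA^{(r)} - \mP^{(r)}$, $\mP^{+} = \mV_\mP\mSig_\mP^{-1}\mU_\mP^\top$ is the Moore--Penrose inverse, and the second line assumes $\mLam_r$ invertible (here one uses $\mU_\mP\mU_\mP^\top\mX = \mX$, $\widetilde{\mL}^{-1} = \mSig_\mP^{-1/2}\mU_\mP^\top\mX$, and the analogous explicit form of $\mR_r$).

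The core analytic step is to show that $\|\mathbf{R}\|_{2\to\infty}$ and $\|\mathbf{R}_{(r)}\|_{2\to\infty}$ are $\mathrm{o}(n^{-1/2})$ almost surely. This combines the two-to-infinity bounds of Theorem~\ref{consistency} with row-wise concentration of $\mE$ against the coefficient matrices $\mP^{+}\mX$ and $\mX(\mX^\top\mX)^{-1}\mLam_{\epsilon,r}^{-\top}$, which are deterministic conditional on $\mX$ and $\mY$; a leave-one-out argument is needed here to decouple the $i$th row (resp.\ column) of $\mE$ from $\mU_\mA$ and $\mV_\mA$, and one must track that the layers with $\epsilon_r \to 0$ enter only through $\mLam_\epsilon$ and so contribute negligibly, which is precisely what leaves $\mLam_*$ rather than all of $\mLam$ in the leading term. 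Granting the remainders are negligible, fix $i$ and condition on $\xi_i = \mx$ (resp.\ $\upsilon^{(r)}_i = \my$): the $i$th row of the leading term is then a sum $\sum_j \mE_{ij}\,\mathbf{c}_j$ of independent, bounded, mean-zero $d$-vectors, $\mathbf{c}_j$ being the relevant row of $\mP^{+}\mX$ (resp.\ of $\mX(\mX^\top\mX)^{-1}\mLam_{\epsilon,r}^{-\top}$), so a Lindeberg central limit theorem applies once the conditional covariance $n\sum_j P_{ij}(1 - P_{ij})\,\mathbf{c}_j\mathbf{c}_j^\top$ is shown to converge; the factor $(1 - P_{ij})$ is retained when $\rho = 1$ but is asymptotically negligible when $\rho \to 0$ since then $P_{ij} \to 0$, which is exactly the dichotomy appearing in the definitions of $\mSig^{(r)}_Y$ and $\mSig^{(r)}_X$.

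Finally I would compute the limiting covariance. Using $\|\mX^\top\mX - \rho n\Delta_X\|$ and $\|{\mY^{(r)}}^\top\mY^{(r)} - \rho n_r\Delta_{Y,r}\|$ of order $\mathrm{o}(\rho n)$ almost surely (the regularity hypotheses of Section~\ref{subsec:asymptotics}), one has $\mP\mP^\top \approx \rho n\,\mX\Delta_{\mLam,Y}\mX^\top$, hence $\mX^\top(\mP\mP^\top)^{+}\mX \approx (\rho n)^{-1}\Delta_{\mLam,Y}^{-1}$ and so $\mP^{+}\mX = \mY\mLam_\epsilon^\top\mX^\top(\mP\mP^\top)^{+}\mX \approx (\rho n)^{-1}\mY\mLam_\epsilon^\top\Delta_{\mLam,Y}^{-1}$, while $(\mX^\top\mX)^{-1} \approx (\rho n)^{-1}\Delta_X^{-1}$. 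Substituting these into $n\sum_j P_{ij}(1 - P_{ij})\,\mathbf{c}_j\mathbf{c}_j^\top$, the per-block sums turn into averages of the i.i.d.\ $\upsilon^{(r)}_j$ (resp.\ $\xi_{i'}$) and collapse, by the law of large numbers, to $\Delta_{\mLam,Y}^{-1}\mLam_*\mSig_Y(\mx)\mLam_*^\top\Delta_{\mLam,Y}^{-1}$ in the left-hand case and to $\mLam_r^{-1}\Delta_X^{-1}\mSig^{(r)}_X(\my)\Delta_X^{-1}\mLam_r^{-\top}$ in the right-hand case, the cross-layer terms assembling into the block-diagonal $\mSig_Y(\mx)$ exactly because of the block structure of $\mY$ and $\mLam_*$. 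I expect the main obstacle to be the uniform control of $\mathbf{R}$ and $\mathbf{R}_{(r)}$: the singular-value (rather than symmetric-eigenvalue) structure forces the Procrustes analysis to be carried out simultaneously on $\mU_\mA$ and $\mV_\mA$, and the possible rank-deficiency of individual $\mLam_r$ (so that individual blocks $\mP^{(r)}$ may be singular) prevents the single-graph perturbation lemmas from being quoted verbatim, so that they must be re-established at the level of the unfolding.
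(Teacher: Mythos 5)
Your proposal is correct and follows essentially the same route as the paper: your leading term $\mE\,\mP^{+}\mX$ is algebraically identical to the paper's $(\mA-\mP)\mV_\mP\mSig_\mP^{-1/2}\widetilde{\mL}^{-1}$ (since $\widetilde{\mL}^{-1}=\mSig_\mP^{-1/2}\mU_\mP^\top\mX$ and $\mP^{+}=\mV_\mP\mSig_\mP^{-1}\mU_\mP^\top$), likewise for the right-sided term via Proposition \ref{L_matrix_identities}, and your covariance identification through $\mP^{+}\mX=\mY\mLam_\epsilon^\top(\mLam_\epsilon\mY^\top\mY\mLam_\epsilon^\top)^{-1}$ plus the law of large numbers reproduces the paper's computation exactly. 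Two technical choices differ, neither fatally. For the residual control you invoke a leave-one-out decoupling of the $i$th row of $\mE$ from $\mU_\mA$ and $\mV_\mA$; the paper instead bounds the awkward cross term $(\mA-\mP)(\mI-\mV_\mP\mV_\mP^\top)\mV_\mA\mV_\mA^\top$ by an exchangeability-of-rows argument combined with Markov's inequality (Proposition \ref{residual_bounds}), which avoids auxiliary leave-one-out constructions at the cost of only an $n^{-3/4}$ rate---still $\mathrm{o}(n^{-1/2})$ and hence sufficient. For the CLT step you condition on all latent positions and apply Lindeberg to $\sum_j\mE_{ij}\mathbf{c}_j$ with conditionally deterministic coefficients, then pass to the a.s.\ limit of the conditional covariance; the paper first replaces $n(\mLam_\epsilon\mY^\top\mY\mLam_\epsilon^\top)^{-1}$ by its almost-sure limit and then applies the classical i.i.d.\ multivariate CLT to $\sum_j(\mA^{(r)}_{ij}-\mP^{(r)}_{ij})\mLam_{\epsilon,r}\upsilon^{(r)}_j$. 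Your variant actually handles dependence among the $\mY^{(r)}$ more transparently, since conditional on the latent positions the summands are genuinely independent across blocks; but be aware that the block-diagonality of $\mSig_Y(\mx)$ ultimately rests on the conditional independence of the Bernoulli noise across layers (the paper verifies that the cross-layer terms $\mathbb{E}[\mE^{(r)}_{ij}\mE^{(s)}_{ij}\cdot\mLam_{\epsilon,r}\upsilon^{(r)}_j{\upsilon^{(s)}_j}^\top\mLam_{\epsilon,s}^\top]$ vanish for $r\neq s$), not merely on the block structure of $\mY$ and $\mLam_*$ as you suggest. One small omission: for non-bipartite layers the diagonal entries of $\mE^{(r)}$ are deterministic (equal to $-\mP^{(r)}_{ii}$) rather than mean-zero, and must be shown negligible separately, as the paper does.
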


The theorems, of which single-graph analogs were derived in \citep{sussman2012consistent,athreya2016limit,LTAPP17,cape_biometrika,CTP17,TP19,RPTC18,AFTPPVLLQ17}, also have analogous methodological implications. Under a multilayer stochastic block model, discussed in Section~\ref{sec:GMSBM}, the left UASE asymptotically follows a Gaussian mixture model with non-circular components. Fitting this model is preferable to using $K$-means, which is implicitly fitting circular components. Apart from shape considerations, note that (as with the GRDPG) latent positions under the MRDPG are only identifiable up to a distance-distorting transformation (here invertible linear, there indefinite orthogonal) to which partitions obtained using a Gaussian mixture model are invariant but those obtained using $K$-means are not. Consistency in the two-to-infinity norm should imply the consistency of many subsequent statistical analyses: usually, if a method is consistent, it remains so under a perturbation of the data of vanishing maximal error; one need only then worry about the effect of an unidentifiable linear transformation on conclusions; if the estimand is invariant, this effect will often vanish on account of the transformation having bounded spectral norm.

In the special case in which the graphs $\mA^{(1)}, \ldots, \mA^{(k)}$ are identically distributed, we can \textit{always} recover the latent positions:

\begin{corollary}\label{identical_MRDPG_consistency}
Let $(\mA,\mX, \mY) \stackrel{\mathrm{id}}{\sim} \mathrm{MRDPG}(\mathcal{F}_\rho,\mLam)$ for a distribution $\mathcal{F}$ and sparsity factor $\rho$ satisfying the criteria stated in Section \ref{subsec:asymptotics}. Then there exist sequences of matrices $\mL = \mL(n,n') \in \mathrm{GL}(d)$ and $\mR = \mR(n,n') \in \mathbb{R}^{d' \times d}$ satisfying $\mL\mR^\top = \mLam$ such that 
\begin{align}
	\|\mX_\mA - \mX\mL\|_{2 \to \infty} = \mathrm{O}\Bigl(\tfrac{\log^{1/2}(n)}{\rho^{1/2}n^{1/2}}\Bigr),~\|\mY^{(r)}_\mA - \mY\mR\|_{2 \to \infty} = \mathrm{O}\Bigl(\tfrac{\log^{1/2}(n)}{\rho^{1/2}n^{1/2}}\Bigr)
\end{align} 
almost surely. Moreover, we may invert the matrices $\mL$ and (if $\mLam$ is of maximal rank) $\mR$, and consequently find that 
\begin{align}
\|\mX_\mA\mL^{-1} - \mX\|_{2 \to \infty} = \mathrm{O}\Bigl(\tfrac{\log^{1/2}(n)}{\rho^{1/2}n^{1/2}}\Bigr),~\|\mY^{(r)}_\mA\mR^+ - \mY\|_{2 \to \infty} = \mathrm{O}\Bigl(\tfrac{\log^{1/2}(n)}{\rho^{1/2}n^{1/2}}\Bigr)
\end{align} 
almost surely, where $\mR^+ = \mR^\top(\mR\mR^\top)^{-1}$ is the Moore-Penrose inverse of $\mR$.
\end{corollary}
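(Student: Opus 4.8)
The plan is to deduce the corollary from Theorem~\ref{consistency} by viewing $(\mA,\mX,\mY)\stackrel{\mathrm{id}}{\sim}\mathrm{MRDPG}(\mathcal{F}_\rho,\mLam)$ as the special case of $\mathrm{MRDPG}(\mathcal{F}_\rho,\mLam_\epsilon)$ in which every $\mLam_r$ equals the fixed rank-$d$ matrix $\mLam\in\mathbb{R}^{d\times d'}$ (so each $d_r=d'$), every local sparsity factor $\epsilon_r$ equals $1$ (so $\kappa=k$, $\mLam_*=\mLam_\epsilon=[\mLam|\cdots|\mLam]$ and $\epsilon=k^{1/2}$), every $n_r$ equals $n'$, and $\mathcal{F}$ is such that $\mY^{(1)}=\cdots=\mY^{(k)}=\mY$ with probability one. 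First I would check that the hypotheses of Theorem~\ref{consistency} hold in this instance: $\mLam_*=[\mLam|\cdots|\mLam]$ shares the column space of $\mLam$ and hence has rank $d$; the marginal i.i.d.\ and second-moment non-degeneracy conditions on $\mathcal{F}$ are exactly those assumed in the corollary; the mutual almost-sure spectral bounds on $\mX^\top\mX$ and on each ${\mY^{(r)}}^\top\mY^{(r)}$ hold because the $\mY^{(r)}$ coincide (the case noted in Section~\ref{subsec:asymptotics}); and $\rho$ satisfies the global sparsity condition by assumption.

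Granting this, most of the statement is immediate. The two raw bounds of Theorem~\ref{consistency}, namely $\|\mX_\mA-\mX\mL\|_{2\to\infty}=\mathrm{O}(\log^{1/2}(n)/(\rho^{1/2}n^{1/2}))$ and $\|\mY^{(r)}_\mA-\mY^{(r)}\mR_r\|_{2\to\infty}=\mathrm{O}(\log^{1/2}(n)/(\rho^{1/2}n^{1/2}))$, become the first displayed pair once $\mY^{(r)}=\mY$ is substituted. Since $\mL\in\mathrm{GL}(d)$ and $\mL\mR_r^\top=\mLam_{\epsilon,r}=\mLam$ for every $r$, the matrix $\mR_r=\mLam^\top\mL^{-\top}$ does not depend on $r$; write $\mR$ for this common value, so that $\mL\mR^\top=\mLam$ as required. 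When $\mLam$ additionally has maximal rank, the bound $\|\mY^{(r)}_\mA\mR^+-\mY\|_{2\to\infty}=\mathrm{O}(\log^{1/2}(n)/(\rho^{1/2}n^{1/2}))$ is then precisely the $\epsilon_r=1$ instance of the corresponding right-sided inverted bound in Theorem~\ref{consistency}.

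The only point requiring more than bookkeeping is the left-sided inverted bound. Theorem~\ref{consistency} yields $\|\mX_\mA\mL^{-1}-\mX\|_{2\to\infty}=\mathrm{O}(\epsilon^{1/2}\log^{1/2}(n)/(\rho^{1/2}n^{1/2}))$, which here carries the superfluous factor $\epsilon^{1/2}=k^{1/4}$; removing it amounts to showing that $\|\mL^{-1}\|=\mathrm{O}(1)$ in the identically-distributed setting rather than merely $\mathrm{O}(\epsilon^{1/2})$. Writing $\mL=\widetilde{\mL}\mW$ with $\mW$ orthogonal, $\|\mL^{-1}\|=\|\widetilde{\mL}^{-1}\|$, where $\widetilde{\mL}$ is the matrix with $\mX_\mP=\mX\widetilde{\mL}$. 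In the proof of Theorem~\ref{consistency} the estimate $\|\widetilde{\mL}^{-1}\|=\mathrm{O}(\epsilon^{1/2})$ traces back, through the inequality $\|\mLam_\epsilon\|\le\epsilon\|\mLam\|$, to the ratio of the largest singular value of $\mLam_\epsilon$ to the smallest singular value of $\mLam_*$ (equivalently, to the conditioning of $\mP$), a quantity that can grow with $\epsilon$. But when $\mLam_\epsilon=\mLam_*=[\mLam|\cdots|\mLam]$, every singular value of $\mLam_\epsilon$ is exactly $k^{1/2}$ times the corresponding singular value of $\mLam$, so that ratio equals $\|\mLam\|/\sigma_d(\mLam)$, a fixed constant. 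Re-running that step of the proof of Theorem~\ref{consistency} with this observation in place of $\|\mLam_\epsilon\|\le\epsilon\|\mLam\|$ gives $\|\widetilde{\mL}^{-1}\|=\mathrm{O}(1)$, and multiplying the raw bound $\|\mX_\mA-\mX\mL\|_{2\to\infty}=\mathrm{O}(\log^{1/2}(n)/(\rho^{1/2}n^{1/2}))$ through by $\|\mL^{-1}\|$ then gives the claim. I expect this last point, namely checking that the $k^{1/2}$ scaling of the singular values of $\mP$ cancels inside $\widetilde{\mL}^{-1}$, to be the only non-routine ingredient; it requires unwinding the explicit construction of $\widetilde{\mL}$ given in the Appendix, after which the rest is direct substitution.
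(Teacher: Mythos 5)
Your proposal is correct and matches the paper's (implicit) derivation: the corollary is stated without separate proof precisely because it is the specialization of Theorem~\ref{consistency} to $\mLam_r=\mLam$, $\epsilon_r=1$, $\mY^{(r)}=\mY$, with $\mR_r=\mLam^\top\mL^{-\top}$ forced to be common across $r$ by the invertibility of $\mL$. You also correctly isolate the one non-trivial point---that the generic $\epsilon^{1/2}=k^{1/4}$ factor in the inverted left bound disappears because $\mLam_\epsilon\mY^\top\mY\mLam_\epsilon^\top=k\,\mLam\mY^\top\mY\mLam^\top$ makes $\|\widetilde{\mL}^{-1}\|\leq\|\mSig_\mP^{1/2}\|\|\mPi_\mY^{-1}\|=\mathrm{O}(1)$ when traced through Corollary~\ref{L_matrix_norm}---so no further comment is needed.
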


We can similarly derive a more precise version of the Central Limit Theorem of Theorem \ref{CLT}:

\begin{corollary}\label{identical_MRDPG_CLT}
Let $(\mA,\mX, \mY) \stackrel{\mathrm{id}}{\sim} \mathrm{MRDPG}(\mathcal{F}_\rho,\mLam)$ for a distribution $\mathcal{F}$ and sparsity factor $\rho$ satisfying the criteria stated in Section \ref{subsec:asymptotics}, and let $\mL$ be the transformation matrix specified in Corollary \ref{identical_MRDPG_consistency}. 

Let $\xi \sim \mathcal{F}_X$ and $\upsilon \sim \mathcal{F}_{Y}$, where $\mathcal{F}_X$ and $\mathcal{F}_{Y}$ are the marginal distributions of $\mathcal{F}$. Given $\mx \in \mathcal{X}$, define 
\begin{align}
\mSig_Y(\mx) = \left\{\begin{array}{cc}\mathbb{E}[\mx^\top\mLam\upsilon(1 - \mx^\top\mLam\upsilon)\cdot\upsilon\upsilon^\top] & \mathrm{if}~\rho=1\\\mathbb{E}[\mx^\top\mLam\upsilon\cdot\upsilon\upsilon^\top] & \mathrm{if}~\rho \to 0\end{array}\right.
\end{align} 
Then, for all $\mz \in \mathbb{R}^{d}$ and for any fixed $i$, 
\begin{align}
	\mathbb{P}\Bigl(n^{1/2}\bigl(\mX_\mA\mL^{-1} - \mX\bigr)_i^\top \leq \mz ~|~ \xi_i = \mx\Bigr) \to \Phi\bigl(\mz,\tfrac{c}{k}\mLam^{-\top}\Delta_Y^{-1}\mSig_Y(\mx)\Delta_Y^{-1}\mLam^{-1}\bigr)
\end{align} 
almost surely, where $\lim_{n,n' \to \infty} \frac{n'}{n} = c$.

Moreover, if the matrix $\mLam_r$ is invertible, then given $\my \in \mathcal{Y}$ define 
\begin{align}
	\mSig_X(\my) = \left\{\begin{array}{cc}\mathbb{E}[\xi^\top\mLam\my(1 - \xi^\top\mLam\my)\cdot\xi\xi^\top] & \mathrm{if}~\rho=1\\\mathbb{E}[\xi^\top\mLam\my \cdot \xi\xi^\top] & \mathrm{if}~\rho \to 0\end{array}\right.
\end{align} 
Then, for all $\mz \in \mathbb{R}^{d'}$ and for any fixed $i$, 
\begin{align}
\mathbb{P}\Bigl(n^{1/2}\bigl(\mY^{(r)}_\mA\mR^{-1} - \mY\bigr)_i^\top \leq \mz ~|~ \upsilon_i = \my\Bigr) \to \Phi\bigl(\mz,\mLam^{-1}\Delta_X^{-1}\mSig_X(\my)\Delta_X^{-1}\mLam^{-\top}\bigr)
\end{align}
almost surely.  \end{corollary}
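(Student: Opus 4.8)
The plan is to read off Corollary~\ref{identical_MRDPG_CLT} as the specialisation of Theorem~\ref{CLT} to the regime $(\mA,\mX,\mY)\stackrel{\mathrm{id}}{\sim}\mathrm{MRDPG}(\mathcal{F}_\rho,\mLam)$; there is no new probabilistic content, and essentially all of the work is translating notation and collapsing the block structure that appears in the general statement. First I would record what the identically-distributed assumption does to the quantities in Theorem~\ref{CLT}. There is no local sparsity, so every $\epsilon_r=1$, hence $\kappa=k$, $\mLam_\epsilon=\mLam_*=[\mLam\,|\,\cdots\,|\,\mLam]$ ($k$ copies of $\mLam$); every layer has the same width $n_r=n'$, so every $c_r$ equals the common limit $c=\lim n'/n>0$; and since all $\mY^{(r)}$ coincide with $\mY$ almost surely they share the marginal $\mathcal{F}_Y$, giving $\Delta_{Y,r}=\mathbb{E}[\upsilon\upsilon^\top]$ and $\mSig_Y^{(r)}(\mx)=\mSig_Y(\mx)$ for all $r$, and likewise $\mSig_X^{(r)}(\my)=\mSig_X(\my)$ when $\mLam$ is invertible. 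I would also note that the hypotheses of Theorem~\ref{CLT} are inherited: the global condition on $\rho$ and $k/n\to0$ are imposed directly, $\Delta_X$ and $\Delta_Y$ are invertible by assumption, and the mutual-almost-sure control of $\|\mX^\top\mX-n\Delta_X\|$ and $\|{\mY^{(r)}}^\top\mY^{(r)}-n'\Delta_Y\|$ is exactly the ``all $\mY^{(r)}$ equal'' case already flagged in Section~\ref{subsec:asymptotics}.

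Next I would fix the transformations. Theorem~\ref{consistency} (equivalently Corollary~\ref{identical_MRDPG_consistency}) supplies $\mL\in\mathrm{GL}(d)$ and $\mR_r\in\mathbb{R}^{d'\times d}$ with $\mL\mR_r^\top=\mLam_{\epsilon,r}=\mLam$; since $\mL$ is common and $\mLam$ fixed, $\mR_r^\top=\mL^{-1}\mLam$ is independent of $r$, so one writes $\mR_r=\mR$ and feeds the single pair $(\mL,\mR)$ into Theorem~\ref{CLT}. For the left-embedding CLT, Theorem~\ref{CLT} gives limiting covariance $\Delta_{\mLam,Y}^{-1}\mLam_*\mSig_Y(\mx)\mLam_*^\top\Delta_{\mLam,Y}^{-1}$ with $\Delta_{\mLam,Y}=\mLam_*\Delta_Y\mLam_*^\top$, where there $\Delta_Y$ and $\mSig_Y(\mx)$ are the \emph{block-diagonal} objects (direct sums of $c\Delta_{Y,r}$ and of $c\mSig_Y^{(r)}(\mx)$), which under the dictionary above are $c\,(\mI\otimes\Delta_Y)$ and $c\,(\mI\otimes\mSig_Y(\mx))$. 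Because $\mLam_*$ is a horizontal stack of $k$ copies of $\mLam$, the block product telescopes: $\mLam_*(\mI\otimes\mM)\mLam_*^\top=\sum_{r=1}^k\mLam\mM\mLam^\top=k\,\mLam\mM\mLam^\top$ for any $\mM$, so $\Delta_{\mLam,Y}=ck\,\mLam\Delta_Y\mLam^\top$ and the covariance collapses to a scalar multiple of $(\mLam\Delta_Y\mLam^\top)^{-1}\mLam\mSig_Y(\mx)\mLam^\top(\mLam\Delta_Y\mLam^\top)^{-1}$, which equals $\mLam^{-\top}\Delta_Y^{-1}\mSig_Y(\mx)\Delta_Y^{-1}\mLam^{-1}$ when $\mLam$ is square; the leading constant is read off by tracking the powers of $c$ and $k$ produced by the two telescopings and by the diagonal block of $\Delta_{\mLam,Y}^{-1}$. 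For the right-embedding CLT, under the extra hypothesis that $\mLam$ is invertible (so $d=d'$ and $\mR=\mR_r$ is invertible), nothing needs collapsing: the $r$th block's covariance in Theorem~\ref{CLT} is already $\mLam_r^{-1}\Delta_X^{-1}\mSig_X^{(r)}(\my)\Delta_X^{-1}\mLam_r^{-\top}$, which under the dictionary is literally $\mLam^{-1}\Delta_X^{-1}\mSig_X(\my)\Delta_X^{-1}\mLam^{-\top}$ and the same for every $r$; the $n^{1/2}$-scaling and the conditioning on $\xi_i=\mx$ (resp.\ $\upsilon_i=\my$) carry over unchanged.

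I do not expect a genuine obstacle here, since the heavy lifting — concentration of $\mA$ about $\mP$, perturbation of the singular subspaces, and the leading-order linear expansion of the embedding rows — is imported wholesale from the proof of Theorem~\ref{CLT}. The only thing demanding care is the bookkeeping just described: keeping the block-diagonal $\Delta_Y,\mSig_Y(\mx)$ of Theorem~\ref{CLT} notationally distinct from the single-layer $\Delta_Y,\mSig_Y(\mx)$ of the corollary, and accurately accumulating the factors of $c$ and $k$ in passing from $\mLam_*$ to $\mLam$. There is also the minor point that, when $\mLam$ is not square, the stated left-embedding covariance should be understood as $(\mLam\Delta_Y\mLam^\top)^{-1}\mLam\mSig_Y(\mx)\mLam^\top(\mLam\Delta_Y\mLam^\top)^{-1}$, the two expressions agreeing in the square case, which is the only case in which $\mLam^{-1}$ is literally meaningful.
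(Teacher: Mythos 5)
Your proposal is correct and coincides with what the paper does: Corollary \ref{identical_MRDPG_CLT} is stated without a separate proof precisely because it is the specialisation of Theorem \ref{CLT} under $\epsilon_r\equiv 1$, $\kappa=k$, $\mLam_*=[\mLam|\cdots|\mLam]$, $c_r\equiv c$ and $\mY^{(r)}\equiv\mY$, followed by the block-telescoping $\mLam_*(\mI\otimes\mM)\mLam_*^\top=k\,\mLam\mM\mLam^\top$ that you describe. The one place you should not leave the bookkeeping implicit is the leading constant: with $\Delta_Y$ taken to be the unscaled $\mathbb{E}[\upsilon\upsilon^\top]$ (as in your dictionary), the telescoping yields $\tfrac{1}{ck}\mLam^{-\top}\Delta_Y^{-1}\mSig_Y(\mx)\Delta_Y^{-1}\mLam^{-1}$, which matches the stated $\tfrac{c}{k}\mLam^{-\top}\Delta_Y^{-1}\mSig_Y(\mx)\Delta_Y^{-1}\mLam^{-1}$ only under Theorem \ref{CLT}'s convention that $\Delta_Y$ absorbs the factor $c$ (i.e.\ $\Delta_Y=c\,\mathbb{E}[\upsilon\upsilon^\top]$), so you must fix one normalisation and carry it through rather than declaring the constant ``read off''.
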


If $\mY = \mX$ with probability one and $\mLam = \mI_{p,q}$ (the diagonal matrix whose first $p$ entries are equal to $1$ and remaining $q$ entries are equal to $-1$) then the limiting distribution for the rows UASE are the same as that for the ASE stated in \cite{RPTC18}, scaled by a factor of $\tfrac{1}{k}$, and in particular coincides with that obtained by spectrally embedding the \textit{average} $\bar{\mA} = \tfrac{1}{k}\sum_{r=1}^k \mA^{(r)}$ of the adjacency matrices (see for example \cite{TKBCMVPS19}).

In Corollaries~\ref{identical_MRDPG_consistency} and~\ref{identical_MRDPG_CLT} the matrices $\mR$ are common across graphs, allowing a direct comparison of their right-sided embeddings. By constrast, independently embedded graphs first need to be aligned before a meaningful comparison is possible. In \citep{tang2017semiparametric} this is achieved using Procrustes, the appropriate method of alignment under an RDPG model where latent positions are identifiable only up to orthogonal transformation. Under the GRDPG, finding a best indefinite orthogonal alignment is less straightforward. Computational issues aside, alignment comes at a statistical cost \cite{LATLP17} when testing whether two point clouds differ statistically, and the empirical performance of using right-sided unfolded adjacency spectral embeddings for two-graph testing are investigated in Section~\ref{two-graph-testing}.

\section{The generalised multilayer stochastic block model}\label{sec:GMSBM}

\begin{definition}\label{def:GMSBM}(The generalised multilayer stochastic block model).
\\ We say that the matrices $\mA^{(1)}, \ldots, \mA^{(k)}$ are distributed as a $\mathbf{K}$-community generalised multilayer stochastic block model for a tuple $\mK = (K,K_1,\ldots,K_k)$ of positive integers if $(\mA,\mX,\mY) \sim \mathrm{MRDPG}(\mathcal{F},\mB)$ for a set of matrices $\mB^{(r)} \in [0,1]^{K \times K_r}$ and a distribution $\mathcal{F}$ whose marginal distributions $\mathcal{F}_X$ and $\mathcal{F}_{Y,r}$ are supported on the sets $\{\me_1,\ldots,\me_K\}$ and $\{\me_1,\ldots,\me_{K_r}\}$ of standard basis vectors of $\mathbb{R}^K$ and $\mathbb{R}^{K_r}$ respectively. If this is the case we write $(\mA,\mX,\mY) \sim \mathrm{GMSBM}(\mathcal{F},\mB)$. \end{definition}

If each of the $\mY^{(r)}$ is equal to $\mX$ with probability one (and consequently $K_r = K$ for each $r$), the distribution $\mathcal{F}$ assigns each latent position to the $i$th basis vector of $\mathbb{R}^K$ with probability $\pi_i$ for some tuple $\mpi = (\pi_1,\ldots,\pi_K)$ whose entries sum to $1$, and each of the matrices $\mA^{(r)}$ and $\mB^{(r)}$ is symmetric, then the GMSBM reduces to the standard $K$-community multilayer SBM \cite{HLL83}. In this case, the matrices $\mA^{(r)}$ are the adjacency matrices of a set of graphs generated independently from a common set of vertices, in which, independently conditional on a partition of these vertices into $K$ disjoint communities, an edge is generated between the $i$th and $j$th vertices in the $r$th graph with probability $\mB^{(r)}_{z_i,z_j}$, where $z_i \in \{1,\ldots,K\}$ denotes the community membership of the $i$th vertex.

Note that, as per Proposition \ref{latent_positions}, we may consider alternative choices of latent positions (and thus matrices $\mLam_r$) for the GMSBM. We often use the second choice posited in Proposition \ref{latent_positions}, in which we consider the singular value decomposition $\mB = \mU\mSig\mV^\top$ with $\mU \in \mathrm{O}(K)$ and $\mV \in \mathrm{O}((K_1 + \ldots + K_k) \times K)$, and let the positions $\mX_i$ be chosen from the rows of $\mU\mSig^{1/2}$ and $\mY^{(r)}_j$ be chosen from the rows of $\mV_r\mSig$ (where we split $\mV$ into $k$ distinct blocks $\mV_r \in \mathbb{R}^{K_r \times K}$). In this case, each matrix $\mLam_r$ is equal to the identity matrix. If $k = 1$, this choice closely resembles the model presented in \cite{RPTC18}, with the signs of the eigenvalues of the matrix $\mB$ being absorbed into the latent positions $\mY_j$ (if we impose the additional condition that $\mY$ is equal to $\mX$ with probability one, then $\mLam = \mI_{p,q}$, exactly as in \cite{RPTC18}).

\subsection{Undirected graphs}
As a first demonstration of the UASE for the GMSBM, we consider the case in which the latent positions $\mY^{(r)} = \mX$, and the matrices $\mB^{(r)}$ and $\mA^{(r)}$ are symmetric (that is, the standard multilayer SBM). We begin with two examples. For the first, we assume that the adjacency matrices $\mA^{(1)}$ and $\mA^{(2)}$ are \textit{identically} distributed according to a multilayer SBM with parameters 
\begin{align}
	\mB = \left(\begin{array}{ccc}0.42&&0.42\\0.42&&0.5\end{array}\right),~\mpi = (0.6,0.4)
\end{align} 
(the Laplacian spectral embedding of a single graph generated with these parameters was studied in \cite{TP19}). Figure \ref{Fig_1_Identical_GMSBM} plots the estimated latent positions for the ASE of the matrix $\mA^{(1)}$ (first row) and the UASE of the matrix $\mA = [\mA^{(1)}|\mA^{(2)}]$ (second row) for $n = 1000$, 2000 and 4000. Also displayed are the $95\%$ level curves of the empirical distributions (dashed curves) and the theoretical distributions specified by Theorem \ref{CLT} (solid curves). 

\begin{figure}[ht!]
\centering
\includegraphics[scale=0.68]{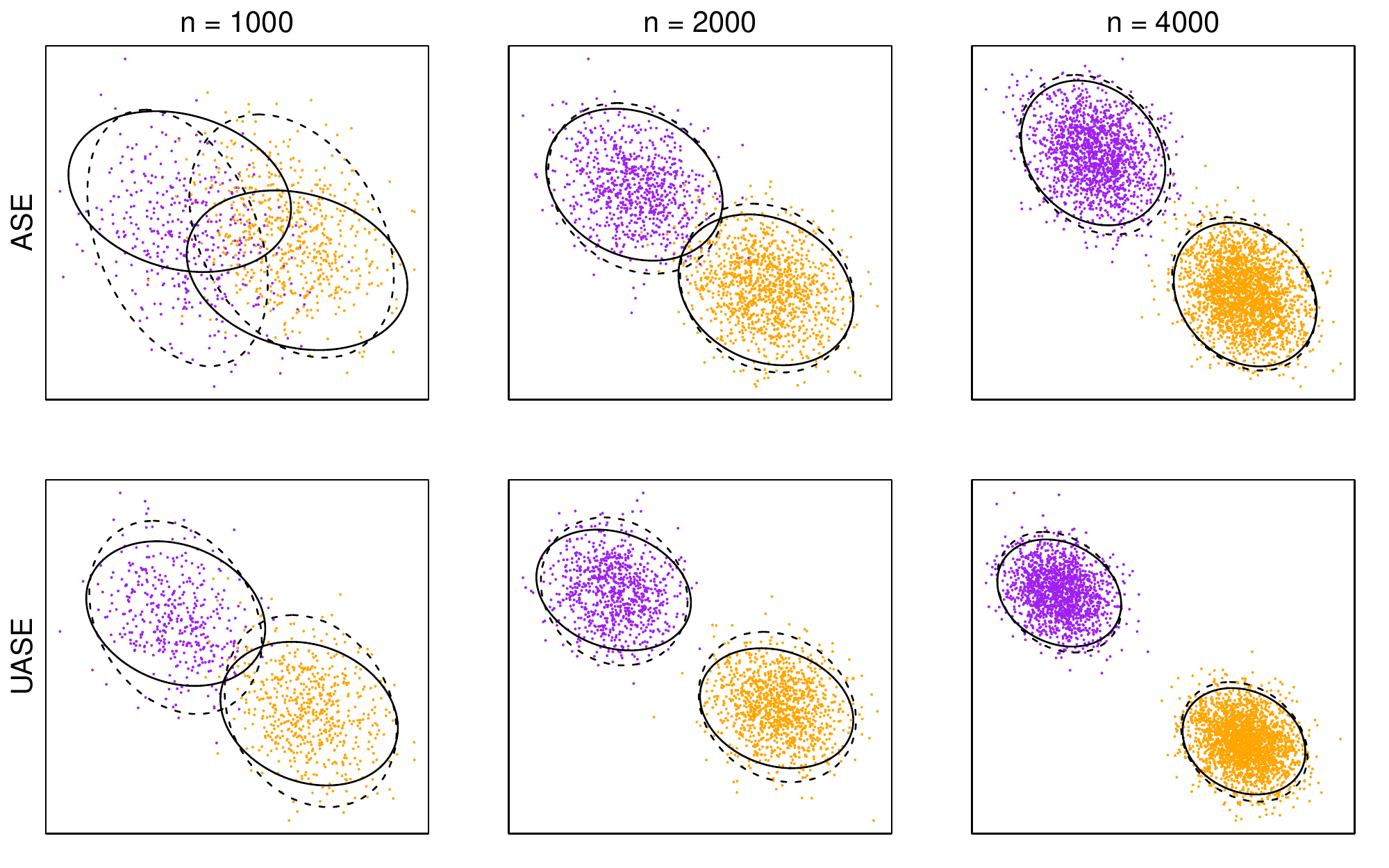}
\caption{Plots of the latent position estimates by the ASE and UASE of a pair of identically distributed graphs drawn from a $2$-community SBM on the same set of $n$ nodes. Points are coloured according to the community membership of the corresponding vertices. Ellipses give the $95\%$ level curves of the \textit{empirical} (dashed curves) and \textit{theoretical} (solid curves) distributions specified by Theorem \ref{CLT}.}
\label{Fig_1_Identical_GMSBM}
\end{figure}

For the second example, we take a pair of graphs with adjacency matrices $\mA^{(1)}$ and $\mA^{(2)}$ generated according to a multilayer SBM with parameters 
\begin{align}
	\mB^{(1)} = \left(\begin{array}{ccc}0.58&&0.58\\0.58&&0.5\end{array}\right),~\mB^{(2)} = \left(\begin{array}{ccc}0.42&&0.42\\0.42&&0.5\end{array}\right),~\mpi = (0.6,0.4).\end{align} 
Since the matrices $\mB^{(1)}$ and $\mB^{(2)}$ have signatures $(2,0)$ and $(1,1)$ respectively, they exhibit different assortativity behaviours. Figure \ref{Fig_2_Mixed_GMSBM} plots the estimated latent positions for the ASE of the matrix $\mA^{(1)}$ (first row) and the UASE (second row) for $n = 1000$, 2000 and 4000, with the $95\%$ level curves displayed as in Figure \ref{Fig_1_Identical_GMSBM}. 

\begin{figure}[ht!]
\centering
\includegraphics[scale=0.68]{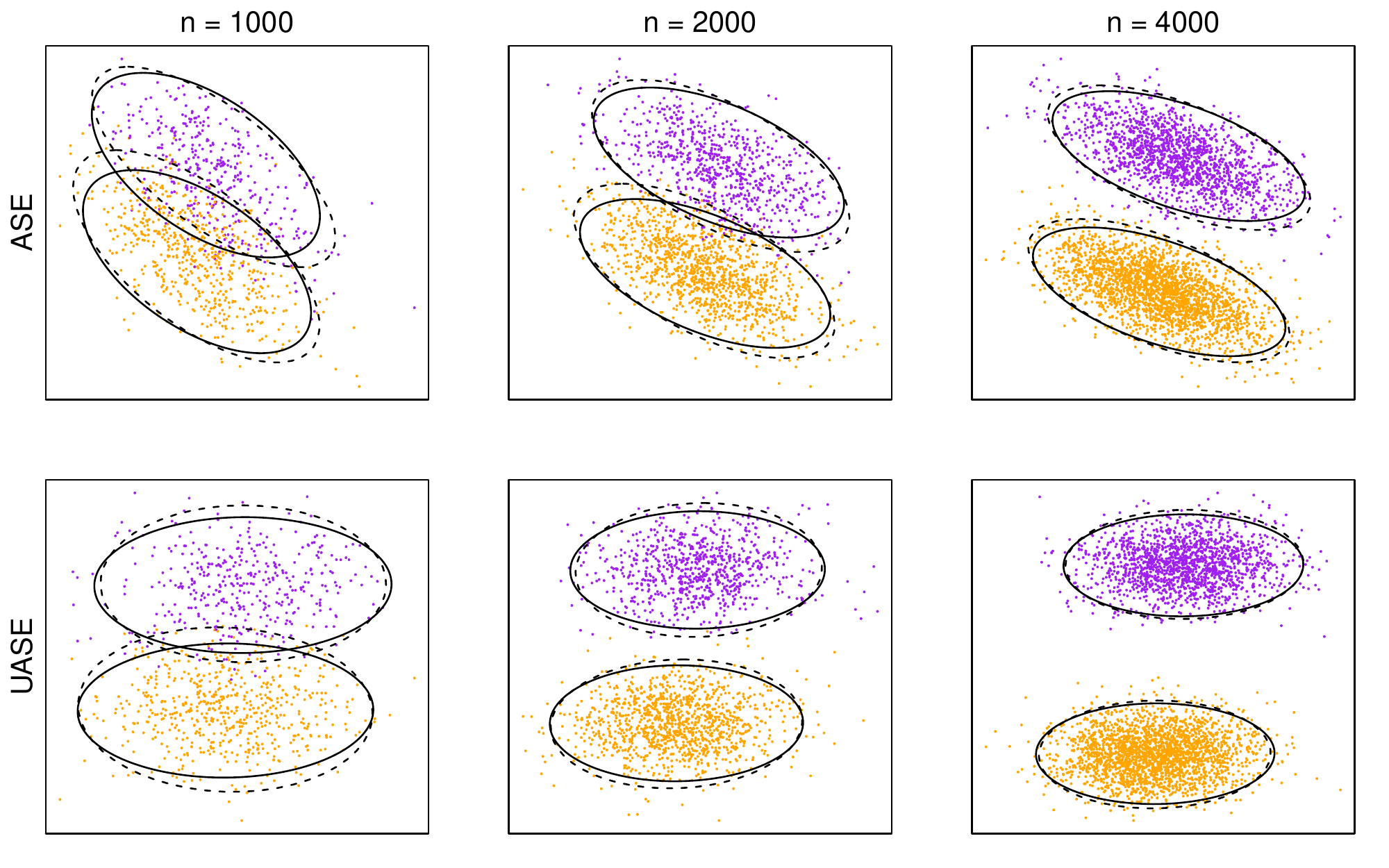}
\caption{Plots of the latent position estimates by the ASE and UASE of a pair of graphs drawn from a $2$-community SBM on the same set of $n$ nodes with differing distributions. Points are coloured according to the community membership of the corresponding vertices. Ellipses give the $95\%$ level curves of the \textit{empirical} (dashed curves) and \textit{theoretical} (solid curves) distributions specified by Theorem \ref{CLT}.}
\label{Fig_2_Mixed_GMSBM}
\end{figure}

In each example, the UASE demonstrates greater cluster separation over the ASE. To test empirically whether this behaviour holds in general, we performed the following experiment: for each value of $n \in \{50,100,250,500,750,1000,1500,2000\}$ we performed $500$ trials in which we generate two matrices $\mB^{(r)}$ with entries $\mB^{(r)}_{ij} \sim \mathrm{Uniform}[0,1]$, and probability vectors $\mpi \sim \mathrm{Dirichlet}(1,1)$ (to ensure that both clusters were of a reasonable size, we discard vectors $\mpi$ for which either of the $\mpi_i$ was less than $0.2$). Matrices $\mA^{(r)}$ were then generated according to the resulting multilayer SBM, the UASE calculated, and nodes were then assigned to the most likely cluster predicted by the Gaussian mixture model obtained via the MCLUST algorithm (see \cite{SFMR16}). These were then compared against the known cluster assignments given by the latent positions $\mX_i$, and the average classification error rate calculated across all samples of a given size $n$ (for the ASE, the average error rate of the two embeddings was used). For added differentiation, we performed this test for 3 separate cases: one in which the matrices $\mB^{(r)}$ were identical; one in which they had a common signature; and one in which they had different signatures. Figure \ref{Fig_3_GMSBM_community_detection} displays these error rates in the case of the identical and mixed parameter examples, as well as the average error rates for the \textit{mean embedding}---that is, the spectral embedding of the matrix $\bar{\mA} = \frac{1}{2}(\mA^{(1)} + \mA^{(2)})$, which we denote by ASE(mean)---and, in the identically distributed case, the \textit{omnibus embedding}---denoted by OMNI (see \cite{LATLP17}).

\begin{figure}[ht!]
\centering
\includegraphics[scale=0.68, trim = 18 0 12 0,clip]{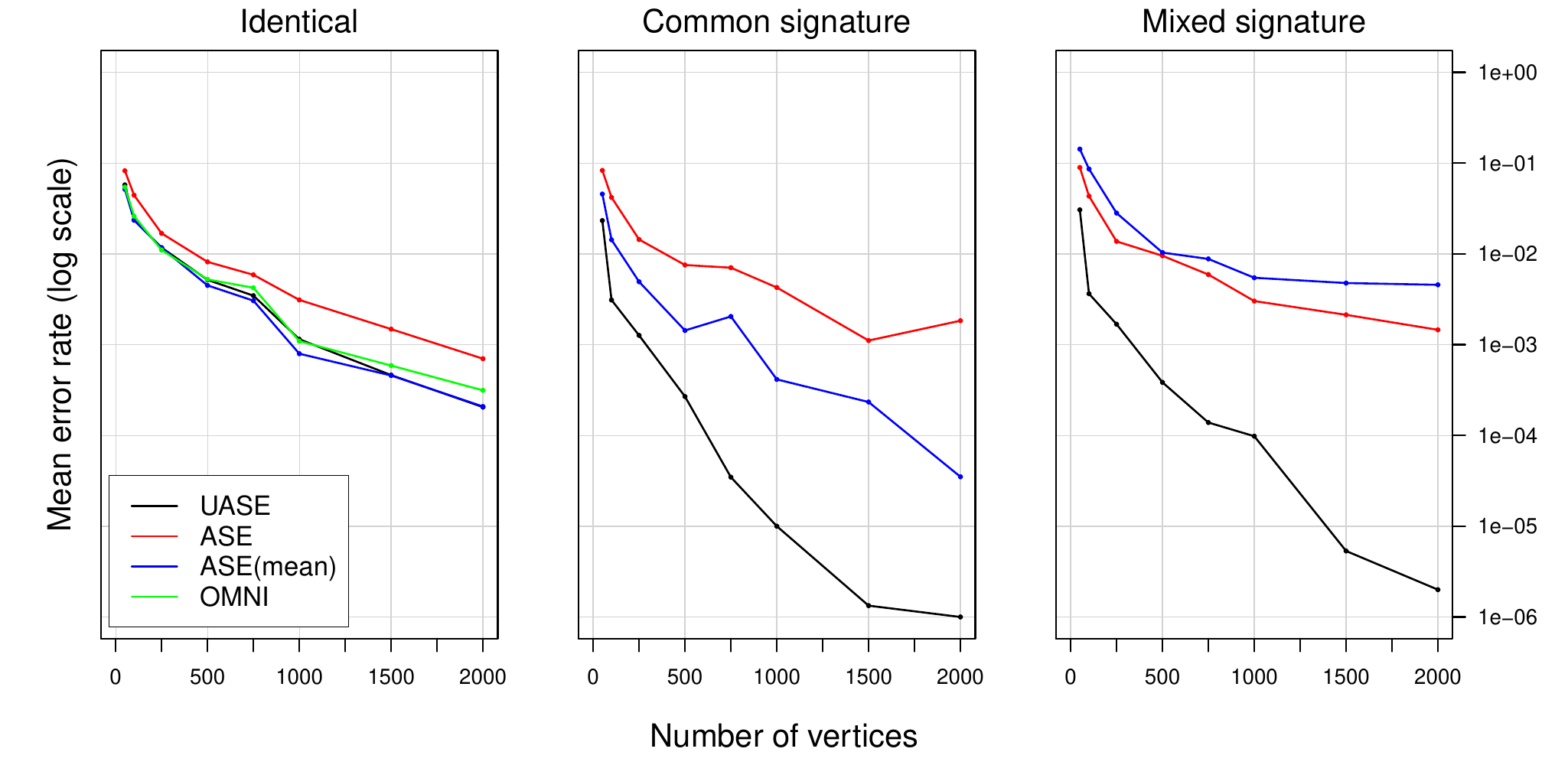}
\caption{Classification error rates for assignment of nodes to the most likely cluster predicted by the Gaussian mixture model obtained via the MCLUST	algorithm. Error rates are plotted on a logarithmic scale. See main text for details.}
\label{Fig_3_GMSBM_community_detection}
\end{figure}

As expected, the UASE (black line) clearly outperforms the ASE (red line) in all cases. When the two graphs are identically distributed, the UASE is comparable with the mean embedding (blue line) and slightly outperforms the omnibus embedding (green line). If the matrices $\mB^{(r)}$ differ then the UASE significantly outperforms the mean embedding; in particular, if the matrices $\mB^{(r)}$ have different signatures (resulting in different assortativity behaviours in the graphs $\mA^{(r)}$) then the mean embedding performs worse than even the ASE. This is not particularly surprising; if the adjacency matrices of different graphs have different signatures, then it is entirely possible for the matrix $\bar{\mP}$ to have non-maximal rank, causing some of the information in the system to be lost when we spectrally embed the matrix $\bar{\mA}$. Conversely, one finds that the embedding $\mX_\mA$ is the same as that of the positive-definite square root of the matrix $\sum_{r=1}^k (\mA^{(r)})^2$, which will \textit{always} be of maximal rank.

\subsection{Directed graphs}
 Unlike the standard multilayer SBM, the GMSBM does not require that the matrices $\mB^{(r)}$ or $\mA^{(r)}$ be symmetric, and so it allows us to consider \textit{directed} graphs. In particular, given a \textit{single} directed graph whose adjacency matrix $\mA \in \{0,1\}^{n \times n}$ follows a GMSBM, Theorems \ref{consistency} and \ref{CLT} show us that taking the standard ASE of $\mA$ will provide us with consistent estimates of the latent positions $\mX$. 

 However, the ability of the UASE to evaluate multiple adjacency matrices presents an alternative method for working with directed graphs. Let $(\mA,\mX,\mX) \sim \mathrm{GMSBM}(\mathcal{F},\mB)$ for some $\mathcal{F}$ and $\mB$, and let $\mathrm{Sym}(\mA)$ be the hollow symmetric matrix whose upper-triangular part is equal to that of $\mA$. Then we observe that $(\mathrm{Sym}(\mA),\mX,\mX) \sim \mathrm{GMSBM}(\mathcal{F},\mB)$, where we view $\mathrm{Sym}(\mA)$ as the adjacency matrix of an \textit{undirected} graph drawn on the same set of nodes as our original graph. Similarly, we see that $(\mathrm{Sym}(\mA^\top),\mX,\mX) \sim \mathrm{GMSBM}(\mathcal{F},\mB^\top)$, and thus $(\widetilde{\mA},\mX,\mX \oplus \mX) \sim \mathrm{GMSBM}(\widetilde{\mathcal{F}},\widetilde{\mB})$, where $\widetilde{\mA} = [\mathrm{Sym}(\mA)|\mathrm{Sym}(\mA^\top)]$, $\widetilde{\mB} = [\mB|\mB^\top]$ and $\widetilde{\mathcal{F}}$ is the natural extension of the distribution $\mathcal{F}$.

 We demonstrate the effectiveness of this proposed method through the following experiment: for each value of $n \in \{50,100,250,500,750,1000,1500,2000\}$, we performed $1000$ trials in which a directed graph was generated according to a $2$-community stochastic block model, where the (not necessarily symmetric) probability matrix $\mB$ and community probabilities $\mpi$ were randomly generated as before. The adjacency matrix $\mA$ was then constructed, and the ASE of $\mA$, the UASE of $\widetilde{\mA}$, and the mean embedding we calculated. Using each of these, we assign nodes to their most likely cluster using the MCLUST algorithm as before, and calculate the mean error rate across all the trials of a given sample size. The results (in the form of the average classification error rate) are plotted in Figure \ref{Fig_4_directed_GMSBM_community_detection}, and indicate that on average using the UASE offers significant improvement over the ASE, and a minor improvement over the mean embedding (we note however, that this does not guarantee that the UASE will \textit{always} offer the best performance). 

\begin{figure}[ht!]
\centering
\includegraphics[scale=0.72, trim = 0 20 30 50,clip]{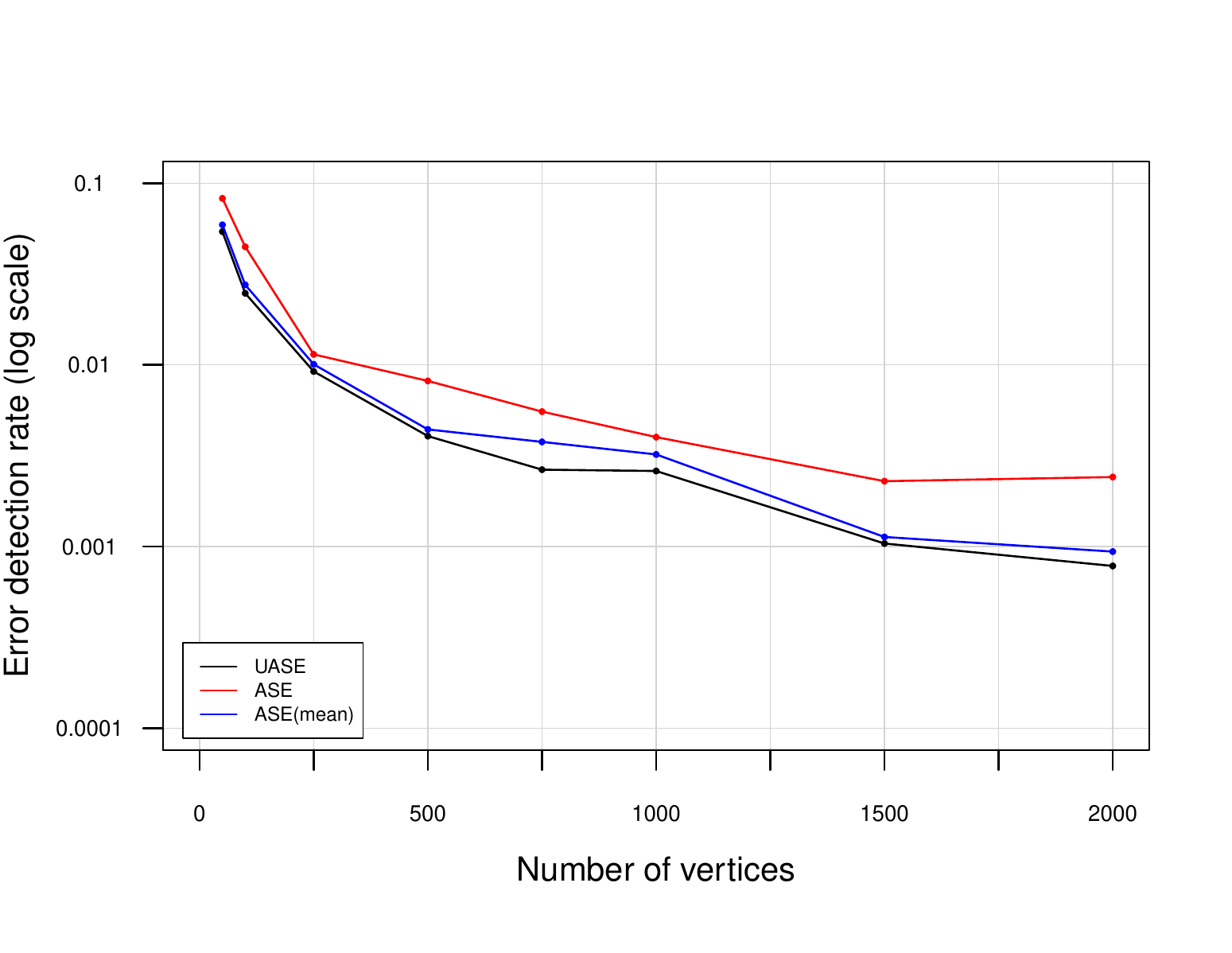}
\caption{Classification error rates for assignment of nodes from a directed graph to the most likely cluster predicted by the Gaussian mixture model obtained via the MCLUST	algorithm. Error rates are plotted on a logarithmic scale. See main text for details.}
\label{Fig_4_directed_GMSBM_community_detection}
\end{figure}w

\subsection{Bipartite multilayer SBMs}
 The flexibility of the GMSBM means that we do not have to restrict our attention to a standard multiple graph embedding; we may restrict our attention to submatrices of the adjacency matrices $\mA^{(r)}$ for each graph in order to focus only on the interactions involving a given set of nodes. As an illustrative example, consider the ``bipartite'' situation in which we have a GMSBM with two sets of latent positions $\mX_i \in \mathbb{R}^2$ and $\mY_j \in \mathbb{R}^3$, with probability matrices \begin{align}
	\mB^{(1)} = \left(\begin{array}{ccccc}0.45&&0.45&&0.52\\0.51&&0.51&&0.54\end{array}\right),~\mB^{(2)} = \left(\begin{array}{ccccc}0.46&&0.51&&0.43\\0.42&&0.47&&0.52\end{array}\right)
\end{align}
and community membership probabilities $\mpi_X = (1/2,1/2)$ and $\mpi_Y = (1/3,1/3,1/3)$. 

Figure \ref{Fig_5_bipartite_GMSBM} plots the left embedding $\mX_\mA$ and the right embeddings $\mY^{(r)}_\mA$ for the pairs $(n,n') = (1000,1500)$, $(2000,3000)$ and $(4000,6000)$, where we use the latent positions $\widetilde{\mX}, \widetilde{\mY}^{(r)} \in \mathbb{R}^2$ posited in Proposition \ref{latent_positions} This demonstrates an important point: while Theorems \ref{consistency} and \ref{CLT} guarantee that the embeddings $\mY^{(r)}_\mA$ provide consistent estimates of the latent positions $\widetilde{\mY}^{(r)}$, we cannot guarantee that they will distinguish between different communities, as the rank of $\widetilde{\mY}^{(r)}$ is equal to the rank of $\mB^{(r)}$, which may be less than the rank of $\mathrm{rank}(\mY)$. In our example, the embedding $\mY^{(1)}_\mA$ fails to distinguish between all three communities, while $\mY^{(2)}_\mA$ does distinguish between them. In general, if the columns of the matrix $\mB^{(r)}$ are distinct, then the latent positions corresponding to different communities will be distinct.

\begin{figure}[ht!]
\centering
\includegraphics[scale=0.64, keepaspectratio=TRUE, trim = 0 30 0 15,clip]{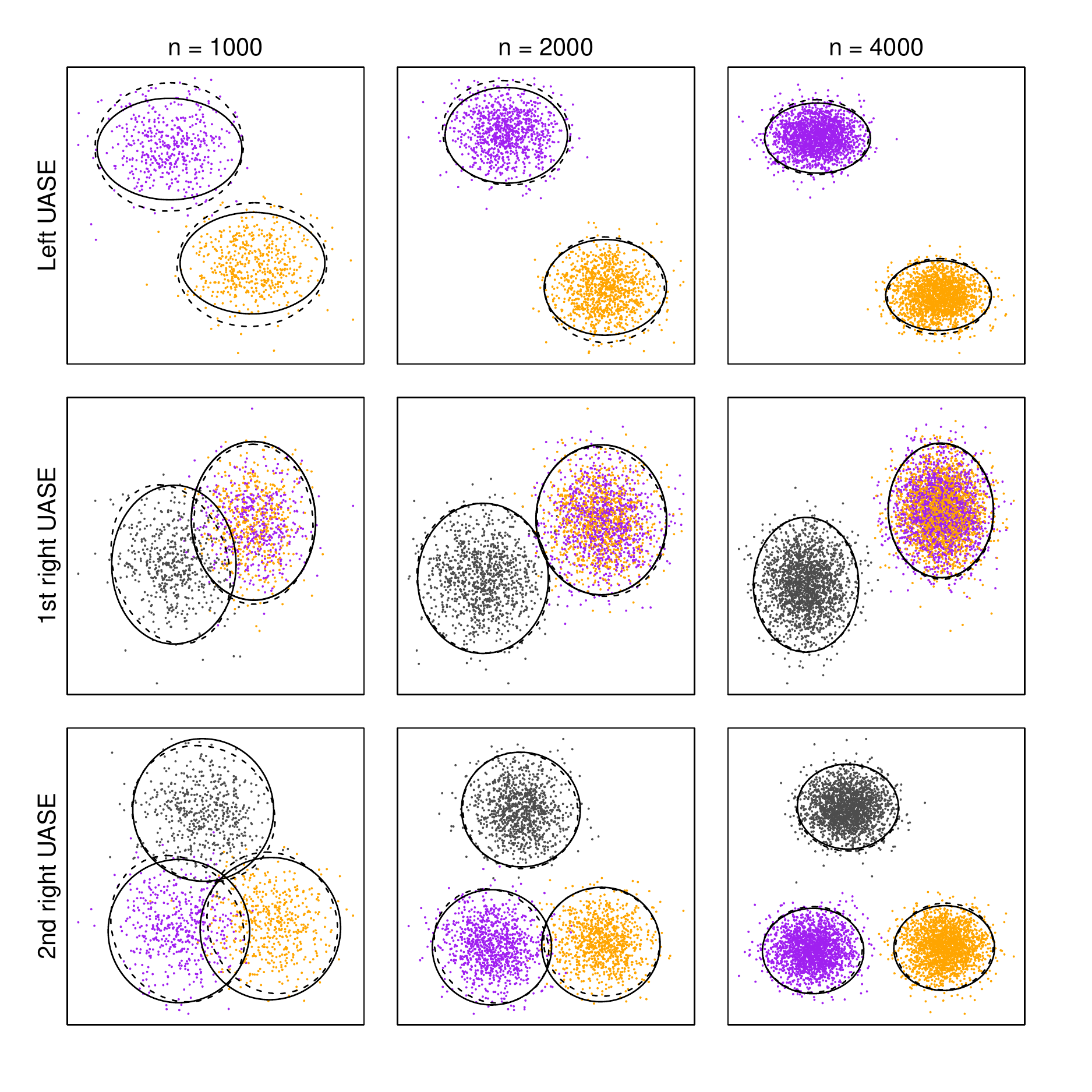}
\caption{\label{Fig_5_bipartite_GMSBM}Plots of the latent position estimates by the UASE of a pair of graphs drawn from a $(2,3)$-community GMSBM. Points are coloured according to the community membership of the corresponding vertices. Ellipses give the $95\%$ level curves of the \textit{empirical} (dashed curves) and \textit{theoretical} (solid curves) distributions specified by Theorem \ref{CLT}.}
\end{figure}

\subsection{Rank considerations} 
 One advantage of studying the MRDPG is that we do \textit{not} require the matrices $\mLam_r$ to have maximal rank; this can lead to situations in which information about latent positions is obscured in individual graphs, but becomes apparent when considering the joint embedding. As an example, consider a dynamic network which can be modeled as a two graph $3$-community multilayer SBM, in which the matrices $\mB^{(r)}$ of probabilities take the form 
\begin{align}
	\mB^{(1)} = \left(\begin{array}{ccccc}p_1&&p_1&&q_1\\p_1&&p_1&&q_1\\q_1&&q_1&&r_1\end{array}\right),~\mB^{(2)} = \left(\begin{array}{ccccc}p_2&&q_2&&q_2\\q_2&&r_2&&r_2\\q_2&&r_2&&r_2\end{array}\right)
\end{align} 
for values $p_i, q_i, r_i \in [0,1]$. We could view this as a simple model for time-dependent snapshots of the communication preferences between two departments in a company, in which a third team moves from the first department to the second in between snapshots, and inherits the communication preferences of the department to which they are assigned at the time. The matrices $\mB^{(r)}$ are both of non-maximal rank, but $\mB = [\mB_1|\mB_2]$ has maximal rank, provided that the $p_i, q_i$ and $r_i$ are distinct.

We demonstrate this with an example. Let $n = 4000$ and suppose that we have three communities, containing $1750$, $500$ and $1750$ nodes respectively.  Let the probability matrices be given by 
\begin{align}
	\mB^{(1)} = \left(\begin{array}{ccccc}0.47&&0.47&&0.39\\0.47&&0.47&&0.39\\0.39&&0.39&&0.56\end{array}\right),~\mB^{(2)} = \left(\begin{array}{ccccc}0.53&&0.61&&0.61\\0.61&&0.44&&0.44\\0.61&&0.44&&0.44\end{array}\right).
\end{align}

Figure \ref{Fig_6_non_maximal_rank_GMSBM} shows the embedded point clouds generated by the individual ASEs and the UASE in this situation. As one would expect, the individual ASEs display only the two communities that one observes at that given snapshot in time, while the UASE clearly displays all three communities..

\begin{figure}[ht]
\centering
\centering
\includegraphics[scale=0.7, trim = 15 0 10 0,clip]{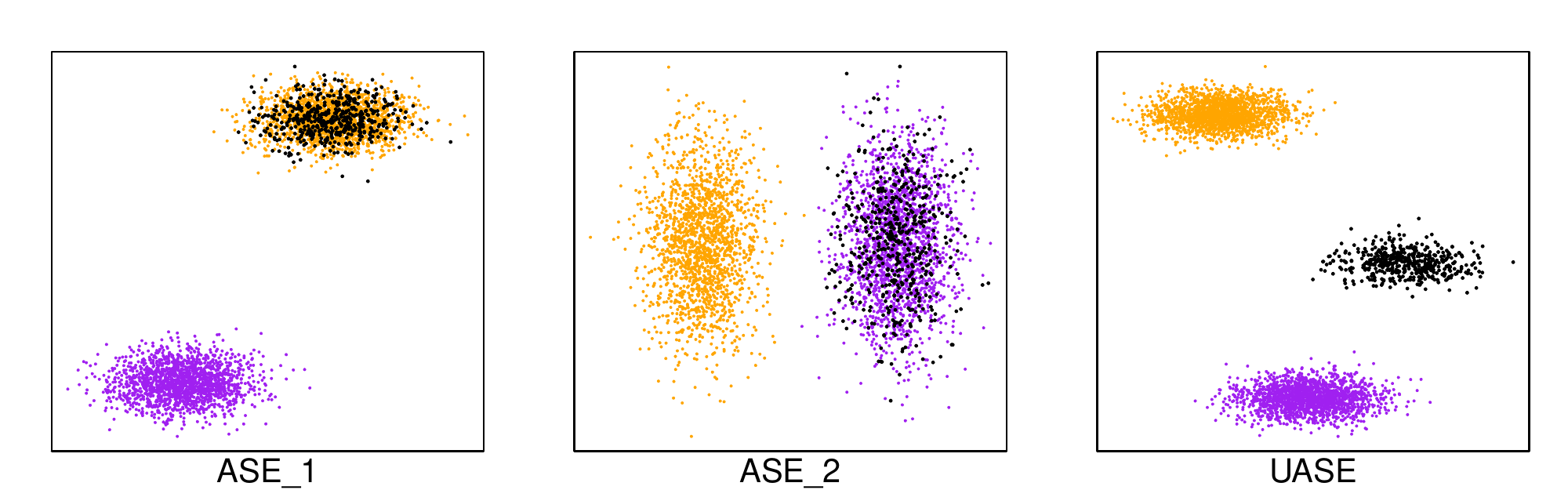}
\caption{\label{Fig_6_non_maximal_rank_GMSBM}Plots of the ASEs of the adjacency matrices $\mA_1$ and $\mA_2$ and the UASE, with nodes coloured according to community membership. Points are coloured according to the community to which their corresponding nodes belong, with points in black representing those nodes that switch between communities.}
\end{figure}

\section{Multiple graph inference: comparison with existing methods}\label{sec:experimental}

The performance of unfolded adjacency spectral embedding is now compared with alternative spectral approaches on the inference tasks of latent position recovery, subspace estimation, model estimation, and two-graph hypothesis testing.  We restrict our attention to the case in which the matrices $\mA^{(r)}$ are true adjacency matrices of graphs, and thus work under the assumption that the latent position matrices $\mY^{(r)}$ are equal to $\mX$ with probability one for all $r$.

\subsection{Recovery of latent positions}

An important estimation problem for the data of a random dot product graph is that of estimating the latent positions $\mX_i$, and so we shall investigate the performance of the MRDPG in the context of such an estimation problem. For comparison, we will consider the \textit{multiple adjacency spectral embedding} (MASE) \cite{AACCPV19}, which is an alternative method of jointly embedding adjacency matrices which follow a model that is essentially identical to the MRDPG, known as the \textit{common subspace independent edge graph model}. In \cite{AACCPV19}, the authors demonstrate that the MASE yields state-of-the-art performance on subsequent inference tasks, ahead of other competing models for studying multiple graph embeddings such as the multi-RDPG \cite{NW18} and MREG \cite{WAVP19} models, making it an ideal method to compare the UASE against.

\begin{definition} (Common Subspace Independent Edge graphs). 
\\Let $\mU = [\mU_1|\cdots|\mU_n]^\top \in \mathbb{R}^{n \times d}$ have orthonormal columns, and let $\mR^{(1)}, \ldots, \mR^{(k)} \in \mathbb{R}^{d \times d}$ be symmetric matrices such that, for each $r \in \{1,\ldots,k\}$, $\mU_i^\top \mR^{(r)}\mU_j \in [0,1]$ for all $i, j \in \{1,\ldots,n\}$.  The random adjacency matrices $\mA^{(1)}, \ldots, \mA^{(k)}$ are said to be jointly distributed according to the common subspace independent-edge graph model with bounded rank $d$ and parameters $\mU$ and $\mR^{(1)}, \ldots, \mR^{(k)}$ if for each $r \in \{1, \ldots, k\}$, conditional upon $\mU$ and $\mR^{(r)}$ we have $\mA^{(r)}_{ij} \sim \mathrm{Bern}\bigl(\mP^{(r)}_{ij}\bigr)$, where $\mP^{(r)} = \mU \mR^{(r)} \mU^\top$, in which case we write $(\mA^{(1)},\ldots,\mA^{(k)}) \sim \mathrm{COSIE}(\mU; \mR^{(1)}, \ldots, \mR^{(k)})$.\end{definition}

For all intents and purposes, the COSIE and MRDPG models are equivalent. Any COSIE model gives rise to a MRDPG by simply setting the latent positions $\mX_i$ to be equal to the rows $\mU_i$, and the matrices $\mLam_r = \mR^{(r)}$ for each $r$. Conversely, given a MRDPG such that the matrix $\mX$ of latent positions is of rank $d$, we can define $\mU = \mX(\mX^\top\mX)^{-1/2}$, where we have taken the positive-definite matrix square root of the matrix $\mX^\top\mX$. It is clear that the columns of $\mU$ are orthonormal, and we obtain a COSIE model by setting $\mR^{(r)} = (\mX^\top\mX)^{1/2}\mLam_r(\mX^\top\mX)^{1/2}$. In both cases the two definitions of the matrices $\mP^{(r)}$ coincide.

\begin{definition}(Multiple adjacency spectral embedding). 
\\Let $(\mA^{(1)},\ldots,\mA^{(m)}) \sim \mathrm{COSIE}(\mU; \mR^{(1)}, \ldots, \mR^{(k)})$. For each $r \in \{1,\ldots,k\}$ let $d_r$ denote the rank of $\mR^{(r)}$, let $\mX_{\mA^{(r)}} \in \mathbb{R}^{n \times d_r}$ be the adjacency spectral embedding of $\mA^{(r)}$ and define the matrix of concatenated spectral embeddings $\mM_\mA = [\mX_{\mA^{(1)}} | \cdots | \mX_{\mA^{(k)}}]$. The multiple adjacency spectral embedding of $\mA^{(1)}, \ldots, \mA^{(k)}$ is the matrix $\widehat{\mU}_\mA \in \mathbb{R}^{n \times d}$ containing the $d$ leading left singular vectors of $\mM_\mA$. \end{definition}

 We note that it is possible to use the MASE to produce estimates for the latent positions in a MRDPG in an analogous way to the method used for the UASE. Let $\widehat{\mX}_\mA = \widehat{\mU}_\mA \mSig_\mA$, where $\mSig_\mA \in \mathbb{R}^{d \times d}$ is the diagonal matrix of the leading $d$ singular vectors of $\mM_\mA$, and define $\mM_\mP$, $\widehat{\mU}_\mP$ and $\widehat{\mX}_\mP$ analogously for the matrices $\mP^{(r)}$. Firstly, we note that adding columns of zeros to any of the $\mX_{\mP^{(r)}}$ will not alter $\widehat{\mX}_\mP$, and so we may assume without loss of generality that the matrices $\mX_{\mP^{(r)}} \in \mathbb{R}^{n \times d}$, and thus that there exist matrices $\mL^{(r)} \in \mathbb{R}^{d \times d}$ of rank $d_r$ such that $\mX_{\mP^{(r)}} = \mX\mL^{(r)}$. One can prove the existence of a matrix $\mL \in \mathrm{GL}(d)$ such that $\widehat{\mX}_\mP = \mX\mL$, and so performing a Procrustes-style alignment between $\widehat{\mX}_\mA$ and $\widehat{\mX}_\mP$ and multiplying by $\mL^{-1}$ produces a set of points that in practice are a good approximation to the latent positions $\mX_i$. 

We first tested the performance of the two embeddings on graphs of different sizes by performing, for each value of $n \in \{10,25,50,75,100,250,500,750,1000\}$, $1000$ independent trials in which the latent positions $\mX_i$ are drawn i.i.d. from a Dirichlet distribution with parameter $(1,1,1)^\top \in \mathbb{R}^3$. In each trial, we generate two graphs $\mA^{(1)}, \mA^{(2)} \in \mathbb{R}^{n \times n}$, where $\mA^{(r)}_{ij} \sim \mathrm{Bern}(\mX_i^\top\mLam_r\mX_j)$ for $i < j$, where each $\mLam_r$ is a randomly chosen matrix. We then calculate estimates $\widehat{\mX}$ using the UASE and MASE as described previously, and compare the average mean squared error $\frac{1}{n}\sum_{i=1}^n|\widehat{\mX}_i-\mX_i|^2$ of the two embeddings across the $1000$ trials. 

 We then investigated the effect of changing the number of graphs to be embedded on the accuracy of each embedding type. Fixing $n = 750$, we again performed $1000$ independent trials as above for $m = 2, \ldots, 10$ embeddings, using the same procedure for generating the latent positions and adjacency matrices, and again compared the average mean squared error between the estimated and actual latent positions.

 Figure \ref{Fig_7_latent_position_recovery} plots the results of the two experiments. While the MASE outperforms the UASE for values of $n < 75$ (with the joint embedding performing significantly worse for $n < 50$) the UASE clearly demonstrates superior accuracy as the size of the graph grows, a trend which continues as we increase the number of graphs to be embedded.

\begin{figure}[ht]
\centering
\includegraphics[scale=0.68, trim = 10 0 30 10,clip]{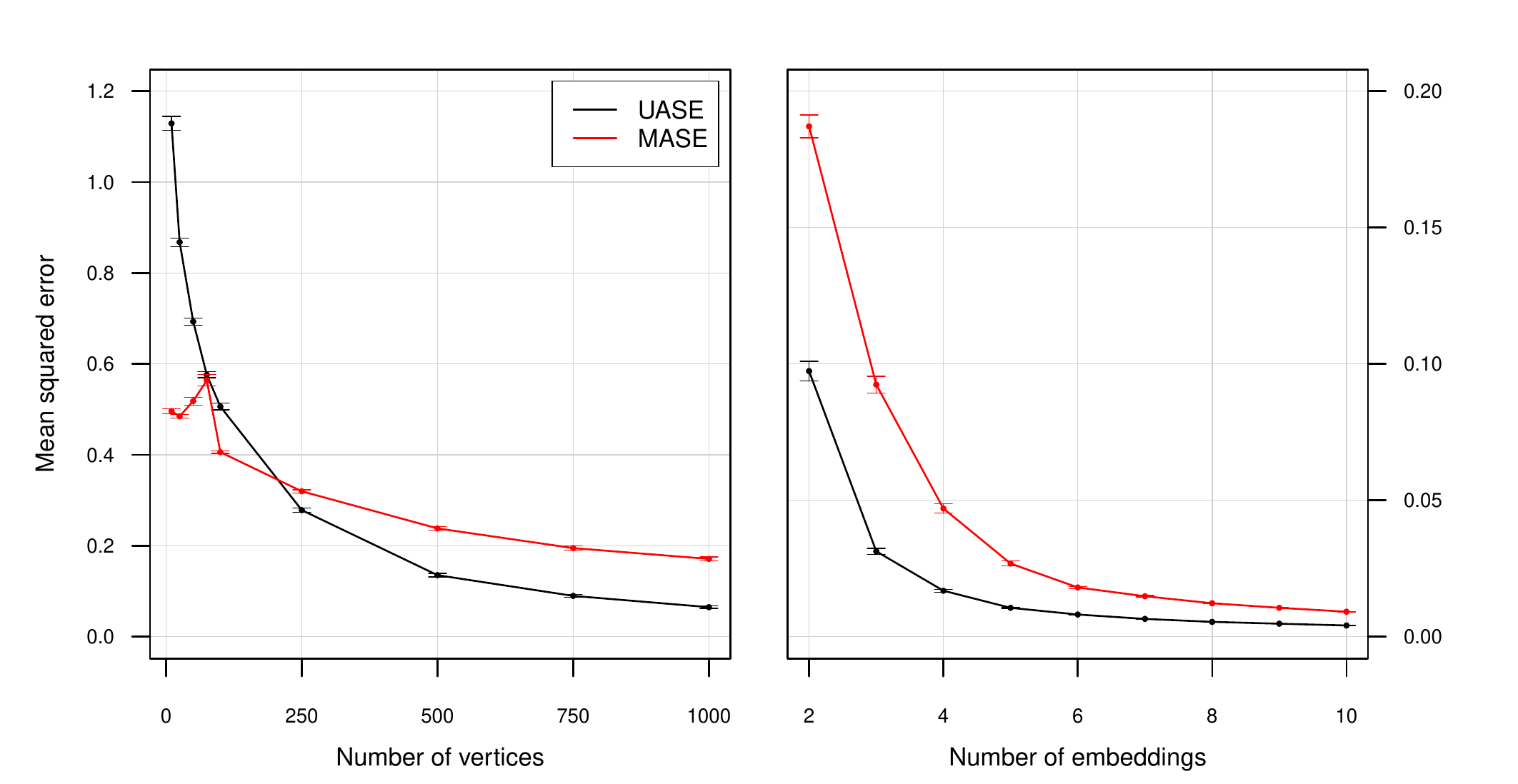}
\caption{\label{Fig_7_latent_position_recovery} Mean squared error in recovery of latent positions in a 2-graph MRDPG model as a function of the number of vertices (left-hand graph) and the number of embeddings (right-hand graph).}
\end{figure}

\subsection{Estimation of invariant subspaces}

 We next investigate the performance of UASE at estimating the invariant subspace $\mU$ in the COSIE model. We do this by setting the matrix $\mX$ of latent positions in the MRDPG to be equal to $\mU$, and considering the \textit{unscaled} UASE, $\mU_\mA$. Unlike the scaled embedding, which approximates the latent positions only up to linear transformation, the unscaled UASE approximates the invariant subspace $\mU$ up to \textit{orthogonal} transformation. Indeed, from our results for the scaled embedding the matrix $\mU_\mP = \mU \mQ_\mX$ for some $\mQ_\mX \in \mathrm{GL}(d)$, whence the requirement that both $\mU$ and $\mU_\mP$ have orthonormal columns forces $\mQ_\mX$ to in fact belong to $\mathbb{O}(d)$, while the transformation applied in the Procrustes alignment between $\mU_\mA$ and $\mU_\mP$ is by definition orthogonal.

We can measure the distance between the estimate $\mU_\mA$ and the true invariant subspace $\mU$ using the spectral norm of the difference between the projections $\|\mU_\mA\mU_\mA^\top - \mU\mU^\top\|$ (and similarly for the estimate $\widehat{\mU}$ produced by the MASE). This distance is zero only when there exists an orthogonal matrix $\mW \in \mathbb{O}(d)$ such that $\mU_\mA = \mU\mW$ (respectively $\widehat{\mU} = \mU\mW$). 

 As in the previous example, we investigated the effect of changing both the size of the graphs and the number of graphs to be embedded on the performance of the UASE and MASE. Again, we began by performing $1000$ independent trials for each value of $n \in \{10,25,50,75,100,250,500,750,1000\}$, but this time the adjacency matrices $\mA^{(1)}$ and $\mA^{(2)}$ were distributed according to a $3-$community multilayer stochastic block model, where the matrices $\mB^{(r)}$ were randomly chosen, and vertices assigned to a community uniformly at random, discarding any trials for which the matrix $\mX$ of community assignments was not of full rank. We then calculated and compared the average of the subspace distances $\|\mU_\mA\mU_\mA^\top - \mU\mU^\top\|$ and $\|\widehat{\mU}\widehat{\mU}^\top - \mU\mU^\top\|$ across each of the $1000$ trials. For the second experiment, we again fixed $n = 750$, performed $1000$ independent trials as above for $k = 2, \ldots, 10$ embeddings, and compared the subspace distance between the estimated and actual invariant subspaces.

\begin{figure}[ht]
\centering
\includegraphics[scale=0.68, trim = 10 0 30 10,clip]{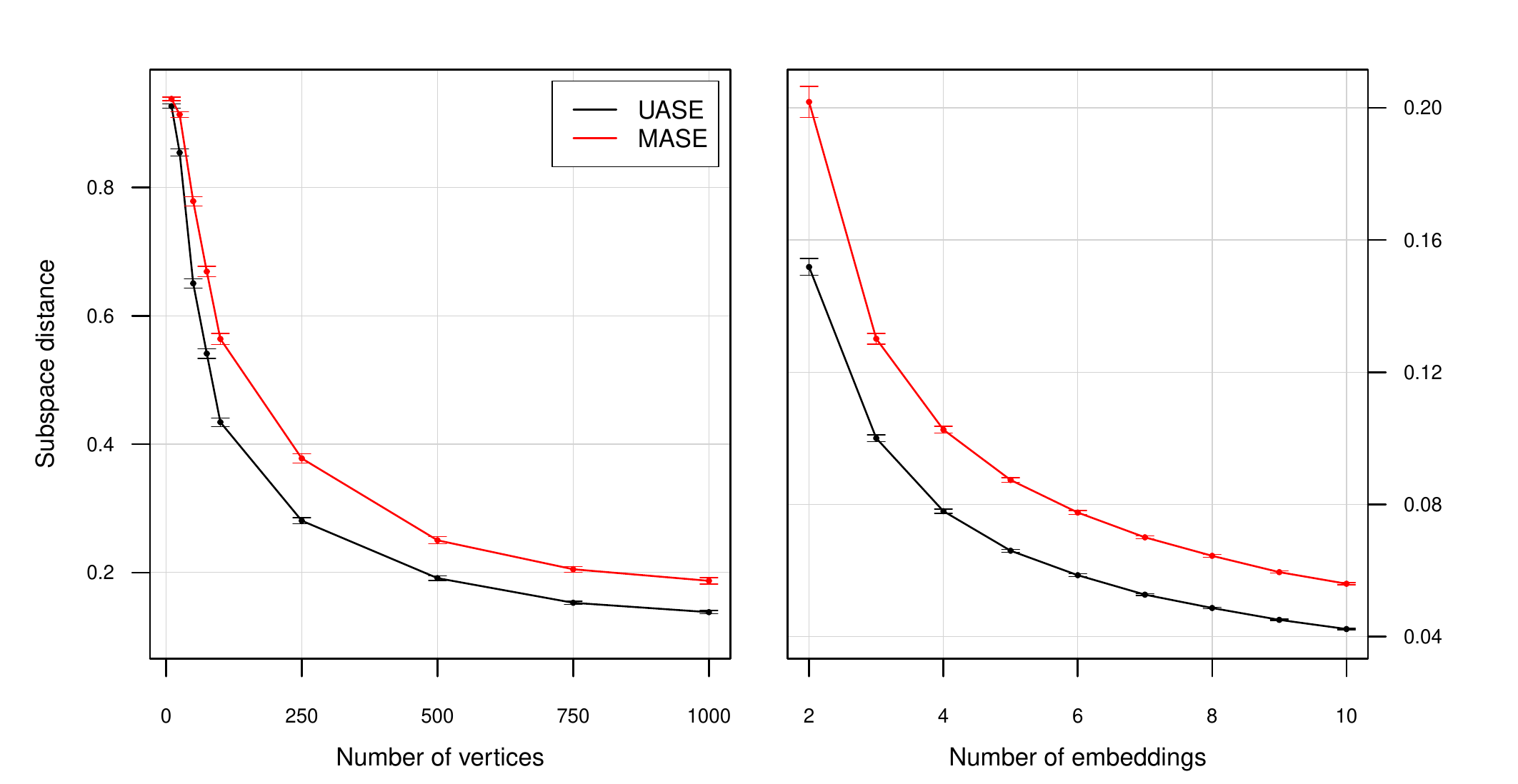}
\caption{\label{Fig_8_subspace_estimation} Average distance between the estimated and actual invariant subspaces in a $3$-community multilayer stochastic block model as a function of the number of vertices (left-hand graph) and the number of embeddings (right-hand graph).}
\end{figure}

Figure \ref{Fig_8_subspace_estimation} plots the results of the two experiments. For this task, although the performance of the two embedding types is almost indistinguishable for very small graphs, as the number of vertices grows the UASE consistently outstrips the MASE. As in the previous example, increasing the number of embedded graphs results in greater accuracy for both methods, where again the UASE offers the best performance of the two.

\subsection{Model estimation}\label{model_estimation_section}

As a final comparison of the UASE and MASE methods, we investigate the efficiency of both at the task of estimating the underlying matrices $\mP^{(r)}$ in the MRDPG and COSIE models, which is of particular practical interest for link prediction tasks. To establish an appropriate estimate, we first consider the case of the standard GRDPG (that is, when $k = 1$). In this case, an estimate $\widehat{\mP}$ for the matrix $\mP$ can be obtained by setting $\widehat{\mP} = \mX_\mA \mI_{p,q} \mX_\mA^\top$. Note that due to orthogonality of the singular vectors, the matrix $\mX_\mA \in \mathbb{R}^{n \times d}$ of the leading $d$ left singular vectors of $\mA$ is the projection of the \textit{full} matrix of left singular vectors onto the $d$-dimensional subspace spanned by $\mU_\mA$. Since this projection corresponds to left multiplication by the matrix $\mU_\mA\mU_\mA^\top$, we have the alternative description $\widehat{\mP} = \mU_\mA\mU_\mA^\top\mA\mU_\mA\mU_\mA^\top$. 

Returning to the general case, we obtain an estimate $\widehat{\mP}^{(r)} = \mU_\mA\mU_\mA^\top \mA^{(r)} \mU_\mA\mU_\mA^\top$ for the matrix $\mP^{(r)}$ for each $r \in \{1,\ldots,k\}$ using the unscaled UASE. For the MASE, we use the matrix $\widehat{\mU}\widehat{\mU}^\top \mA^{(r)}\widehat{\mU}\widehat{\mU}^\top$ as our estimate. For each of the trials in the previous example, we calculated these estimates, and measured the model estimation error in each case using the normalised mean squared error 
\begin{align}
	\frac{\|\widehat{\mP}^{(r)} - \mP^{(r)}\|_F}{\|\mP^{(r)}\|_F}.
\end{align}

Figure \ref{Fig_9_model_estimation} plots the results of the two experiments, in which we see that once again the UASE consistently demonstrates greater accuracy than the MASE for all but the smallest of graphs, and for all numbers of embedded graphs.

\begin{figure}[ht]
\centering
\includegraphics[scale=0.68, trim = 10 0 30 10,clip]{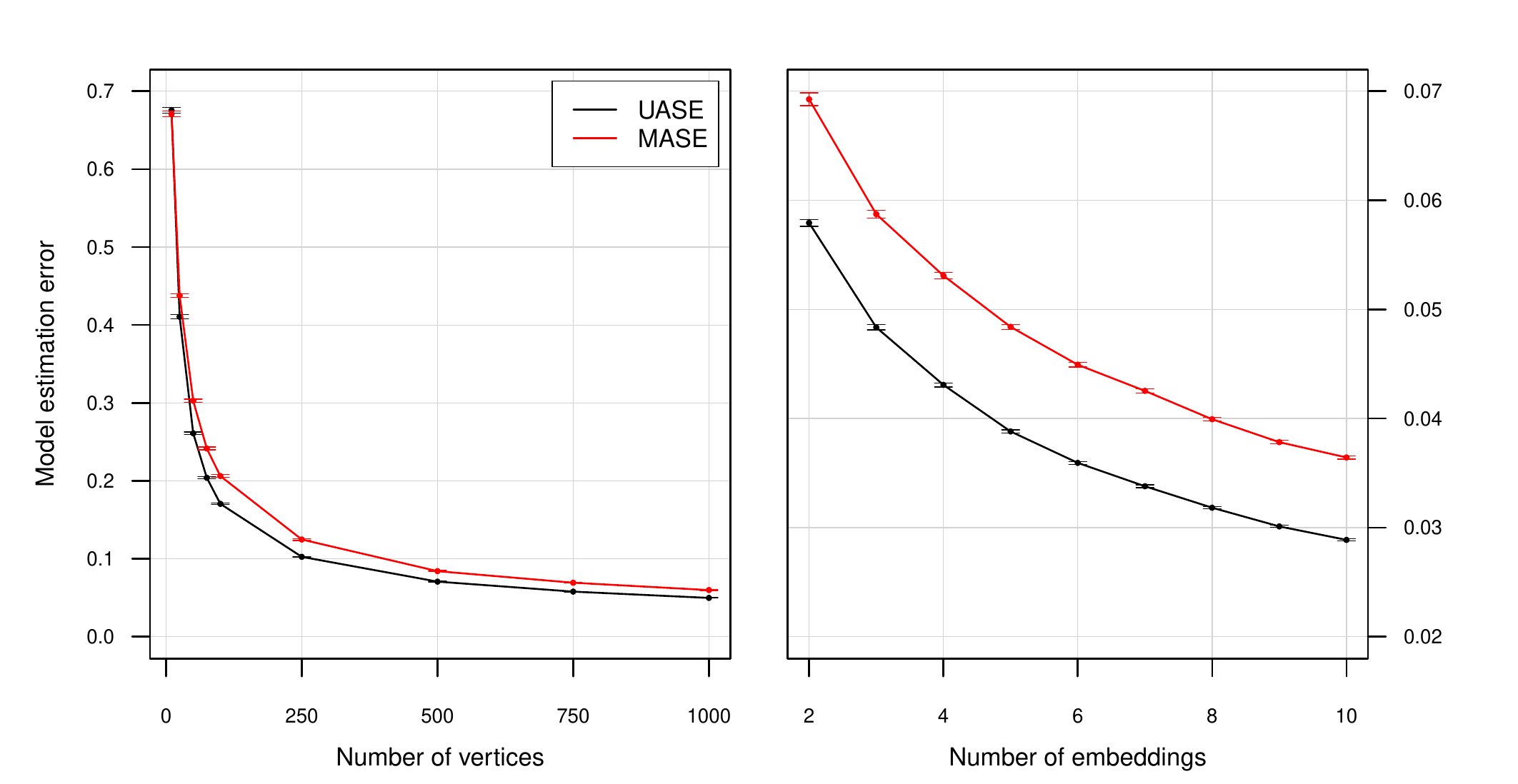}
\caption{\label{Fig_9_model_estimation} Model estimation error for the UASE and MASE in a $3$-community multilayer stochastic block model as a function of the number of vertices (left-hand graph) and the number of embeddings (right-hand graph).}
\end{figure}

\subsection{Two-graph hypothesis testing}\label{two-graph-testing}
When $\mA^{(1)}, \ldots, \mA^{(k)}$ are identically distributed, the right embeddings $\mY^{(r)}_\mA$ are identically distributed too and each subject to the \textit{same} unidentifiable linear transformation (Corollaries~\ref{identical_MRDPG_consistency} and~\ref{identical_MRDPG_CLT}). It is therefore natural to consider the effectiveness of the UASE at testing the semiparametric hypothesis that two observed graphs are drawn from the same underlying latent positions. This problem was considered for the omnibus embedding in \cite{LATLP17}, and we shall use the framework established there to test the UASE. Suppose, then, that we have points $\mX_1, \ldots, \mX_n, \mY_1, \ldots, \mY_n \in \mathbb{R}^d$, and that we have two graphs $G_1$ and $G_2$ whose adjacency matrices $\mA^{(1)}$ and $\mA^{(2)}$ satisfy $\mA^{(1)}_{ij} \sim \mathrm{Bern}(\mX_i^\top\mI_{p,q}\mX_j)$ and $\mA^{(2)}_{ij} \sim \mathrm{Bern}(\mY_i^\top\mI_{p,q}\mY_j)$. The UASE allows us to test the hypothesis: \begin{align}
	H_0: \mX_i = \mY_i \quad\forall i \in \{1,\ldots,n\}
\end{align}
by comparing the right embeddings $\mY^{(1)}_\mA$ and $\mY^{(2)}_\mA$. If $H_0$ holds, then the rows of the $\mY^{(r)}_\mA$ are identically (although not independently) distributed, whereas if $H_0$ fails to hold then for some $k$ the $k$th row of $\mY^{(1)}_\mA$ and $\mY^{(2)}_\mA$ should be distributionally distinct.

The framework used in \cite{LATLP17} to test this hypothesis, which we shall repeat here, is as follows: we begin by drawing $\mX_1,\ldots,\mX_n, \mZ_1, \ldots, \mZ_n \in \mathbb{R}^3$ identically according to a Dirichlet distribution with parameter $\alpha = (1,1,1)^\top$, select a subset $I$ of some fixed size uniformly at random among all such subsets of $\{1,\ldots,n\}$, and define 
\begin{align}
\mY_i = \left\{\begin{array}{cccc}\mZ_i&&&\mathrm{if}~ i \in I\\\mX_i&&&\mathrm{otherwise}\end{array}\right.
\end{align}

We generate two graphs $\mathcal{G}_1$ and $\mathcal{G}_2$ with adjacency matrices $\mA^{(1)}$ and $\mA^{(2)}$ satisfying $\mA^{(1)}_{ij} \sim \mathrm{Bern}(\mX_i^\top\mX_j)$ and $\mA^{(2)}_{ij} \sim \mathrm{Bern}(\mY_i^\top\mY_j)$, and estimate the latent positions $\widehat{\mX}$ and $\widehat{\mY}$ of the two graphs by using the right embeddings as described above, and (in the case of the omnibus embedding) the first and last $n$ rows of the spectral embedding of the matrix 
\begin{align}
	\mM = \left(\begin{array}{cccc}\mA^{(1)}&&&\tfrac{1}{2}(\mA^{(1)} + \mA^{(2)})\\\tfrac{1}{2}(\mA^{(1)} + \mA^{(2)})&&&\mA^{(2)}\end{array}\right).
\end{align} 
We note that this is only possible due to our prior knowledge of the matrix $\mP$, which allows us to construct the required transformations.

In both cases we use the test statistic $T = \sum_{i=1}^n \|\widehat{\mX}_i - \widehat{\mY}_i\|^2$; and accept or reject based on an estimate of the critical value of $T$ under the null hypothesis obtained by using $2000$ Monte Carlo iterates to estimate the distribution of $T$.

\begin{figure}[ht]
\centering
\includegraphics[width=\textwidth,keepaspectratio=TRUE,trim = 10 0 8 0,clip]{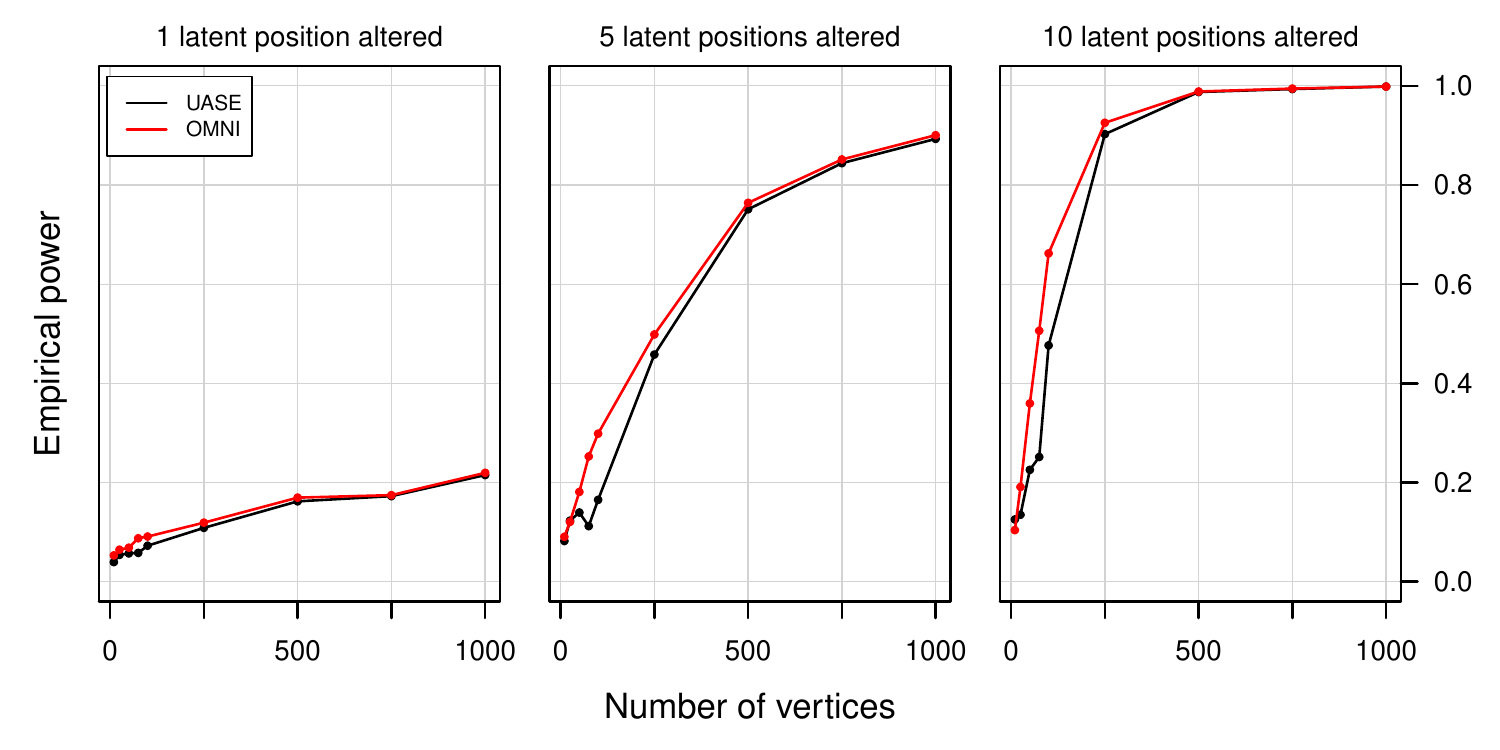}
\caption{\label{Fig_10_hypothesis_test}Empirical power of the UASE (black) and omnibus (red) tests to detect when the two graphs
being tested differ in the specified number of their latent positions. Each point is the proportion of 2000 trials for which the given technique correctly rejected the null hypothesis.}
\end{figure}

Figure \ref{Fig_10_hypothesis_test} shows the power of each method for testing the null hypothesis for different sized graphs and for different numbers of altered latent positions, by calculating the proportion (out of 2000 trials conducted for each sized graph) of trials for which we correctly reject the null hypothesis. For smaller graphs, the omnibus embedding provides the most effective method (although there is not much difference between the two where only one latent position is altered - however in this case the empirical power of both methods does not exceed 0.25). For larger graphs, particularly those with more than 500 vertices, the two methods are almost indistinguishable, and in such cases the UASE might be preferred based on size considerations, due to only requiring an $n \times kn$ rather than an $kn \times kn$ matrix.

\section{Real data: Link prediction on a computer network}\label{sec:real_data}

\subsection{Dynamic link prediction}\label{subsec:dynamic_link_prediction}

The Los Alamos National Laboratory computer network \cite{TKH18} was studied in \cite{RPTC18}, in which it was demonstrated empirically that the disassortative connectivity behaviour inherent in the network leads to the GRDPG offering a marked modelling improvement over the RDPG in the task of out-of-sample link prediction between computers in the network. For a large-scale dynamic network such as this, the MRDPG offers the possibility of further refinement by allowing us to consider multiple ``snapshots'' of communication behaviour at different points in time simultaneously.

As an example, we extract a ten minute sample at random from the ``Network Event Data'' dataset, which we divide into two separate five minute samples. From the first sample we generate five graphs, each one describing the communication behaviour of the computers in the network over a period of one minute, by assigning each IP address to a node (with this assignation being kept consistent across all graphs), and recording an edge between two nodes if the corresponding edges are observed to communicate at least once within this period, and then construct the corresponding adjacency matrices $\mA^{(r)}$. Setting our embedding dimension $d = 10$ (an admittedly arbitrary choice) we then generate estimates $\widehat{\mP}^{(r)}$ for the probability matrices as described in Section \ref{model_estimation_section}. We then use the average of these matrices to give us estimates of the probabilities of a link being generated between any given pair of computers.

In a similar manner, we generate estimates of the link probabilities for the mean adjacency matrix $\bar{\mA}$. We also construct adjacency matrices $\mA_{[1]}$ and $\mA_{[5]}$ from the connectivity graphs for the first minute and the full five minute period respectively (both of which follow a standard GRDPG) and generate link probability estimates accordingly. 

\begin{figure}[ht!]
\centering
\includegraphics[scale=0.68, trim = 0 0 0 30, clip]{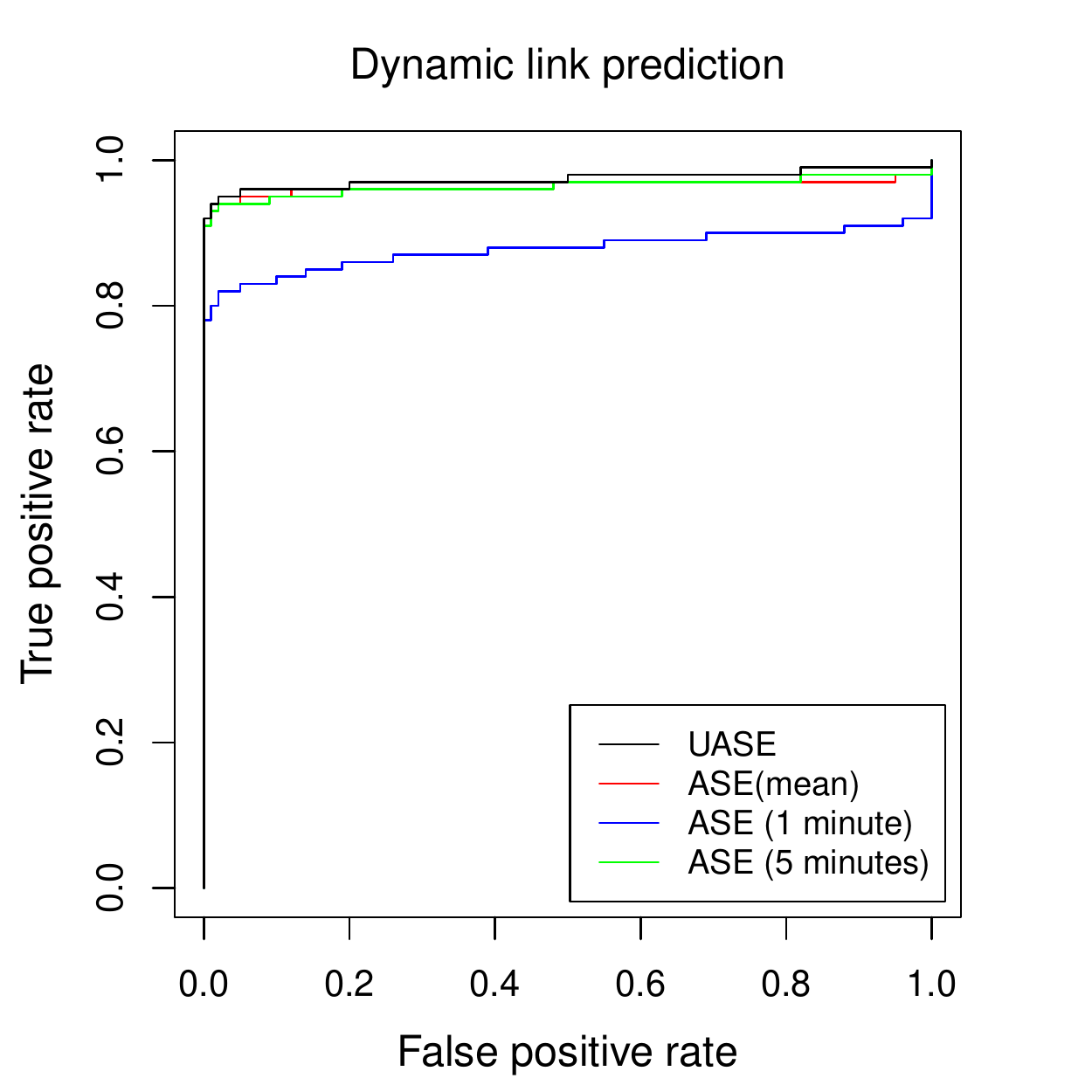}
\caption{\label{Fig_11_dynamic_link_prediction}Receiver Operating Characteristic curves for the UASE (black), mean embedding (red) and ASE (blue and green) methods for out-of-sample link prediction on the Los Alamos National Laboratory computer network. See main text for details.}
\end{figure}

Using these estimates, we attempt to predict which \textit{new} edges will occur within the second five minute window, disregarding those involving new nodes. Figure \ref{Fig_11_dynamic_link_prediction} shows the receiver operating characteristic (ROC) curves for each model and for each port, where we treat the prediction task as a binary classification problem whose outcomes are either the presence or absence of an edge between nodes, which we predict by thresholding the estimated link probabilities. As one would expect, using the ASE of the graph corresponding to only a single minute of communication (the blue curve) produces the least accurate predictions, but the UASE (black), mean embedding (red) and the ASE for a five minute sample (green) all produce similar results, with the UASE slightly outperforming the other two methods. This can be confirmed numerically by calculating the area under each ROC curve (AUC) which is equivalent to the probability that a given classifier will rank a randomly chosen positive instance higher than a randomly chosen negative instance \cite{F06}. We calculate that the UASE has an AUC of 0.9734, which is a small improvement over the mean embedding and the $5$-minute ASE, which have AUC values of 0.9627 and 0.9635 respectively (the $1$-minute ASE, by contrast, yields an AUC of 0.8767).

\subsection{Port-specific link prediction}

 The LANL network data presents the opportunity to demonstrate another significant improvement offered by the MRDPG over the GRDPG, namely its ability to integrate data from sources which do not necessarily behave similarly. Each communication within the LANL network passes through a source and destination port, the latter of which indicates the type of service being used, and it is natural to expect that different services may exhibit different communication behaviours.

 We consider the first five minute sample from the previous section. During the first minute alone, there are a total of 121,737 (not necessarily unique) communications between 10,762 computers, with 4,379 different destination ports being used. Of these, the 8 most commonly-used ports account for over 75\% of communications, and Table \ref{Table_1_netflow_port_data} lists these, together with the purpose to which each port is assigned.

\begin{table}[ht!]
\centering\footnotesize\caption{\label{Table_1_netflow_port_data}Purpose and proportion of traffic utilizing the 8 most frequently used ports during 1 minute of activity on the Los Alamos National Laboratory computer network.}
\begin{tabular}{ccc}
\hline
Port & Purpose & Proportion of traffic\\
\hline
53& DNS &27.7\%\\
443& HTTPS&14.8\%\\
80& HTTP&11.9\%\\
514& Syslog&7.2\%\\
389& Lightweight Directory Access Protocol&4.3\%\\
427& Service Location Protocol&4.1\%\\
88& Kerberos authentication system&3.4\%\\
445& Microsoft--DS Active Directory&1.8\%\\\hline

\end{tabular}\end{table}

For each of these 8 ports, we generate a graph of the communications made between computers within the network through this specific port over the first minute of our sample as in the previous example. Figure \ref{Fig_12_adjacency_matrix_plots} visualizes the adjacency matrix of each of these graphs as a 2-dimensional plot, together with the adjacency matrix of the full network graph. 

\begin{figure}[ht!]
\centering
\subfloat{\includegraphics[page=1,scale=0.8,trim=10 0 10 0,clip]{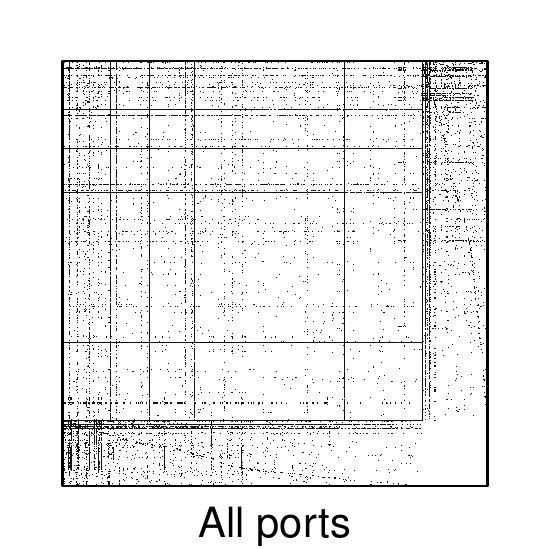}}
\subfloat{\includegraphics[page=2,scale=0.8,trim=10 0 10 0,clip]{Fig_12_adjacency_matrix_plots.pdf}}
\subfloat{\includegraphics[page=3,scale=0.8,trim=10 0 10 0,clip]{Fig_12_adjacency_matrix_plots.pdf}}\\\vspace{-1.5\baselineskip}
\subfloat{\includegraphics[page=4,scale=0.8,trim=10 0 10 0,clip]{Fig_12_adjacency_matrix_plots.pdf}}
\subfloat{\includegraphics[page=5,scale=0.8,trim=10 0 10 0,clip]{Fig_12_adjacency_matrix_plots.pdf}}
\subfloat{\includegraphics[page=6,scale=0.8,trim=10 0 10 0,clip]{Fig_12_adjacency_matrix_plots.pdf}}\\\vspace{-1.5\baselineskip}
\subfloat{\includegraphics[page=7,scale=0.8,trim=10 0 10 0,clip]{Fig_12_adjacency_matrix_plots.pdf}}
\subfloat{\includegraphics[page=8,scale=0.8,trim=10 0 10 0,clip]{Fig_12_adjacency_matrix_plots.pdf}}
\subfloat{\includegraphics[page=9,scale=0.8,trim=10 0 10 0,clip]{Fig_12_adjacency_matrix_plots.pdf}}
\caption{\label{Fig_12_adjacency_matrix_plots}Visualization of adjacency matrices showing connections between computers on the Los Alamos National Laboratory computer network during 1 minute of activity. The top-left image shows all connections during this time, while the remaining images show only connections via the specified port.}
\end{figure}

As before, we calculate estimates of the link probabilities for each port using the UASE, but now rather than averaging them we consider each port individually. For comparison, we also estimate link probabilities for each individual port using the corresponding GRDPG, and then use both estimates to attempt to predict which \textit{new} edges will occur within the remainder of our five-minute window. Figure \ref{Fig_13_port_specific_link_prediction} shows the ROC curves for each model and for each port, while Table \ref{Table_2_netflow_port_AUC} gives the AUC values for each curve. We note that for Port 53 (the busiest port) the standard ASE actually outperforms the UASE, but for every other port the UASE is the superior method, offering a significant improvement over the ASE for the less active ports.

\begin{table}[ht!]
\centering\footnotesize
\caption{\label{Table_2_netflow_port_AUC}AUC values for the ROC curves in Figure \ref{Fig_13_port_specific_link_prediction}.}
\begin{tabular}{ccccccccc}
\hline
Embedding&Port 53&Port 443&Port 80&Port 514&Port 389&Port 427&Port 88&Port 445\\
\hline
UASE&0.8391&0.7850&0.9010&0.8931&0.8534&0.8580&0.8949&0.9836\\
ASE&0.8568&0.6370&0.7185&0.7806&0.5532&0.6566&0.5668&0.5720\\\hline
\end{tabular}\end{table}

\begin{figure}[ht!]
\centering
\centering
\includegraphics[scale=0.68]{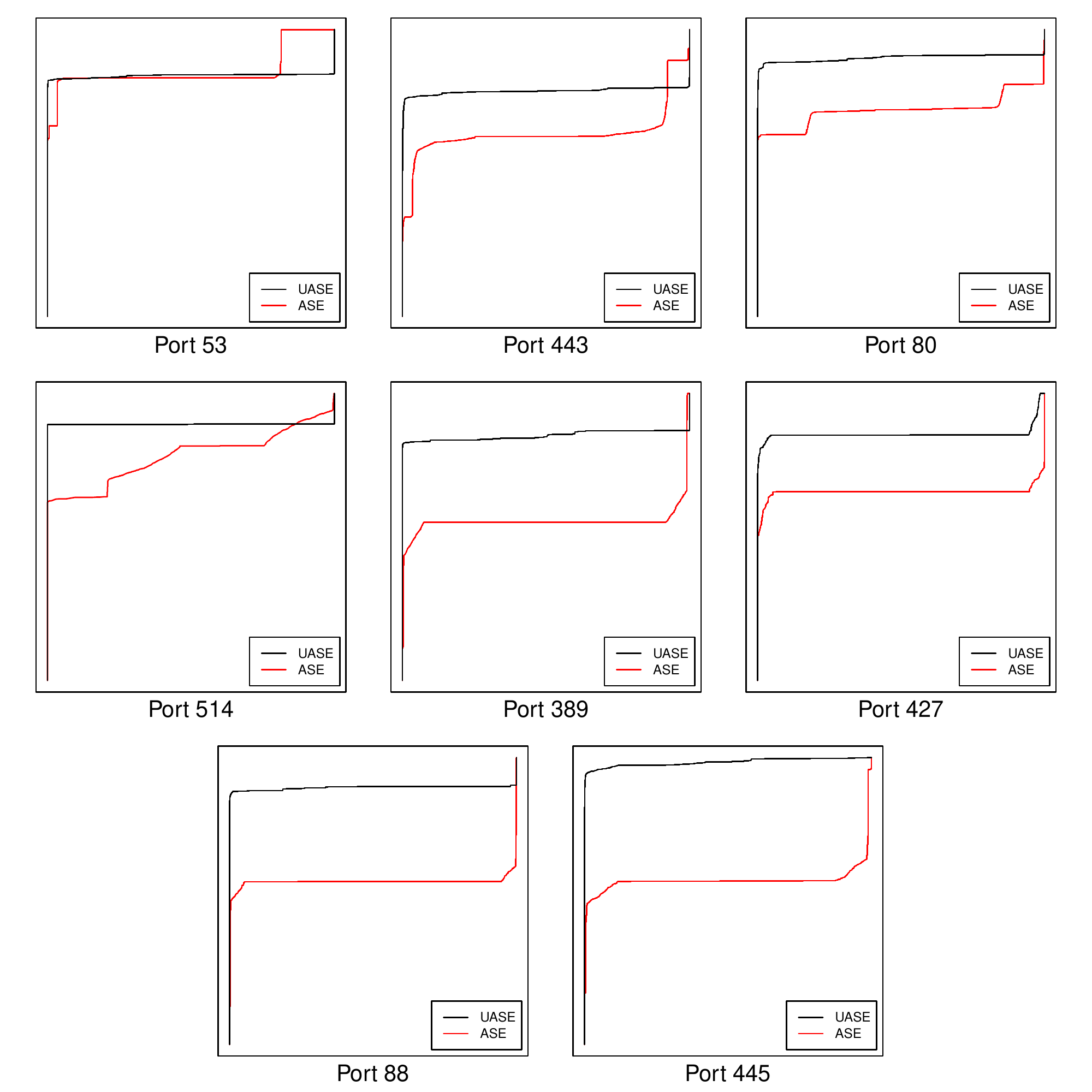}
\caption{\label{Fig_13_port_specific_link_prediction}Receiver Operating Characteristic curves for the UASE (black) and ASE (red) for out-of-sample link prediction via the specified port on the Los Alamos National Laboratory computer network. See main text for details.}
\end{figure}

\subsection{Link prediction using mixed data sources}

 Our final example demonstrates how combining data from graphs with \textit{different} nodes can inform our knowledge of a common subset. We begin by extracting a five minute sample at random from the ``Network Event Data'' dataset similarly to Section \ref{subsec:dynamic_link_prediction}, and construct the ASE of the adjacency matrix $\mA^{(1)}$ obtained by restricting our attention to the first minute of computer-to-computer communication. For the UASE, we augment this data by constructing the submatrix $\mA^{(2)}$ of computer-to-port communications, in which $\mA^{(2)}_{ij} = 1$ if there is a connection involving the $i$th computer during this first minute for which the $j$th port is the destination port. We then generate estimates of the link probabilities for the ASE of $\mA^{(1)}$ and UASE of $\mA = [\mA^{(1)}|\mA^{(2)}]$, and generate link probability estimates accordingly. Figure \ref{Fig_14_mixed_data_link_prediction} plots the resulting ROC curves, in which we observe that augmenting the computer-to-computer connectivity data with computer-to-port connectivity data yields an improvement in prediction power (similarly, we note the AUC values of $0.949$ for the UASE compared to $0.905$ for the ASE).

\begin{figure}[ht!]
\centering
\centering
\includegraphics[scale=0.68]{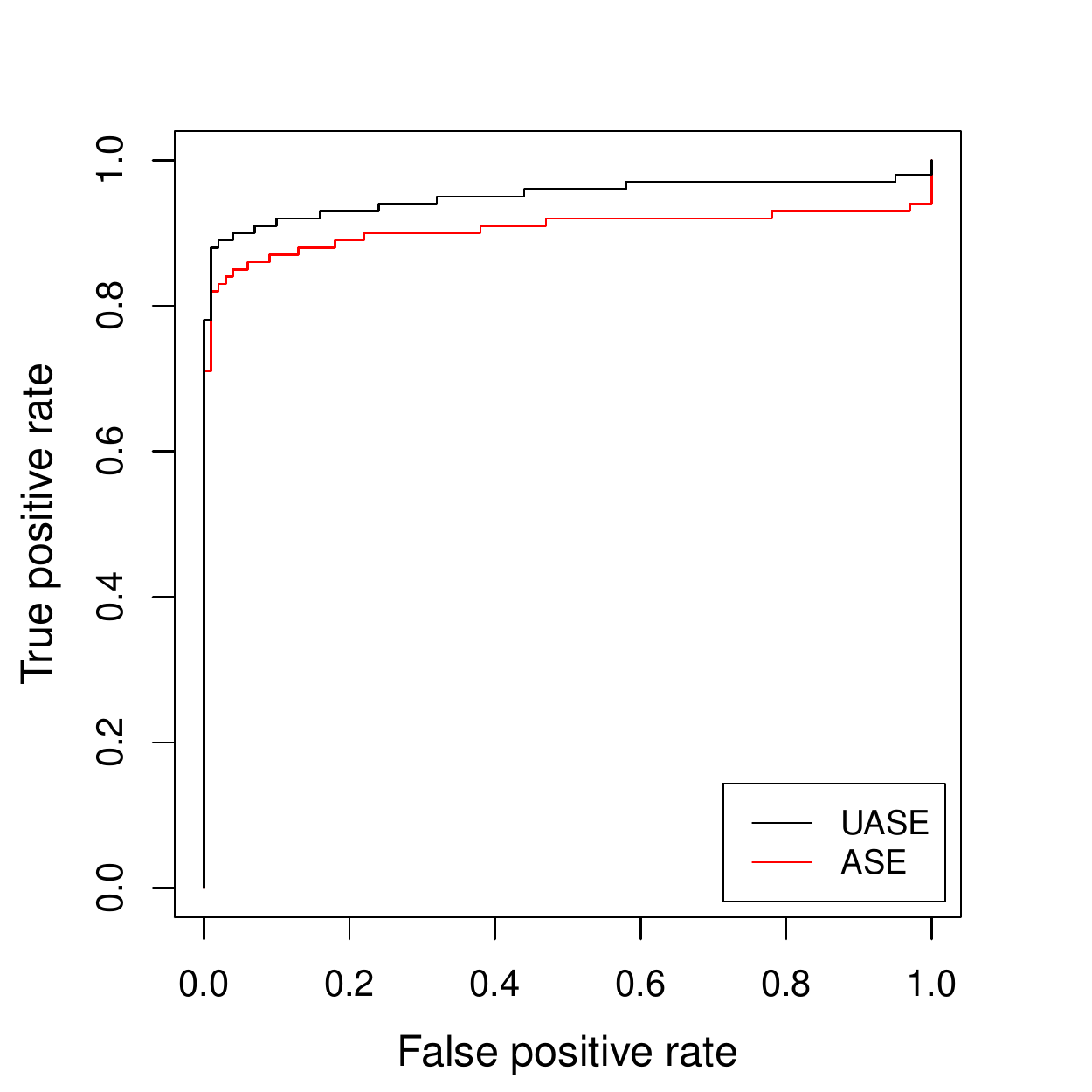}
\caption{\label{Fig_14_mixed_data_link_prediction}Receiver Operating Characteristic curves for the UASE (black) and ASE (red) methods for out-of-sample link prediction on the Los Alamos National Laboratory computer network, in which the UASE is augmented with computer-to-port connectivity data. See main text for details.}
\end{figure}

\section{Chernoff information and the GMSBM}\label{sec:Chernoff}

 Given a collection of matrices $\mA^{(r)}$ that are distributed according to a GMSBM, it is reasonable to ask whether there is any tangible benefit to studying the UASE as opposed to the ASEs of the individual matrices, and how one might quantify this. Tang and Priebe \cite{TP19}, in the context of comparing the performance of the spectral embeddings of the Laplacian and adjacency matrix in recovering block assignments from a stochastic block model graph, proposed using the \textit{Chernoff information} \cite{C52} of the limiting Gaussian distributions obtained from the Central Limit Theorem associated to each embedding as a means of doing so. In a two-cluster problem, the Chernoff information is the exponential rate at which the Bayes error (from the decision rule which assigns each data point to its most likely cluster \textit{a posteriori}) decreases asymptotically. The Chernoff information is an example of a \textit{f}-divergence \cite{AS66}, \cite{C67} and therefore possesses the desirable attribute of being invariant under invertible linear transformations \cite{LV06}.

 If $F_1$ and $F_2$ are two absolutely continuous multivariate distributions supported on $\Omega \subset \mathbb{R}^d$, with density functions $f_1$ and $f_2$ respectively, the Chernoff information between $F_1$ and $F_2$ is defined by $C(F_1,F_2) = \displaystyle\sup_{t \in (0,1)} C_t(F_1,F_2)$ \cite{C52} where the \textit{Chernoff divergence} 
\begin{align}
	C_t(F_1,F_2) = -\log\bigl(\int_\Omega f_1(\mx)^t f_2(\mx)^{1-t} d\mx\bigr).
\end{align} 

For a $K$-cluster problem, in which we have distributions $F_1, \ldots, F_K$ with corresponding density functions $f_1, \ldots, f_K$, we consider the Chernoff information of the critical pair $\displaystyle\min_{i \neq j} C(F_i,F_j)$.

If the $F_i$ are multivariate normal distributions, it is known (see \cite{P05}) that the Chernoff information can be expressed as $C(F_i,F_j) = \displaystyle\sup_{t \in (0,1)} C_t(F_i,F_j)$, where 
\begin{align}
	C_t(F_i,F_j) = \Bigl(\frac{t(1-t)}{2}(\mx_i - \mx_j)^\top \mSig_t^{-1}(\mx_i - \mx_j) + \tfrac{1}{2}\log\Bigl(\frac{|\mSig_t|}{|\mSig_i|^t|\mSig_j|^{1-t}}\Bigr)\Bigr),
\end{align} 
where $F_i \sim \mathcal{N}(\mx_i,\mSig_i)$ and $\mSig_t = t\mSig_1 + (1-t)\mSig_2$.

We now apply this to obtain an expression for the Chernoff information of the left UASE of a GMSBM. Let $(\mA,\mX,\mY) \sim \mathrm{GMSBM}(\mathcal{F},\mB)$, and define $C_\mA = \displaystyle\min_{i \neq j} \displaystyle\sup_{t \in (0,1)} C_t\left(F_i,F_j\right)$, where 
\begin{align}
	F_i \sim \mathcal{N}\bigl(\me_i,\tfrac{1}{n}\Delta_{\mB,Y}^{-1}\mB\mSig_Y(\me_i)\mB^\top\Delta_{\mB,Y}^{-1}\bigr)
\end{align} 
and $\Delta_{\mB,Y}$ and $\mSig_Y(\me_i)$ are as defined in Theorem \ref{CLT} (for simplicity, we will assume that our graphs have the same sparsity factor). 

Some standard algebraic manipulation shows that this quantity is invariant under the transformations of the latent positions listed in Proposition \ref{latent_positions} (and thus the Chernoff information of the underlying MRDPG is well-defined), and similarly that it is invariant under invertible linear transformations of the UASE $\mX_\mA$. Thus we may study $\mX_\mA$ rather than the estimate $\mX_\mA\mL$ of the latent positions, which requires knowledge - that we will not typically possess - of the underlying matrices of probabilities $\mP^{(r)}$. We note that we can similarly define the Chernoff information for the right UASEs (where they are defined) and that this too would be invariant under invertible transformations of the latent positions.

Given a collection $\mA^{(1)}, \ldots, \mA^{(k)}$ of matrices and a subset $\mathcal{K}$ of $\{1,\ldots,k\}$, let $\mA_\mathcal{K}$ denote the matrix obtained by concatenating the matrices $\mA^{(r)}$ for $r \in \mathcal{K}$. If we have a collection of matrices $\mA^{(r)}$ which are \textit{identically} distributed as a GMSBM, the following result shows that it is \textit{always} preferable to embed as many matrices as possible:

\begin{proposition}\label{Chernoff_identical_embeddings} Let $(\mA,\mX,\mY) \stackrel{\mathrm{id}}\sim \mathrm{GMSBM}(\mathcal{F},\mB)$ be identically distributed as a GMSBM. Then $C_\mA \geq C_{\mA_\mathcal{K}}$ for any subset $\mathcal{K}$ of $\{1,\ldots,k\}$.\end{proposition}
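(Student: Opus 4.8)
The plan is to reduce the statement to a clean scaling identity for the limiting covariance matrices and then read off monotonicity from the explicit Gaussian Chernoff formula quoted after \cite{P05}. First, since the graphs are identically distributed, the limiting distributions of the left UASE of $\mA_\mathcal{K}$ provided by Theorem \ref{CLT} (equivalently Corollary \ref{identical_MRDPG_CLT}) depend on $\mathcal{K}$ only through $\mLam_*$, which in this case is $[\mB|\cdots|\mB]$ with one block per element of $\mathcal{K}$, and through the common marginals $\mathcal{F}_X$, $\mathcal{F}_{Y,1}$. Hence $C_{\mA_\mathcal{K}}$ depends on $\mathcal{K}$ only through $m:=|\mathcal{K}|$, and it suffices to show that $C_{\mA_{\{1,\ldots,m\}}}$ is non-decreasing in $m$; taking $m=k$ then gives the claim.

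Second, I would compute that limiting covariance explicitly. With $m$ identical graphs embedded we have $\mLam_*=[\mB|\cdots|\mB]$ ($m$ copies), $\Delta_Y = c\,(\Delta_{Y,1}\oplus\cdots\oplus\Delta_{Y,1})$ and $\mSig_Y(\me_i)=c\,(\mSig_Y^{(1)}(\me_i)\oplus\cdots\oplus\mSig_Y^{(1)}(\me_i))$, where $c=\lim n'/n$ and all $c_r$ coincide by identical distribution. The block sums collapse: $\mLam_*\Delta_Y\mLam_*^\top = mc\,\mB\Delta_{Y,1}\mB^\top$ and $\mLam_*\mSig_Y(\me_i)\mLam_*^\top = mc\,\mB\mSig_Y^{(1)}(\me_i)\mB^\top$, so the covariance $\tfrac1n\Delta_{\mLam,Y}^{-1}\mLam_*\mSig_Y(\me_i)\mLam_*^\top\Delta_{\mLam,Y}^{-1}$ equals $\tfrac1m\,\mSig_i^{\circ}$, where $\mSig_i^{\circ}=\tfrac1{nc}(\mB\Delta_{Y,1}\mB^\top)^{-1}\mB\mSig_Y^{(1)}(\me_i)\mB^\top(\mB\Delta_{Y,1}\mB^\top)^{-1}$ is independent of $m$ and positive definite (the rank-$d$ hypothesis on $\mLam=\mB$ makes $\mB\Delta_{Y,1}\mB^\top$ invertible, using non-degeneracy of $\Delta_{Y,1}$). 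Thus the $i$th block's limiting law is $\mathcal{N}(\me_i,\tfrac1m\mSig_i^{\circ})$, with the same means $\me_i$ for every $m$.

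Third, insert this into the Gaussian Chernoff divergence. Fix $i\neq j$ and $t\in(0,1)$ and write $\mSig_t^{\circ}=t\mSig_i^{\circ}+(1-t)\mSig_j^{\circ}\succ0$; the averaged covariance is $\tfrac1m\mSig_t^{\circ}$. The log-determinant term $\tfrac12\log\!\big(|\tfrac1m\mSig_t^{\circ}|\,/\,(|\tfrac1m\mSig_i^{\circ}|^t|\tfrac1m\mSig_j^{\circ}|^{1-t})\big)$ is independent of $m$, since the factors $m^{-d}$ cancel. The Mahalanobis term becomes $\tfrac{m\,t(1-t)}{2}(\me_i-\me_j)^\top(\mSig_t^{\circ})^{-1}(\me_i-\me_j)$, which is strictly increasing in $m$ because $\me_i\neq\me_j$ and $\mSig_t^{\circ}\succ0$. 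Hence, for every $t$, the Chernoff divergence $C_t$ of the $m$-graph embedding dominates that of any $m'\le m$; applying $\sup_{t\in(0,1)}$ and then $\min_{i\neq j}$, both order-preserving, yields $C_{\mA_{\{1,\ldots,m\}}}\ge C_{\mA_{\{1,\ldots,m'\}}}$, and in particular $C_\mA\ge C_{\mA_\mathcal{K}}$.

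The main obstacle is the second step: one must track carefully which of $\mLam_*$, $\Delta_Y$, $\mSig_Y(\me_i)$, $\Delta_{\mLam,Y}$ pick up a factor of $m$ under the identical-block structure, and confirm invertibility of $\mB\Delta_{Y,1}\mB^\top$ from the rank hypothesis, so as to get the covariance exactly in the form $\tfrac1m$ times an $m$-independent matrix. Once that identity is established the remaining argument is essentially immediate; it is worth noting that no appeal to linear-transformation-invariance of Chernoff information is needed for the comparison itself, since all sub-collections share the canonical GMSBM means $\me_i$ — that invariance is only what makes $C_\mA$ well-defined in the first place, as remarked before the statement.
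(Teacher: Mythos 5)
Your proof is correct and follows essentially the same route as the paper's: both identify that the limiting covariance scales as $1/m$ with $m=|\mathcal{K}|$, so the log-determinant term is unaffected while the Mahalanobis term picks up a factor of $m$, and monotonicity then passes through $\sup_t$ and $\min_{i\neq j}$. You merely carry out the block-collapse computation for $\mLam_*$, $\Delta_Y$ and $\mSig_Y$ more explicitly than the paper, which simply quotes the resulting $\tfrac{kt(1-t)}{2c}$ versus $\tfrac{rt(1-t)}{2c}$ coefficients.
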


\begin{proof} 
Let $\mSig_i = \mB^{-\top}\Delta_Y^{-1}\mSig_Y(\me_i)\Delta_Y^{-1}\mB^{-1}$ denote the covariance matrix present in the Central Limit Theorem for the single graph ASE given in Corollary \ref{identical_MRDPG_CLT} for our particular case. Then 
\begin{align}
	C_\mA = \min_{i \neq j} \sup_{t \in (0,1)} \Bigl(\frac{kt(1-t)}{2c}(\me_i - \me_j)^\top \mSig_{i,j,t}^{-1}(\me_i - \me_j) + \tfrac{1}{2}\log\Bigl(\frac{|\mSig_{i,j,t}|}{|\mSig_i|^t|\mSig_j|^{1-t}}\Bigr)\Bigr) 
\end{align} 
where $\mSig_{i,j,t} = t\mSig_i + (1-t)\mSig_j$, while for a subset $\mathcal{K}$ of size $r$ we have 
\begin{align}C_{\mA_\mathcal{K}} = \min_{i \neq j} \sup_{t \in (0,1)} \Bigl(\frac{rt(1-t)}{2c}(\me_i - \me_j)^\top \mSig_{i,j,t}^{-1}(\me_i - \me_j) + \tfrac{1}{2}\log\Bigl(\frac{|\mSig_{i,j,t}|}{|\mSig_i|^t|\mSig_j|^{1-t}}\Bigr)\Bigr). 
\end{align}

Since the matrix $\mSig_{i,j,t}$ is positive semi-definite for any $i$ and $j$, the first term in each pair of brackets is positive, and thus the term $C_\mA$ clearly dominates. 
\end{proof}

If the adjacency matrices $\mA^{(r)}$ are \textit{not} identically distributed, however, the situation is not so clear-cut, as it is entirely possible to encounter situations for which $C_{\mA_\mathcal{K}} > C_\mA$ for some subset $\mathcal{K}$. For example, if we consider the two-graph multilayer stochastic block model with matrices 
\begin{align}
	\mB^{(1)} = \left(\begin{array}{ccc}0.67&&0.46\\$0.4$6&&0.36\end{array}\right),~\mB^{(2)} = \left(\begin{array}{ccc}0.98&&0.49\\0.49&&0.10\end{array}\right),
\end{align} 
then while the ratio $C_{\mA}/C_{\mA^{(1)}}$ tends to $11.98$, the ratio $C_{\mA}/C_{\mA^{(2)}}$ tends to $0.96$.

Before proceeding further, we note that any analysis of the Chernoff information for large-scale GMSBMs can be simplified by observing that the logarithmic term in the definition is independent of the number of vertices $n$, and so becomes insignificant as $n \to \infty$ if we impose the simplifying assumption that the covariance matrices be non-singular. To this end, we consider instead the truncated terms $\rho_\mA$, in which we simply omit the logarithmic term from the definition of $C_\mA$. Considering the function $\rho_\mA$ gives an accurate means of comparison between the large-scale behaviour of two multilayer stochastic block models, as the ratio $C_\mA/C_{\mA_\mathcal{K}}$ tends to $\rho_\mA/\rho_{\mA_\mathcal{K}}$ as $n$ increases for any subset $\mathcal{K}$.

To observe the effect of embedding additional graphs on the Chernoff information of a GMSBM, we conducted the following experiment: for each $k \in \{2,\ldots,10\}$, we performed 1000 trials in which a $(2, 2, \ldots, 2)$-community GMSBM was generated by choosing $k$ matrices $\mB^{(r)} \in [0,1]^{2 \times 2}$ and a probability vector $\mpi$ at random (with the entries $\mB^{(r)}_{ij} \sim \mathrm{Uniform}[0,1]$, while $\pi \sim \mathrm{Dirichlet}(1,1)$. We then calculated the ratio $\rho_\mA/\rho_{\mA_\mathcal{K}}$ for each subset $\mathcal{K}$ of $\{1,\ldots,k\}$ under the assumption that the vectors $\mY^{(r)}$ were identically distributed. We then repeated the experiment for a $(2,2,\ldots,2)$-community GMSBM in which we allowed different probability vectors $\mpi_r$ for each embedding, and finally repeated both experiments for a $(2,3,\ldots,3)$-community GMSBM. 

The results of these simulations are presented in Figure \ref{Fig_15_chernoff_information}. Entries above the diagonal (in blue) correspond to trials in which the parameters $\mpi_r$ are equal, while entries below the diagonal (in red) correspond to trials in which the parameters $\mpi_r$ are allowed to differ; the $(r,k)$th and $(k,r)$th coordinates in the respective cases indicate the proportion of embeddings of $k$ matrices for which $\rho_\mA/\rho_{\mA_\mathcal{K}} > 1$ for \textit{every} $r$-element subset $\mathcal{K}$. 

\begin{figure}[ht!]
\centering
\includegraphics[scale=0.62,trim=10 10 0 10,clip]{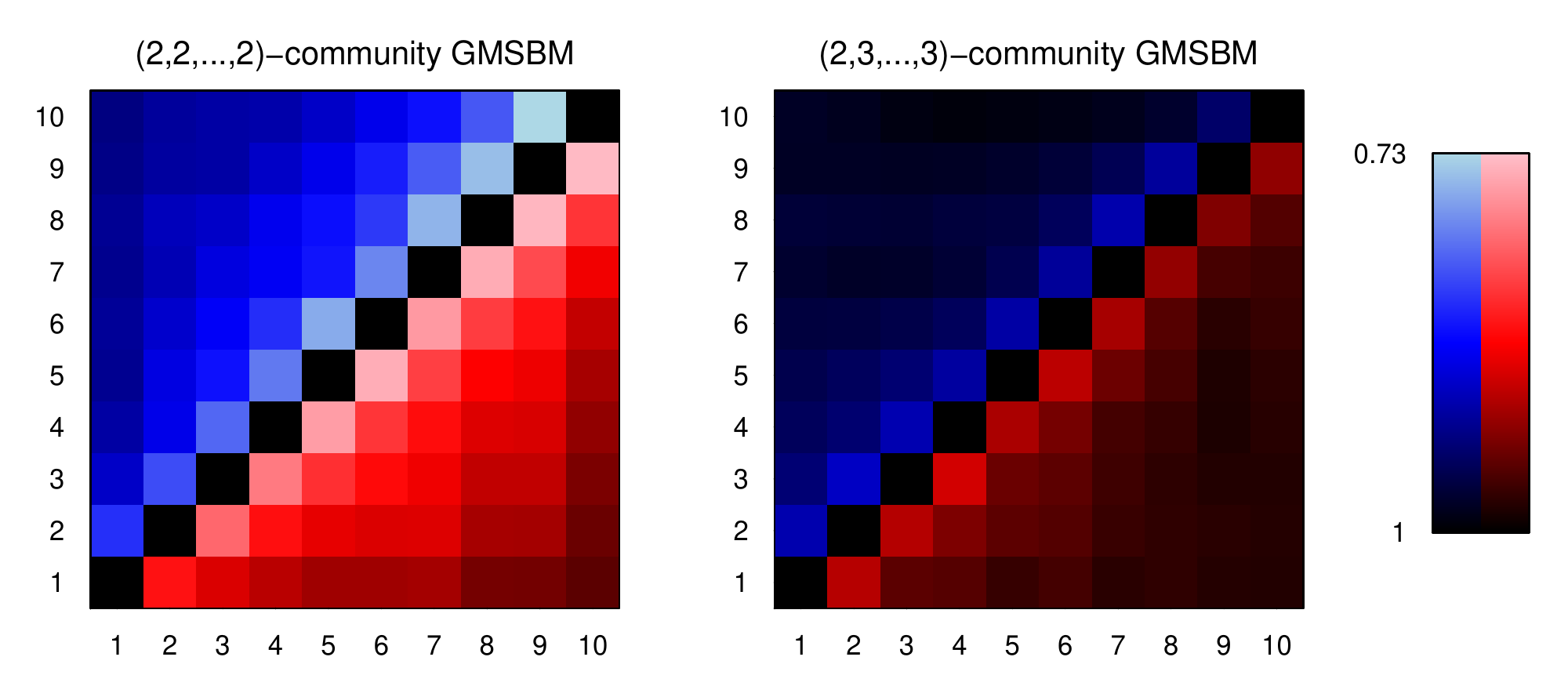}
\caption{\label{Fig_15_chernoff_information}Proportion of GMSBMs for which $\rho_\mA/\rho_{\mA_\mathcal{K}} > 1$ for all subsets $\mathcal{K}$ of a given size. See main text for details.}
\end{figure}

We make the following observations:
\begin{itemize}
	\item On average, embedding more matrices seems to improve performance, with $\rho_\mA/\rho_{\mA_\mathcal{K}} > 1$ for at least $73\%$ of the trials we conducted, and this improvement in performance seems to increase as we allow the latent positions $\mY^{(r)}$ to be taken from more communities.
	\item In general, $\rho_\mA/\rho_{\mA_\mathcal{K}} > 1$ more often the greater the difference in size between the subset $\mathcal{K}$ and the total number of embeddings $k$.
	\item We detected little distinguishable difference between allowing the parameters $\mpi_r$ to differ as opposed to keeping them all the same.
	\item In each trial, we noted that there is always at least \textit{one} subset $\mathcal{K}$ of any given size for which $\rho_\mA/\rho_{\mA_\mathcal{K}} > 1$, and we conjecture that this should always hold. If true, this would mean in particular that the UASE is always better than the worst of our embeddings (as opposed to, say, the mean embedding, which we have seen can lead to degeneracy when the matrices $\mP^{(r)}$ have differing signatures).
\end{itemize}

\section{Conclusion}\label{sec:conclusion}

The multilayer random dot product graph is a vast yet natural extension of the random dot product graph, granting us insight into the behaviour of a common subset of nodes across a series of graphs---both undirected and directed---in which we allow a mixture of assortativity behaviours. Its simplicity and flexibility make it an ideal model for a variety of situations, and it can be seen to perform as well as (and in many cases better than) existing models at multiple graph inference tasks such as community detection and graph-to-graph comparison, while allowing inference in a much wider range of situations than current methods permit. These experimental results are supported by theoretical results showing that the node representations obtained by the left- and right-sided spectral embeddings converge uniformly in the Euclidean norm to the latent positions with Gaussian error, in particular providing us with the first known examples of such results for bipartite graphs. Finally, we demonstrate the practical effectiveness of our model by applying it to the task of link prediction within a computer network, indicating its usefulness to the field of cyber-security.

\bibliographystyle{plainnat} 
\bibliography{MRDPG_v3_bib.bib}

\section*{Appendix}\label{appn}
\addcontentsline{toc}{section}{\nameref{appn}}

Before proving Theorems \ref{consistency} and \ref{CLT}, we first require some control over the asymptotic behaviour of the singular values of the matrices $\mP$, $\mA$ and $\mA-\mP$, which we establish using a series of results. Throughout this section, we will assume that $(\mA,\mX, \mY) \sim \mathrm{MRDPG}(\mathcal{F}_\rho,\mLam_\epsilon)$ for a distribution $\mathcal{F}$ and sparsity factors $\rho$ and $\epsilon_r$ satisfying the criteria stated in Section \ref{subsec:asymptotics}.

\begin{proposition}\label{P_bounds} The non-zero singular values $\sigma_i(\mP)$ for $i \in \{1,\ldots,d\}$ satisfy 
\begin{align}
	\frac{\sigma_i(\mP)}{\rho n} \longrightarrow \sqrt{\lambda_i\bigl(\Delta_X\mLam_*\Delta_Y\mLam_*^\top\bigr)}
\end{align} 
almost surely, where $\Delta_Y = c_1\Delta_{Y,1} \oplus \cdots \oplus c_\kappa\Delta_{Y,\kappa}$, and consequently $\sigma_i(\mP) = \mathrm{O}(\epsilon\rho n)$ and $\sigma_i(\mP) = \Omega(\rho n)$ almost surely.\end{proposition}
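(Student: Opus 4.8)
The plan is to reduce the $n\times n$ spectral problem for $\mP\mP^\top$ to one of fixed size $d$, identify the limiting $d\times d$ matrix, and conclude by continuity of eigenvalues. Writing $\mP = \mX\mLam_\epsilon\mY^\top$ and using the block-diagonal form $\mY^\top\mY = \bigoplus_{r=1}^k{\mY^{(r)}}^\top\mY^{(r)}$ together with $\mLam_\epsilon = [\epsilon_1\mLam_1|\cdots|\epsilon_k\mLam_k]$ gives
\[
\mP\mP^\top = \mX\mB_\epsilon\mX^\top,\qquad \mB_\epsilon := \mLam_\epsilon\mY^\top\mY\mLam_\epsilon^\top = \sum_{r=1}^k\epsilon_r^2\,\mLam_r{\mY^{(r)}}^\top\mY^{(r)}\mLam_r^\top \in \mathbb{R}^{d\times d},
\]
a positive semidefinite matrix. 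Writing $\mB_\epsilon = \mB_\epsilon^{1/2}\mB_\epsilon^{1/2}$, we see that $\mP\mP^\top = (\mX\mB_\epsilon^{1/2})(\mX\mB_\epsilon^{1/2})^\top$ has the same non-zero eigenvalues as $(\mX\mB_\epsilon^{1/2})^\top(\mX\mB_\epsilon^{1/2}) = \mB_\epsilon^{1/2}\mX^\top\mX\mB_\epsilon^{1/2}$, so $\sigma_i(\mP)^2 = \lambda_i\bigl(\mB_\epsilon^{1/2}\mX^\top\mX\mB_\epsilon^{1/2}\bigr)$ for $i = 1,\ldots,d$; it therefore suffices to analyse the rescaled $d\times d$ matrix $\bigl(\tfrac{\mB_\epsilon}{\rho n}\bigr)^{1/2}\tfrac{\mX^\top\mX}{\rho n}\bigl(\tfrac{\mB_\epsilon}{\rho n}\bigr)^{1/2}$, whose eigenvalues are exactly $\sigma_i(\mP)^2/(\rho n)^2$.

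Next I would compute the limits of the two factors. The concentration condition of Section~\ref{subsec:asymptotics}, applied simultaneously to $\mX,\mY^{(1)},\ldots,\mY^{(k)}$ (which is precisely what ``mutually almost surely'' buys us), together with the scaling $\mX_i = \rho^{1/2}\xi_i$, $\mY^{(r)}_j = \rho^{1/2}\upsilon^{(r)}_j$ and $n_r/n\to c_r$, yields $\tfrac{\mX^\top\mX}{\rho n}\to\Delta_X$ and $\tfrac{1}{\rho n}{\mY^{(r)}}^\top\mY^{(r)}\to c_r\Delta_{Y,r}$ almost surely. Since $\epsilon_r = 1$ for $r\le\kappa$ and $\epsilon_r\to 0$ for $r>\kappa$, while $\tfrac{1}{\rho n}\mLam_r{\mY^{(r)}}^\top\mY^{(r)}\mLam_r^\top$ stays bounded, the tail terms vanish and
\[
\frac{\mB_\epsilon}{\rho n}\longrightarrow \sum_{r=1}^\kappa c_r\,\mLam_r\Delta_{Y,r}\mLam_r^\top = \mLam_*\Delta_Y\mLam_*^\top =: \mathbf{C}
\]
almost surely, with $\mLam_* = [\mLam_1|\cdots|\mLam_\kappa]$ and $\Delta_Y$ as in the statement. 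The local rank condition ($\mLam_*$ of rank $d$) and invertibility of the $\Delta_{Y,r}$ force $\mathbf{C}$ to be positive definite, so in particular $\mP$ has rank exactly $d$ for all large $n$ almost surely, and the statement is well posed.

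Continuity of the matrix square root on the positive semidefinite cone gives $\bigl(\tfrac{\mB_\epsilon}{\rho n}\bigr)^{1/2}\to\mathbf{C}^{1/2}$, and continuity of eigenvalues then gives $\tfrac{\sigma_i(\mP)^2}{(\rho n)^2}\to\lambda_i\bigl(\mathbf{C}^{1/2}\Delta_X\mathbf{C}^{1/2}\bigr)$ almost surely. Because $\mathbf{C}$ is invertible, $\mathbf{C}^{1/2}\Delta_X\mathbf{C}^{1/2} = \mathbf{C}^{-1/2}(\mathbf{C}\Delta_X)\mathbf{C}^{1/2}$ is similar to $\mathbf{C}\Delta_X = \mLam_*\Delta_Y\mLam_*^\top\Delta_X$, which shares its eigenvalues with $\Delta_X\mLam_*\Delta_Y\mLam_*^\top$; taking square roots yields $\tfrac{\sigma_i(\mP)}{\rho n}\to\sqrt{\lambda_i(\Delta_X\mLam_*\Delta_Y\mLam_*^\top)}$. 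The two consequences then follow: $\sigma_i(\mP) = \Omega(\rho n)$ since this limit is strictly positive ($\Delta_X$ and $\mLam_*\Delta_Y\mLam_*^\top$ both being invertible), and $\sigma_i(\mP)\le\|\mP\|\le\|\mX\|\,\|\mLam_\epsilon\|\,\|\mY\| = \mathrm{O}\bigl((\rho n)^{1/2}\bigr)\cdot\mathrm{O}(\epsilon)\cdot\mathrm{O}\bigl((\rho n)^{1/2}\bigr) = \mathrm{O}(\epsilon\rho n)$, using $\|\mLam_\epsilon\|\le\epsilon\|\mLam\|$ and the norm bounds on $\mX,\mY$ from the same concentration condition together with $n_r = \Theta(n)$.

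The step I expect to carry the real weight is the reduction to the fixed-size matrix $\mB_\epsilon^{1/2}\mX^\top\mX\mB_\epsilon^{1/2}$ and the identification of its limit $\mathbf{C}^{1/2}\Delta_X\mathbf{C}^{1/2}$; everything after is routine. The main nuisance is bookkeeping: carrying the $\mathrm{O}(\rho n^{1/2}\log n)$ fluctuations and the $o(1)$ corrections from the decaying $\epsilon_r$ and from $n_r/n\to c_r$ through the reduction while keeping the ``almost surely'' quantifiers honest, and --- if one wants a rate rather than bare convergence --- observing that eigenvalue perturbations are Lipschitz and, since the limit is positive definite, so is the square root near $\mathbf{C}$, which upgrades the conclusion to $\bigl|\tfrac{\sigma_i(\mP)}{\rho n} - \sqrt{\lambda_i(\Delta_X\mLam_*\Delta_Y\mLam_*^\top)}\bigr| = \mathrm{O}(n^{-1/2}\log n)$ almost surely.
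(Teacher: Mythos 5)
Your proof is correct and follows essentially the same route as the paper's: both reduce $\sigma_i(\mP)^2$ to the eigenvalues of a $d\times d$ product of $\mX^\top\mX$ and $\mLam_\epsilon\mY^\top\mY\mLam_\epsilon^\top$, identify the limit $\Delta_X\mLam_*\Delta_Y\mLam_*^\top$ from the concentration conditions of Section~\ref{subsec:asymptotics}, and conclude by continuity of eigenvalues. Your symmetrisation via $\mB_\epsilon^{1/2}$ and the direct submultiplicativity bound for the $\mathrm{O}(\epsilon\rho n)$ order are cosmetic variants of the paper's cyclic-permutation and Weyl arguments (arguably slightly cleaner); the only caveat is your closing rate claim, which is not fully justified since no rates are assumed for $\epsilon_r\to 0$ or $n_r/n\to c_r$.
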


\begin{proof} Since $\mP = \mX\mLam_\epsilon\mY^\top$, we see that 
\begin{align}
	\sigma_i(\mP) = \sqrt{\lambda_i\bigl(\mX\mLam_\epsilon\mY^\top\mY\mLam_\epsilon^\top\mX^\top\bigr)} = \sqrt{\lambda_i\bigl(\mX^\top\mX\mLam_\epsilon\mY^\top\mY\mLam_\epsilon^\top\bigr)},\end{align} 
as the non-zero eigenvalues of a product of matrices are invariant under cyclic permutations of its factors. 

Our assumptions regarding the distribution $\mathcal{F}$ ensure that the spectral norms $\|\mX^\top\mX - \rho n\Delta_X\|$ and $\|\mY^\top\mY - \rho n\Delta_Y\|$ are of order $\mathrm{O}\bigl(\rho n^{1/2}\log^{1/2}(n)\bigr)$ mutually almost surely, and we note that consequently 
\begin{align}
	\|\mY^\top\mY\| \leq \rho \|\Delta_Y\| + \|\mY^\top\mY - \rho \Delta_Y\| = \mathrm{O}\left(\rho n\right)
\end{align}
almost surely by a standard application of the triangle inequality. 

Next, note that 
\begin{align}
\begin{aligned}
	\mX^\top\mX\mLam_\epsilon\mY^\top\mY\mLam_\epsilon^\top - \rho^2 n\Delta_X\mLam_\epsilon\Delta_Y\mLam_\epsilon^\top &= (\mX^\top\mX-\rho n\Delta_X)\mLam_\epsilon\mY^\top\mY\mLam_\epsilon^\top \\
	&\phantom{=} + \rho n\Delta_X \mLam_\epsilon (\mY^\top\mY - \rho \Delta_Y)\mLam_\epsilon^\top
\end{aligned}
\end{align}
and so 
\begin{align}
\begin{aligned}
	\|\mX^\top\mX\mLam_\epsilon\mY^\top\mY\mLam_\epsilon^\top - \rho^2 n\Delta_X\mLam_\epsilon\Delta_Y\mLam_\epsilon^\top\| &\leq \|\mLam_\epsilon\|^2\bigl(\|\mX^\top\mX-\rho n\Delta_X\|\|\mY^\top\mY\| \\
	&\phantom{=}+ \rho n\|\Delta_X \| \|\mY^\top\mY - \rho \Delta_Y\|\bigr)
\end{aligned}
\end{align}
meaning that
\begin{align}
	\|\mX^\top\mX\mLam_\epsilon\mY^\top\mY\mLam_\epsilon^\top - \rho^2 n\Delta_X\mLam_\epsilon\Delta_Y\mLam_\epsilon^\top\| = \mathrm{O}\bigl(\epsilon^2 \rho^2 n^{3/2}\log^{1/2}(n)\bigr) 
\end{align}
almost surely.

As a result, we see that $\frac{1}{\rho^2 n^2}\mX^\top\mX\mLam_\epsilon\mY^\top\mY\mLam_\epsilon^\top$ converges to $\Delta_X\mLam_\epsilon\Delta_Y\mLam_\epsilon^\top$ in the spectral norm, and thus also in the Frobenius norm, implying in particular that the entries of the two matrices converge in absolute value. Now, the eigenvalues of any matrix are the roots of its characteristic polynomial, the coefficients of which are polynomial functions of the entries of the matrix. In particular, by continuity of the roots of polynomials, the eigenvalues of $\frac{1}{\rho^2 n}\mX^\top\mX\mLam_\epsilon\mY^\top\mY\mLam_\epsilon^\top$ and $\Delta_X\mLam_*\Delta_Y\mLam_*^\top$ must converge, giving the first result.

For the second, note that $\lambda_i\bigl(\Delta_X\mLam_*\Delta_Y\mLam_*^\top\bigr) = \lambda_i\bigl(\Delta_X^{1/2}\mLam_*\Delta_Y\mLam_*^\top\Delta_X^{1/2}\bigr)$ (where $\Delta_X^{1/2}$ is the unique positive definite matrix square root of $\Delta_X$) and that the latter matrix is a sum of symmetric matrices, allowing us to apply Weyl's inequalities to obtain the upper and lower bounds. \end{proof}

\begin{proposition}\label{spectral_bound} $\|\mA-\mP\| = \mathrm{O}\bigl(\epsilon^{1/2}\rho^{1/2} k^{1/4}n^{1/2}\log^{1/2}(n)\bigr)$ almost surely.\end{proposition}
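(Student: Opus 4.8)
The plan is to view $\mA-\mP$ as (a small deterministic shift of) a sum of independent mean-zero random matrices and apply a Bernstein-type concentration inequality for sums of independent rectangular random matrices. Write $\mA - \mP = (\mA - \mathbb{E}\mA) + (\mathbb{E}\mA - \mP)$. The second term is deterministic: it consists only of the zeroed diagonals of the square (directed/undirected) blocks, and since $\mP^{(r)}_{ii} = \mathrm{O}(\rho\epsilon_r) = \mathrm{O}(\rho)$ this matrix has maximal absolute row sum $\mathrm{O}(k\rho)$ and maximal absolute column sum $\mathrm{O}(\rho)$, hence spectral norm $\mathrm{O}(\rho k^{1/2})$, which is negligible against the claimed bound. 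So it suffices to control $\|\mA - \mathbb{E}\mA\|$.

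First I would set up the decomposition $\mA - \mathbb{E}\mA = \sum_\ell \mZ_\ell$. For a block $r$ corresponding to a directed or bipartite graph, each centred entry $\mA^{(r)}_{ij}-\mP^{(r)}_{ij}$ contributes a rank-one atom supported on row $i$ of the unfolding and the unfolded column indexing the $j$th node of block $r$; these are mutually independent. For an undirected block I would instead group the symmetric pair $(i,j),(j,i)$ into a single atom, so that the whole collection of atoms across all blocks is mutually independent. Each atom is mean zero, and $\|\mZ_\ell\| \le 2$ since every entry of $\mA - \mathbb{E}\mA$ lies in $[-1,1]$.

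Next I would estimate the matrix variance parameter $v = \max\bigl\{\,\bigl\|\sum_\ell \mathbb{E}[\mZ_\ell\mZ_\ell^\top]\bigr\|,\ \bigl\|\sum_\ell \mathbb{E}[\mZ_\ell^\top\mZ_\ell]\bigr\|\,\bigr\}$. Both sums are (block-)diagonal, so $v$ is the larger of the maximal row variance and maximal column variance of $\mA-\mP$. Using $\mathrm{Var}(\mA^{(r)}_{ij}) \le \mP^{(r)}_{ij} = \rho\epsilon_r\,\xi_i^\top\mLam_r\upsilon^{(r)}_j = \mathrm{O}(\rho\epsilon_r)$ by boundedness of $\mathcal{X}$, $\mathcal{Y}_r$ and the $\mLam_r$, the $i$th row variance is $\mathrm{O}\bigl(\rho\sum_r \epsilon_r n_r\bigr) = \mathrm{O}\bigl(\rho n\sum_r \epsilon_r\bigr)$, and Cauchy--Schwarz gives $\sum_r \epsilon_r \le k^{1/2}\epsilon$; the column variances are $\mathrm{O}(\rho n)$ because each $\epsilon_r \le 1$. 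Hence $v = \mathrm{O}(\epsilon\rho k^{1/2} n)$.

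Finally, matrix Bernstein yields $\mathbb{P}\bigl(\|\mA - \mathbb{E}\mA\| \ge t\bigr) \le (n + n_1 + \cdots + n_k)\exp\!\bigl(-\tfrac{t^2/2}{v + 2t/3}\bigr)$. As $k = \mathrm{o}(n)$ the prefactor is polynomial in $n$, so $\log(n + \sum_r n_r) = \mathrm{O}(\log n)$; and the global sparsity hypothesis $\rho = \omega\bigl(\log^c(n)/n^{1/2}\bigr)$ forces $v = \Omega(\rho n) \gg \log^2 n$, so taking $t = C(v\log n)^{1/2}$ makes the variance term dominate the denominator, and for $C$ large enough in terms of a prescribed $\alpha$ the right-hand side is at most $n^{-\alpha}$. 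Combining with the negligible deterministic term gives $\|\mA - \mP\| = \mathrm{O}\bigl((v\log n)^{1/2}\bigr) = \mathrm{O}\bigl(\epsilon^{1/2}\rho^{1/2}k^{1/4}n^{1/2}\log^{1/2}(n)\bigr)$ almost surely in the sense defined in Section~\ref{subsec:asymptotics}. The main obstacle is the lack of full entrywise independence caused by the symmetry of the undirected blocks, which is exactly what forces the conjugate-pair grouping; secondary care is needed to track the sharp powers of $k$ and $\epsilon$ through the variance bound (in particular the step $\sum_r\epsilon_r \le k^{1/2}\epsilon$ rather than the crude $\sum_r\epsilon_r \le k$) and to check that the deterministic $L\log n$ term in the Bernstein bound is absorbed by the fluctuation term, which is precisely where the sparsity condition enters.
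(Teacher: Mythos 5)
Your proposal is correct and follows essentially the same route as the paper's proof: the same matrix Bernstein decomposition into independent rank-one (or symmetric-pair, for undirected blocks) atoms, the same Cauchy--Schwarz step giving the variance parameter $v = \mathrm{O}(\epsilon\rho k^{1/2}n)$ and hence the $k^{1/4}$ in the final bound, and the same treatment of the deterministic diagonal correction as negligible. The only cosmetic differences are your slightly cruder (but still sufficient) bounds on the atom norms and on the diagonal-correction term, and that the paper makes the conditioning on the latent positions and subsequent integration explicit.
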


\begin{proof} Condition on some choice of latent positions. We will make use of a matrix analogue of the Bernstein inequality (see \cite{T15}, Theorem 1.6.2):

\begin{theorem}[Matrix Bernstein] Let $\mM_1, \ldots, \mM_n$ be independent random matrices with common dimensions $m_1 \times m_2$, satisfying $\mathbb{E}[\mM_k] = 0$ and $\|\mM_k\| \leq L$ for each $1 \leq k \leq n$, for some fixed value $L$. 

Let $\mM = \sum \mM_k$ and let $v(\mM)= \max\{\|\mathbb{E}[\mM\mM^\top]\|,\|\mathbb{E}[\mM^\top\mM]\|\}$ denote the matrix variance statistic of $\mM$. Then for all $t \geq 0$, we have 
\begin{align}
	\mathbb{P}\bigl(\|\mM\| \geq t\bigr) \leq (m_1 + m_2)\exp\left(\tfrac{-t^2/2}{v(\mM) + Lt/3}\right).
\end{align}
\end{theorem}

We apply this as follows: given $r \in \{1,\ldots,k\}$, define a matrix $\mT_{ij}^{(r)} \in \mathbb{R}^{n \times (n_1 + \ldots + n_k)}$ for each $i \in \{1,\ldots,n\}$ and $j \in \{1,\ldots,n_r\}$ whose $(i,n_1 + \ldots + n_{r-1}+j)$th entry is equal to $\mA_{ij}^{(r)} - \mP_{ij}^{(r)}$, with all other entries equal to $0$ (in other words, if we divide $\mT_{ij}^{(r)}$ into distinct $n \times n_t$ blocks, then the $r$th block is the only non-zero one, and within this block only the $(i,j)$th entry is non-zero).

We then define matrices $\mM_{ij}^{(r)}$ for each $r \in \{1,\ldots,k\}$ as follows:
\begin{itemize}
	\item If $\mA^{(r)}$ is bipartite, we define $\mM_{ij}^{(r)} = \mT_{ij}^{(r)}$ for all $i \in \{1,\ldots,n\}$ and $j \in \{1,\ldots,n_r\}$;
	\item If $\mA^{(r)}$ is directed, we define $\mM_{ij}^{(r)} = \mT_{ij}^{(r)}$ for all $i,j \in \{1,\ldots,n_r\}$ with $i \neq j$;
	\item If $\mA^{(r)}$ is undirected, we define $\mM_{ij}^{(r)} = \mT_{ij}^{(r)} + \mT_{ji}^{(r)}$ for all $i,j \in \{1,\ldots,n_r\}$ with $i < j$.
\end{itemize}

Observe that $\|\mM_{ij}^{(r)}\| = \left|\mA_{ij}-\mP_{ij}\right| < 1$, and by definition $\mathbb{E}[\mM_{ij}^{(r)}] = 0$, so the matrix sum $\mM = \sum_{i,j,r}\mM_{ij}^{(r)}$ satisfies the criteria for Bernstein's Theorem (where we sum over the variables $i$, $j$ and $r$ according to the cases above). To bound the matrix variance statistic $v(\mM)$, let $\mM_r = \sum_{i,j} \mM^{(r)}_{ij}$, and note that 
\begin{align}
	\mM_r\mM_r^\top = \left\{\begin{array}{cccc}\displaystyle\sum_{l = 1}^{n_r} \bigl(\mA_{il}^{(r)} - \mP_{il}^{(r)}\bigr)\bigl(\mA_{jl}^{(r)} - \mP_{jl}^{(r)}\bigr)&&&\mathrm{if~}\mA^{(r)}\mathrm{~is~bipartite}\\\\\displaystyle\sum_{l \neq i,j} \bigl(\mA_{il}^{(r)} - \mP_{il}^{(r)}\bigr)\bigl(\mA_{jl}^{(r)} - \mP_{jl}^{(r)}\bigr)&&&\mathrm{otherwise},\end{array}\right.
\end{align} 
and therefore that 
\begin{align}
	\mathbb{E}[\mM_r\mM_r^\top]_{ij} = \left\{\begin{array}{cccc}\displaystyle\sum_{l = 1}^{n_r} \mP_{il}^{(r)}\bigl(1 - \mP_{il}^{(r)}\bigr)&&&\mathrm{if~}\mA^{(r)}\mathrm{~is~bipartite~and~}i = j\\\\\displaystyle\sum_{l \neq i} \mP_{il}^{(r)}\bigl(1 - \mP_{il}^{(r)}\bigr)&&&\mathrm{if~}\mA^{(r)}\mathrm{~is~not~bipartite~and~}i = j\\\\0&&&\mathrm{if~}i \neq j.\end{array}\right.
\end{align} 
By definition, $\mP_{il}^{(r)}(1-\mP_{il}^{(r)}) \leq \epsilon_r\rho$ for all $i$ and $l$, and so since $\mathbb{E}[\mM_r\mM_r^\top]$ is diagonal, we see that $\|\mathbb{E}[\mM_r\mM_r^\top]\| \leq \epsilon_r\rho n_r$. Since $\mM\mM^\top = \sum \mM_r\mM_r^\top$ and the $\mM_r$ are independent, it follows that $\|\mathbb{E}[\mM\mM^\top]\| \leq (\epsilon_1n_1 + \ldots + \epsilon_kn_k)\rho \leq \epsilon\rho k^{1/2}n_{\max}$ by the Cauchy--Schwarz inequality. 

Similarly,
\begin{align}
	[\mM_r^\top\mM_r]_{ij} = \left\{\begin{array}{cccc}\displaystyle\sum_{l = 1}^{n} \bigl(\mA_{li}^{(r)} - \mP_{li}^{(r)}\bigr)\bigl(\mA_{lj}^{(r)} - \mP_{lj}^{(r)}\bigr)&&&\mathrm{if~}\mA^{(r)}\mathrm{~is~bipartite}\\\\\displaystyle\sum_{l \neq i,j} \bigl(\mA_{li}^{(r)} - \mP_{li}^{(r)}\bigr)\bigl(\mA_{lj}^{(r)} - \mP_{lj}^{(r)}\bigr)&&&\mathrm{otherwise},\end{array}\right.
\end{align} 
and so 
\begin{align}
	\mathbb{E}[\mM_r^\top\mM_r]_{ij} = \left\{\begin{array}{cccc}\displaystyle\sum_{l = 1}^{n} \mP_{li}^{(r)}\bigl(1 - \mP_{li}^{(r)}\bigr)&&&\mathrm{if~}\mA^{(r)}\mathrm{~is~bipartite~and~}i = j\\\\\displaystyle\sum_{l \neq i} \mP_{li}^{(r)}\bigl(1 - \mP_{li}^{(r)}\bigr)&&&\mathrm{if~}\mA^{(r)}\mathrm{~is~not~bipartite~and~}i = j\\\\0&&&\mathrm{if~}i \neq j,\end{array}\right.
\end{align} 
and as before we see that $\|\mathbb{E}[\mM_r^\top\mM_r]\| \leq \epsilon_r\rho n$, while $\|\mathbb{E}[\mM_s^\top\mM_t]\| = 0$ if $s \neq t$ since the matrices $\mM_r$ are independent. Thus the matrix $\mathbb{E}[\mM^\top\mM]$ is block diagonal, and so it follows that $\|\mathbb{E}[\mM^\top\mM]\| \leq \rho n$, and so certainly $v(\mM) = \mathrm{O}(\epsilon\rho k^{1/2}n)$.

Substituting these into Bernstein's Theorem and rearranging, we find that, for any $t \geq 0$, 
\begin{align}
	\mathbb{P}\bigl(\|\mM\| \geq t\bigr) \leq (n+n_1 + \ldots + n_k) \exp\left(\frac{-3t^2}{6\rho k^{1/2}n + 2t}\right).
\end{align} 
The numerator of the exponential term dominates for $n$ sufficiently large if \\$t = \mathrm{O}\bigl(\epsilon^{1/2}\rho^{1/2} k^{1/4}n^{1/2}\log^{1/2}(n)\bigr)$, and so $\|\mM\|=\mathrm{O}\bigl(\epsilon^{1/2}\rho^{1/2} k^{1/4}n^{1/2}\log^{1/2}(n)\bigr)$ almost surely. 

Finally, we note that $\mM = \mA - \mP + \mP_0$, where $\mP_0$ comprises $k$ distinct blocks of sizes $n \times n_r$, with the $r$th such block either identically zero or diagonal, with entries equal to the diagonal entries of $\mP^{(r)}$, depending on whether or not $\mA^{(r)}$ is bipartite, and satisfies $\|\mP_0\| \leq \epsilon \rho$. The result then follows from subadditivity of the spectral norm and integrating over all possible choices of latent positions. \end{proof}

\begin{corollary}\label{A_bounds} The leading $d$ singular values of $\mA$ satisfy 
\begin{align}
	\frac{\sigma_i(\mA)}{\rho n} \longrightarrow \sqrt{\lambda_i\bigl(\Delta_X\mLam_*\Delta_Y\mLam_*^\top\bigr)}
\end{align} 
almost surely, where $\Delta_Y = c_1\Delta_{Y,1} \oplus \cdots \oplus c_\kappa\Delta_{Y,\kappa}$, and consequently $\sigma_i(\mA) = \mathrm{O}\left(\epsilon\rho n\right)$ and $\sigma_i(\mA) = \Omega\left(\rho n\right)$ almost surely.\end{corollary}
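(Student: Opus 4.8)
The plan is to deduce the corollary from Proposition~\ref{P_bounds}, which describes the singular values of $\mP$, together with the spectral-norm bound on $\mA - \mP$ from Proposition~\ref{spectral_bound}, by a standard perturbation argument. The only tool needed is Weyl's inequality for singular values: for any two matrices of the same dimensions, $|\sigma_i(\mA) - \sigma_i(\mP)| \le \|\mA - \mP\|$ for every $i$.

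First I would observe that, by Proposition~\ref{spectral_bound}, $\|\mA - \mP\| = \mathrm{O}\bigl(\epsilon^{1/2}\rho^{1/2}k^{1/4}n^{1/2}\log^{1/2}(n)\bigr)$ almost surely. Dividing through by $\rho n$ and using $\epsilon \ge 1$ together with the growth restrictions of Section~\ref{subsec:asymptotics} (namely $k/n \to 0$ and $\rho = \omega(\log^c(n)/n^{1/2})$), one checks that $\|\mA - \mP\|/(\rho n) \to 0$ almost surely; in particular $\|\mA - \mP\| = \mathrm{o}(\rho n)$. Combining this with Weyl's inequality yields
\begin{align}
	\left|\frac{\sigma_i(\mA)}{\rho n} - \frac{\sigma_i(\mP)}{\rho n}\right| \le \frac{\|\mA - \mP\|}{\rho n} \longrightarrow 0
\end{align}
almost surely, so $\sigma_i(\mA)/(\rho n)$ has the same almost-sure limit as $\sigma_i(\mP)/(\rho n)$, which by Proposition~\ref{P_bounds} is $\sqrt{\lambda_i(\Delta_X\mLam_*\Delta_Y\mLam_*^\top)}$. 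This establishes the convergence claim.

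For the two consequences, Proposition~\ref{P_bounds} supplies $\sigma_i(\mP) = \mathrm{O}(\epsilon\rho n)$ and $\sigma_i(\mP) = \Omega(\rho n)$ almost surely. Since $\|\mA - \mP\| = \mathrm{o}(\rho n) = \mathrm{o}(\epsilon\rho n)$ (again using $\epsilon \ge 1$), Weyl's inequality gives $\sigma_i(\mA) \le \sigma_i(\mP) + \|\mA - \mP\| = \mathrm{O}(\epsilon\rho n)$ and $\sigma_i(\mA) \ge \sigma_i(\mP) - \|\mA - \mP\| = \Omega(\rho n)$ almost surely, as required. The argument is short and essentially routine; the only point demanding any care is verifying that the perturbation $\|\mA - \mP\|$ is of strictly smaller order than $\rho n$, which is exactly what the sparsity and growth conditions of Section~\ref{subsec:asymptotics} are designed to ensure.
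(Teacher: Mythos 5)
Your proposal is correct and follows essentially the same route as the paper: Weyl's inequality for singular values, $|\sigma_i(\mA) - \sigma_i(\mP)| \le \|\mA - \mP\|$, combined with Propositions~\ref{P_bounds} and~\ref{spectral_bound}. The only difference is that you spell out the verification that $\|\mA-\mP\|/(\rho n) \to 0$ under the sparsity and growth conditions, which the paper leaves implicit.
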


\begin{proof} A corollary of Weyl's inequalities (see, for example, \cite{HJ12}, Corollary 7.3.5) states that $|\sigma_i(\mA) - \sigma_i(\mP)| \leq \|\mA-\mP\|$, and so in particular (applying the reverse triangle inequality where necessary) 
\begin{align}
	\sigma_i(\mP) - \|\mA-\mP\| \leq \sigma_i(\mA) \leq \sigma_i(\mP) - \|\mA-\mP\|.
\end{align} 

The result then follows from Propositions \ref{P_bounds} and \ref{spectral_bound}. \end{proof}

The next few results provide bounds on the asymptotic growth of a number of residual terms in the proofs of our main theorems. While the proofs are similar in nature to a number of results in \cite{LTAPP17}, there are some minor differences to account for the fact that the matrices $\mA$ and $\mP$ are not symmetric, and so we reproduce them in full. We begin an analogue of a bound appearing in Lemma 17 of \cite{LTAPP17}:

\begin{proposition}\label{orth_spectral_bound} $\|\mU_\mP^\top(\mA-\mP)\mV_\mP\|_F = \mathrm{O}\bigl(\log^{1/2}(n)\bigr)$ almost surely.
\end{proposition}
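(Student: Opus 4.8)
The plan is to condition on the latent positions, so that $\mU_\mP$ and $\mV_\mP$ become deterministic, and then to show that the $d\times d$ matrix $\mU_\mP^\top(\mA-\mP)\mV_\mP$ (with $d$ fixed) has every entry of order $\mathrm{O}(\log^{1/2}(n))$ almost surely, whence the Frobenius bound follows since $\|\cdot\|_F\leq d\|\cdot\|$. Write $\mA-\mP=\mM-\mP_0$, where $\mM=\sum_{i,j,r}\mM_{ij}^{(r)}$ is the sum of independent, mean-zero random matrices constructed in the proof of Proposition \ref{spectral_bound}, and $\mP_0$ is the deterministic diagonal correction appearing there, which satisfies $\|\mP_0\|\leq\epsilon\rho$. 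Then $\mU_\mP^\top(\mA-\mP)\mV_\mP=\sum_{i,j,r}\mU_\mP^\top\mM_{ij}^{(r)}\mV_\mP-\mU_\mP^\top\mP_0\mV_\mP$; the second term has Frobenius norm at most $d\|\mP_0\|\leq d\epsilon\rho=\mathrm{O}(1)$, which is lower order than $\log^{1/2}(n)$, so the task reduces to controlling the first sum.

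The key preliminary estimate is a delocalization bound on the singular vectors of $\mP$: namely $\|\mU_\mP\|_{2\to\infty}=\mathrm{O}(n^{-1/2})$ and $\|\mV_\mP\|_{2\to\infty}=\mathrm{O}(n^{-1/2})$ almost surely. Because $\mLam_*$ has rank $d$ and the second-moment matrices are invertible, $\mP$ has rank $d$ with column space equal to that of $\mX$, so $\mU_\mP=\mX(\mX^\top\mX)^{-1/2}\mW$ for some orthogonal $\mW$, and hence $\|\mU_\mP\|_{2\to\infty}\leq\|\mX\|_{2\to\infty}\,\sigma_d(\mX)^{-1}=\mathrm{O}(\rho^{1/2})\cdot\mathrm{O}((\rho n)^{-1/2})$ using boundedness of $\mathcal{X}$ and the control of $\mX^\top\mX$ as in Proposition \ref{P_bounds}; applying the same argument to $\mV_\mP=\mY\mN$ (using that $\mY$ is block diagonal and each $\Delta_{Y,r}$ is invertible, so $\mY^\top\mY$ has least singular value of order $\rho n$) gives the bound for $\mV_\mP$.

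With these in hand, I would apply the matrix Bernstein inequality (quoted in the proof of Proposition \ref{spectral_bound}) to $\sum_{i,j,r}\mU_\mP^\top\mM_{ij}^{(r)}\mV_\mP$: the summands are independent and mean zero; each has spectral norm at most $2\|\mU_\mP\|_{2\to\infty}\|\mV_\mP\|_{2\to\infty}=\mathrm{O}(n^{-1})$, since $\mM_{ij}^{(r)}$ has at most two nonzero entries, each bounded by $1$; and the matrix variance statistic is $\mathrm{O}(1)$, because conjugating by the orthonormal $\mU_\mP$ reduces $\sum\mathbb{E}\bigl[\mU_\mP^\top\mM_{ij}^{(r)}\mV_\mP\mV_\mP^\top\mM_{ij}^{(r)\top}\mU_\mP\bigr]$ to a diagonal matrix whose $i$th entry is bounded by $\tfrac14\sum_j\|(\mV_\mP)_{j\cdot}\|^2=\tfrac14\|\mV_\mP\|_F^2=\mathrm{O}(1)$ (the crucial point being that the sum over $j$ ranges over \emph{all} columns, so no factor of $k$ appears), and symmetrically for the transposed direction. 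Feeding $v=\mathrm{O}(1)$ and $L=\mathrm{O}(n^{-1})$ into Bernstein and taking a threshold proportional to $\log^{1/2}(n)$ gives tail probability $\mathrm{O}(n^{-\alpha})$ for any desired $\alpha$, hence $\|\sum_{i,j,r}\mU_\mP^\top\mM_{ij}^{(r)}\mV_\mP\|=\mathrm{O}(\log^{1/2}(n))$ almost surely; combining with the $\mP_0$ term and integrating over the latent positions finishes the argument. I expect the delocalization step to be the main obstacle: one must handle carefully the case where some $\mLam_r$ are rank-deficient, so that the column space of $\mV_\mP$ is a proper subspace of that of $\mY$, while still writing $\mV_\mP=\mY\mN$ with $\|\mN\|=\mathrm{O}((\rho n)^{-1/2})$; the remaining estimates are essentially bookkeeping.
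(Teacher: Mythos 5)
Your argument is correct, but it follows a genuinely different route from the paper's. The paper works entrywise: writing the $(i,j)$ entry of the $d\times d$ matrix $\mU_\mP^\top(\mA-\mP)\mV_\mP$ as a sum of independent, bounded, zero-mean scalars $u_pv_q^{(r)}(\mA^{(r)}_{pq}-\mP^{(r)}_{pq})$ (suitably symmetrised in the undirected case), it applies scalar Hoeffding, with the variance proxy controlled by nothing more than the column normalisations $\sum_p|u_p|^2=\sum_{r,q}|v^{(r)}_q|^2=1$; the diagonal correction is handled by Cauchy--Schwarz exactly as you handle $\mP_0$, and the Frobenius bound follows since $d$ is fixed. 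You instead keep the matrix structure and apply matrix Bernstein to $\sum_{i,j,r}\mU_\mP^\top\mM_{ij}^{(r)}\mV_\mP$, which forces you to import the delocalisation bounds $\|\mU_\mP\|_{2\to\infty},\|\mV_\mP\|_{2\to\infty}=\mathrm{O}(n^{-1/2})$ (facts the paper only establishes later, in the proof of Proposition \ref{residual_bounds}(i), via $\mU_\mP\mSig_\mP^{1/2}=\mX\widetilde{\mL}$ and the blockwise treatment of $\mV_\mP$; your sketch of them, including the caveat about rank-deficient $\mLam_r$ and the per-block use of $\epsilon_r\le 1$, is sound). Your variance computation is also correct: $\sum\mathbb{E}[\mG\mG^\top]\preceq\tfrac14\|\mV_\mP\|_F^2\,\mU_\mP^\top\mU_\mP=\tfrac{d}{4}\mI$ and symmetrically, so $v=\mathrm{O}(1)$ with no factor of $k$, and with $L=\mathrm{O}(n^{-1})$ the Bernstein tail at $t\asymp\log^{1/2}(n)$ gives the claim. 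The trade-off is that the paper's proof is more elementary and self-contained (it needs only orthonormality of the singular vectors), whereas yours leans on heavier concentration machinery and an extra structural input, but delivers the spectral-norm bound of the whole matrix in a single application rather than entry by entry; both yield the same $\mathrm{O}(\log^{1/2}(n))$ rate, and both share the same mild looseness in dismissing the $\mathrm{O}(\epsilon\rho)$ diagonal term when $k$ is permitted to grow.
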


\begin{proof} Condition on some choice of latent positions. For any $i, j \in \{1,\ldots,d\}$ and $r \in \{1,\ldots,k\}$, let 
\begin{align}
	\mE_{ij}^{(r)} = \left\{\begin{array}{cccc}\displaystyle\sum_{p=1}^n\sum_{q=1}^{n_r} u_pv^{(r)}_q\bigl(\mA^{(r)}_{pq} - \mP^{(r)}_{pq}\bigr)&&& \mathrm{if~}\mA^{(r)} \mathrm{~is~bipartite}\\&&&\\\displaystyle\sum_{p \neq q} u_pv^{(r)}_q\bigl(\mA^{(r)}_{pq} - \mP^{(r)}_{pq}\bigr)&&& \mathrm{if~}\mA^{(r)} \mathrm{~is~directed}\\&&&\\
\displaystyle\sum_{p < q} (u_pv^{(r)}_q + u_qv^{(r)}_p)\bigl(\mA^{(r)}_{pq} - \mP^{(r)}_{pq}\bigr)&&& \mathrm{if~}\mA^{(r)} \mathrm{~is~undirected}\end{array}\right.
\end{align} 
and 
\begin{align}
	\mF_{ij}^{(r)} = \left\{\begin{array}{cccc}\mathbf{0}&&& \mathrm{if~}\mA^{(r)} \mathrm{~is~bipartite}\\&&&\\
\displaystyle\sum_{p=1}^n u_pv^{(r)}_p\mP^{(r)}_{pp}&&& \mathrm{otherwise},\end{array}\right.
\end{align} 
where $u$ and $v^{(r)}$ denote the $i$th and $j$th columns of $\mU_\mP$ and $\mV^{(r)}_\mP$ respectively, so that 
\begin{align}
	\bigl(\mU_\mP^\top(\mA-\mP)\mV_\mP\bigr)_{ij} = \sum_{r=1}^k \mE_{ij}^{(r)} - \sum_{r=1}^k \mF_{ij}^{(r)}.
\end{align}

We can bound the latter term by applying the Cauchy--Schwarz inequality to find that 
\begin{align}
	\Bigl|\sum_{r=1}^k \mF_{ij}^{(r)}\Bigr| \leq \Bigl(\sum_{r=1}^k\sum_{p=1}^n|u_p\mP^{(r)}_{pp}|^2\Bigr)^{1/2} \Bigl(\sum_{r=1}^k\sum_{p=1}^n|v^{(r)}_p|^2\Bigr)^{1/2} = \mathrm{O}(\epsilon\rho),
\end{align} 
and thus can be discounted in our asymptotic analysis.

Each of the $\mE^{(r)}_{ij}$ is a sum of independent zero-mean random variables, with each of the individual terms bounded in absolute value by $|u_pv^{(r)}_q|$ in the bipartite and directed cases and $|u_pv^{(r)}_q+u_qv^{(r)}_p|$ in the undirected case. Applying Hoeffding's inequality, we thus observe that 
\begin{align}
	\mathbb{P}\Biggl(\Bigl|\sum_{r=1}^k\mE^{(r)}_{ij}\Bigr| > t\Biggr) \leq 2\mathrm{exp}\left(\frac{-2t^2}{4\Bigl(\sum\limits_{r_1}\sum\limits_{p, q} |u_pv^{(r_1)}_q|^2 + \sum\limits_{r_3}\sum\limits_{p<q} |u_pv^{(r_2)}_q+u_qv^{(r_2)}_p|^2\Bigr)}\right),
\end{align}
where $r_1$ sums over the bipartite and directed cases and $r_2$ sums over the undirected cases.

Note that $|u_pv^{(r)}_q+u_qv^{(r)}_p|^2 \leq |u_pv^{(r)}_q|^2 + |u_qv^{(r)}_p|^2 + 2|u_pu_q v^{(r)}_qv^{(r)}_p|$, and so 
\begin{align}
	\mathbb{P}\Biggl(\Bigl|\sum_{r=1}^k\mE^{(r)}_{ij}\Bigr| > t\Biggr) \leq 2\mathrm{exp}\left(\frac{-2t^2}{4\Bigl(\sum\limits_{r=1}^k\sum\limits_{p, q} |u_pv^{(r)}_q|^2 +2\sum\limits_{r_2}\sum\limits_{p<q} |u_pu_q v^{(r_2)}_qv^{(r_2)}_p|\Bigr)}\right).
\end{align}

Both summands are at most $1$; the first is clear, while for the second we apply the Cauchy--Schwarz inequality and note that 
\begin{align}
	\sum\limits_{r_2}\sum\limits_{p<q} |u_pu_q v^{(r_2)}_qv^{(r_2)}_p| &\leq \Bigl(\sum\limits_{r_3}\sum\limits_{p<q} |u_pv^{(r_2)}_q|^2\Bigr)^{1/2}\Bigl(\sum\limits_{r_2}\sum\limits_{p<q} |u_qv^{(r_2)}_p|^2\Bigr)^{1/2} \\
&\leq \Bigl(\sum\limits_{r_2}\sum\limits_{p,q} |u_p|^2|v^{(r_2)}_q|^2\Bigr) = 1.
\end{align}

Thus $\sum_{r=1}^k\mE^{(r)}_{ij} = \mathrm{O}\bigl(\log^{1/2}(n)\bigr)$ almost surely, and the result follows after integrating over all possible choices of latent positions. \end{proof}

Before establishing the next set of bounds (which relate to the left and right singular vectors of the matrices $\mA$ and $\mP$), we state the following variation of the Davis--Kahan theorem (see \cite{YWS15}, Theorem 4):

\begin{theorem}[Variant of Davis--Kahan] Let $\mM_1, \mM_2 \in \mathbb{R}^{m \times n}$ have singular value decompositions 
\begin{align}
	\mM_i = \mU_i\mSig_i\mV_i^\top + \mU_{i,\perp}\mSig_{i,\perp}\mV_{i,\perp}^\top,
\end{align}
where $\mU_i \in \mathbb{O}(m \times d)$ has orthonormal columns corresponding to the $d$ greatest singular values of $\mM_i$, for some $1 \leq d \leq n$. Then, if $|\sigma_d(\mM_1)^2 - \sigma_{d+1}(\mM_1)^2| > 0$, we have 
\begin{align}\label{davis_kahan_identity}
	\|\sin \Theta(\mU_2, \mU_1)\| \leq \frac{2\sqrt{d}\left(2\sigma_1(\mM_1)+\|\mM_2-\mM_1\|\right)\|\mM_2-\mM_1\|}{\sigma_{d}(\mM_1)^2-\sigma_{d+1}(\mM_1)^2}.
\end{align} 
where we take $\sigma_{n+1}(\mM_1) = -\infty$. \end{theorem}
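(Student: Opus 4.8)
The plan is to deduce this rectangular statement from the symmetric Davis--Kahan $\sin\Theta$ theorem by passing to Gram matrices. The key observation is that the columns of $\mU_i$ are exactly the eigenvectors of the positive semidefinite matrix $\mM_i\mM_i^\top$ associated with its $d$ largest eigenvalues $\sigma_1(\mM_i)^2 \geq \cdots \geq \sigma_d(\mM_i)^2$, and that $\sin\Theta(\mU_2,\mU_1)$ depends only on the column spaces of $\mU_1$ and $\mU_2$. The hypothesis $\sigma_d(\mM_1)^2 - \sigma_{d+1}(\mM_1)^2 > 0$ is precisely a one-sided spectral gap isolating the leading $d$-dimensional invariant subspace of $\mM_1\mM_1^\top$, so that subspace (and hence $\mathrm{col}(\mU_1)$) is well defined.

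First I would invoke the statistician-friendly, one-sided-gap form of the Davis--Kahan theorem, applied to the symmetric matrices $\mM_1\mM_1^\top$ and $\mM_2\mM_2^\top$ with retained index set $\{1,\ldots,d\}$ and gap $\delta = \sigma_d(\mM_1)^2 - \sigma_{d+1}(\mM_1)^2$. Taking the operator-norm branch of its conclusion gives
\begin{align*}
	\|\sin\Theta(\mU_2,\mU_1)\|_F \leq \frac{2\sqrt{d}\,\bigl\|\mM_2\mM_2^\top - \mM_1\mM_1^\top\bigr\|}{\sigma_d(\mM_1)^2 - \sigma_{d+1}(\mM_1)^2},
\end{align*}
which already produces the factor $2\sqrt{d}$.

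Next I would bound the numerator by a purely algebraic manipulation. Writing $\mM_2\mM_2^\top - \mM_1\mM_1^\top = (\mM_2-\mM_1)\mM_2^\top + \mM_1(\mM_2-\mM_1)^\top$, submultiplicativity of the spectral norm together with $\|\mM_1\| = \sigma_1(\mM_1)$ and $\|\mM_2\| \leq \|\mM_1\| + \|\mM_2 - \mM_1\|$ gives
\begin{align*}
	\bigl\|\mM_2\mM_2^\top - \mM_1\mM_1^\top\bigr\| &\leq \bigl(\|\mM_1\| + \|\mM_2\|\bigr)\|\mM_2 - \mM_1\| \\
	&\leq \bigl(2\sigma_1(\mM_1) + \|\mM_2 - \mM_1\|\bigr)\|\mM_2 - \mM_1\|.
\end{align*}
Substituting this into the previous display yields exactly the claimed inequality; the bound is established for the Frobenius norm of $\sin\Theta$ and \emph{a fortiori} holds for the spectral norm.

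The step requiring the most care is quoting the correct version of Davis--Kahan. The classical formulation needs the eigengap to separate the retained eigenvalues of $\mM_1\mM_1^\top$ from the discarded eigenvalues of \emph{both} matrices, whereas we are handed only the gap $\sigma_d(\mM_1)^2 - \sigma_{d+1}(\mM_1)^2$ in the unperturbed Gram matrix; the Yu--Wang--Samworth variant is exactly what removes any dependence on the spectrum of $\mM_2\mM_2^\top$, and it also makes the bound insensitive to any non-uniqueness of $\mU_2$ when $\sigma_d(\mM_2) = \sigma_{d+1}(\mM_2)$. A secondary check is that squaring is the right reduction: applying the symmetric theorem instead to the Jordan--Wielandt dilation of $\mM_i$ would yield a denominator $\sigma_d(\mM_1) - \sigma_{d+1}(\mM_1)$ rather than the difference of squares, a genuinely different estimate that is weaker when $\sigma_1(\mM_1)$ is large relative to the gap.
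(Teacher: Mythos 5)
The paper does not prove this statement; it is quoted verbatim (up to notation) from the cited reference \cite{YWS15}, Theorem~4, so there is no in-paper argument to compare against. Your proof is correct and is essentially the proof given in that reference: reduce to the symmetric one-sided-gap Davis--Kahan theorem applied to the Gram matrices $\mM_i\mM_i^\top$, then bound $\|\mM_2\mM_2^\top - \mM_1\mM_1^\top\| \leq \bigl(2\sigma_1(\mM_1) + \|\mM_2-\mM_1\|\bigr)\|\mM_2-\mM_1\|$ via the splitting $(\mM_2-\mM_1)\mM_2^\top + \mM_1(\mM_2-\mM_1)^\top$, and finish with $\|\sin\Theta\| \leq \|\sin\Theta\|_F$. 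Your closing remarks --- that the Yu--Wang--Samworth form is needed precisely because only the gap in the unperturbed spectrum is assumed, and that the Jordan--Wielandt dilation would give the weaker non-squared gap --- are both accurate and correctly identify why this reduction, rather than the dilation, is the right one here.
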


We note that the same inequality holds for $\|\sin \Theta(\mV_2, \mV_1)\|$ since the right-hand side of (\ref{davis_kahan_identity}) is invariant under matrix transposition. Using this result, we can prove the following: 

\begin{proposition}\label{alignment_bounds} The following bounds hold almost surely:
\begin{enumerate}[label=\normalfont(\roman*)]
\item $\|\mU_\mA\mU_\mA^\top - \mU_\mP\mU_\mP^\top\|$, $\|\mV_\mA\mV_\mA^\top - \mV_\mP\mV_\mP^\top\| = \mathrm{O}\Bigl(\tfrac{\epsilon^{3/2}k^{1/4}\log^{1/2}(n)}{\rho^{1/2} n^{1/2}}\Bigr)$;
\item $\|\mU_\mA-\mU_\mP\mU_\mP^\top\mU_\mA\|_F$, $\|\mV_\mA-\mV_\mP\mV_\mP^\top\mV_\mA\|_F = \mathrm{O}\Bigl(\tfrac{\epsilon^{3/2}k^{1/4}\log^{1/2}(n)}{\rho^{1/2} n^{1/2}}\Bigr)$;
\item $\|\mU_\mP^\top\mU_\mA\mSig_\mA-\mSig_\mP\mV_\mP^\top\mV_\mA\|_F$, $\|\mSig_\mP\mU_\mP^\top\mU_\mA - \mV_\mP^\top\mV_\mA\mSig_\mA\|_F = \mathrm{O}\bigl(\epsilon^2 k^{1/2}\log(n)\bigr)$; 
\item $\|\mU_\mP^\top\mU_\mA - \mV_\mP^\top\mV_\mA\|_F = \mathrm{O}\Bigl(\tfrac{\epsilon^2 k^{1/2}\log(n)}{\rho n}\Bigr)$
\end{enumerate}
\end{proposition}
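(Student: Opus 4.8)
The plan is to establish the four bounds in order, since (i) feeds (ii), and (i)--(ii) together with Propositions~\ref{spectral_bound} and~\ref{orth_spectral_bound} yield (iii), which in turn gives (iv). For (i) I would apply the Davis--Kahan variant stated above with $\mM_1 = \mP$ and $\mM_2 = \mA$. Since $\mP$ has rank $d$ we have $\sigma_{d+1}(\mP) = 0$, so the denominator in~(\ref{davis_kahan_identity}) is $\sigma_d(\mP)^2 = \Omega(\rho^2 n^2)$ almost surely by Proposition~\ref{P_bounds}. In the numerator, Proposition~\ref{P_bounds} gives $2\sigma_1(\mP) = \mathrm{O}(\epsilon\rho n)$ and Proposition~\ref{spectral_bound} gives $\|\mA - \mP\| = \mathrm{O}(\epsilon^{1/2}\rho^{1/2}k^{1/4}n^{1/2}\log^{1/2}(n))$, so under the sparsity conditions of Section~\ref{subsec:asymptotics} the first term dominates and $2\sigma_1(\mP)+\|\mA-\mP\| = \mathrm{O}(\epsilon\rho n)$; multiplying by $\|\mA-\mP\|$ the numerator is $\mathrm{O}(\epsilon^{3/2}\rho^{3/2}k^{1/4}n^{3/2}\log^{1/2}(n))$ (with $d$ fixed). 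Dividing gives $\|\sin\Theta(\mU_\mA,\mU_\mP)\| = \mathrm{O}(\epsilon^{3/2}k^{1/4}\log^{1/2}(n)/(\rho^{1/2}n^{1/2}))$ and the same for $\mV$; since $\|\mU_\mA\mU_\mA^\top - \mU_\mP\mU_\mP^\top\|$ equals the sine of the largest principal angle between the two $d$-dimensional column spaces, this is exactly (i).

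For (ii), write $\mU_\mA - \mU_\mP\mU_\mP^\top\mU_\mA = (\mI - \mU_\mP\mU_\mP^\top)\mU_\mA$, whose Frobenius norm equals $\|\mU_{\mP,\perp}^\top\mU_\mA\|_F$; the singular values of $\mU_{\mP,\perp}^\top\mU_\mA$ are the sines of the at most $d$ principal angles, so this is at most $\sqrt{d}\,\|\sin\Theta(\mU_\mA,\mU_\mP)\|$ and (i) applies (likewise for $\mV$). For (iii), the SVD identities $\mA\mV_\mA = \mU_\mA\mSig_\mA$ and $\mU_\mP^\top\mP = \mSig_\mP\mV_\mP^\top$ give the identity $\mU_\mP^\top\mU_\mA\mSig_\mA - \mSig_\mP\mV_\mP^\top\mV_\mA = \mU_\mP^\top(\mA - \mP)\mV_\mA$; splitting $\mV_\mA = \mV_\mP\mV_\mP^\top\mV_\mA + (\mI - \mV_\mP\mV_\mP^\top)\mV_\mA$, the first piece contributes at most $\|\mU_\mP^\top(\mA-\mP)\mV_\mP\|_F = \mathrm{O}(\log^{1/2}(n))$ by Proposition~\ref{orth_spectral_bound}, and the second at most $\|\mA-\mP\|\cdot\|\mV_\mA - \mV_\mP\mV_\mP^\top\mV_\mA\|_F = \mathrm{O}(\epsilon^2 k^{1/2}\log(n))$ by Proposition~\ref{spectral_bound} and part~(ii), which dominates. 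The companion bound is identical after transposing: $\mSig_\mP\mU_\mP^\top\mU_\mA - \mV_\mP^\top\mV_\mA\mSig_\mA = -\bigl(\mU_\mA^\top(\mA-\mP)\mV_\mP\bigr)^\top$, now split via $\mU_\mA = \mU_\mP\mU_\mP^\top\mU_\mA + (\mI - \mU_\mP\mU_\mP^\top)\mU_\mA$.

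For (iv), subtract the two identities of (iii): with $\mathbf{D} = \mU_\mP^\top\mU_\mA - \mV_\mP^\top\mV_\mA$ they combine into the Sylvester-type equation $\mSig_\mP\mathbf{D} + \mathbf{D}\mSig_\mA = \mathbf{E}$, where $\|\mathbf{E}\|_F = \mathrm{O}(\epsilon^2 k^{1/2}\log(n))$. Since $\mSig_\mP$ and $\mSig_\mA$ are diagonal with entries $\sigma_i(\mP), \sigma_j(\mA) = \Omega(\rho n)$ almost surely (Proposition~\ref{P_bounds} and Corollary~\ref{A_bounds}), this solves entrywise, $\mathbf{D}_{ij} = \mathbf{E}_{ij}/(\sigma_i(\mP) + \sigma_j(\mA))$, so $\|\mathbf{D}\|_F \leq \|\mathbf{E}\|_F/\Omega(\rho n) = \mathrm{O}(\epsilon^2 k^{1/2}\log(n)/(\rho n))$, as required.

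The step I expect to require the most care is (iii): one must check that in the two-term split it is the term carrying $\|\mA-\mP\|$ against the $\sin\Theta$-bound of (ii) that dominates, so that the cruder $\mathrm{O}(\log^{1/2}(n))$ contribution of Proposition~\ref{orth_spectral_bound} is absorbed, and keep track of the powers of $\epsilon$, $k$, $\rho$ and $n$ through the product. Once (iii) is in hand, (iv) is the short observation that the diagonal structure of $\mSig_\mP$ and $\mSig_\mA$ reduces the Sylvester equation to scalar division with denominators bounded below by $\Omega(\rho n)$. All of these estimates hold on a single high-probability event obtained by intersecting those of the cited results, so the conclusions hold mutually almost surely.
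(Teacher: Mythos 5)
Your proposal is correct and follows essentially the same route as the paper's proof: Davis--Kahan applied to $(\mP,\mA)$ for (i), the projection identity for (ii), the decomposition of $\mU_\mP^\top(\mA-\mP)\mV_\mA$ into a $\mathrm{O}(\log^{1/2}(n))$ piece (Proposition~\ref{orth_spectral_bound}) and a dominant $\|\mA-\mP\|\cdot\|\mV_\mA-\mV_\mP\mV_\mP^\top\mV_\mA\|_F$ piece for (iii), and the combination of the two identities of (iii) with entrywise division by $\sigma_i(\mP)+\sigma_j(\mA)=\Omega(\rho n)$ for (iv), which is exactly the paper's rearrangement $\mathbf{D}=(\mathbf{E}_1+\mathbf{E}_2)\mSig_\mA^{-1}-\mSig_\mP\mathbf{D}\mSig_\mA^{-1}$ written as a Sylvester equation. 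The only cosmetic differences are your use of principal angles in (ii) and the explicit transposed identity for the companion term in (iii), which the paper leaves as ``an identical argument.''
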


\begin{proof}
\leavevmode
\begin{enumerate}[label=\normalfont(\roman*)]
\item Let $\sigma_1, \ldots, \sigma_d$ denote the singular values of $\mU_\mP^\top\mU_\mA$, and let $\theta_i = \cos^{-1}(\sigma_i)$ be the principal angles. It is a standard result that the non-zero eigenvalues of the matrix $\mU_\mA\mU_\mA^\top - \mU_\mP\mU_\mP^\top$ are precisely the $\sin(\theta_i)$ (each occurring twice) and so by Davis--Kahan we have 
\begin{align}
	\|\mU_\mA\mU_\mA^\top - \mU_\mP\mU_\mP^\top\| = \max_{i \in \{1,\ldots,d\}}|\sin(\theta_i)| \leq \frac{2\sqrt{d}\left(2\sigma_1(\mP)+\|\mA-\mP\|\right)\|\mA-\mP\|}{\sigma_d(\mP)^2}
\end{align}
 for $n$ sufficiently large.

Applying the bounds from Propositions \ref{P_bounds} and \ref{spectral_bound} then shows that 
\begin{align}
	\|\mU_\mA\mU_\mA^\top - \mU_\mP\mU_\mP^\top\| = \mathrm{O}\Bigl(\tfrac{\epsilon^{3/2}k^{1/4}\log^{1/2}(n)}{\rho^{1/2} n^{1/2}}\Bigr).
\end{align} 
An identical argument gives the result for $\|\mV_\mA\mV_\mA^\top - \mV_\mP\mV_\mP^\top\|$.

\item Using the bound from part (i), we find that 
\begin{align}
	\|\mU_\mA-\mU_\mP\mU_\mP^\top\mU_\mA\|_F = \|(\mU_\mA\mU_\mA^\top-\mU_\mP\mU_\mP^\top)\mU_\mA\|_F = \mathrm{O}\Bigl(\tfrac{\epsilon^{3/2}k^{1/4}\log^{1/2}(n)}{\rho^{1/2} n^{1/2}}\Bigr).
\end{align} 
An identical argument bounds the term $\|\mV_\mA-\mV_\mP\mV_\mP^\top\mV_\mA\|_F$.

\item Observe that 
\begin{align}
	\mU_\mP^\top\mU_\mA\mSig_\mA-\mSig_\mP\mV_\mP^\top\mV_\mA = \mU_\mP^\top(\mA-\mP)\mV_\mA
\end{align}
and that we may rewrite the right-hand term to find that
\begin{align}
\begin{aligned}
	\mU_\mP^\top\mU_\mA\mSig_\mA-\mSig_\mP\mV_\mP^\top\mV_\mA &= \mU_\mP^\top(\mA-\mP)(\mV_\mA - \mV_\mP\mV_\mP^\top\mV_\mA) \\
	&\phantom{=}+ \mU_\mP^\top(\mA-\mP)\mV_\mP\mV_\mP^\top\mV_\mA.
\end{aligned}
\end{align} 

These terms satisfy 
\begin{align}
	\|\mU_\mP^\top(\mA-\mP)(\mV_\mA - \mV_\mP\mV_\mP^\top\mV_\mA)\|_F = \mathrm{O}\bigl(\epsilon^2 k^{1/2}\log(n)\bigr)
\end{align} 
and 
\begin{align}
	\mU_\mP^\top(\mA-\mP)\mV_\mP\mV_\mP^\top\mV_\mA = \mathrm{O}\bigl(\log^{1/2}(n)\bigr)
\end{align}
by Propositions \ref{spectral_bound}, \ref{orth_spectral_bound} and the result from part (ii), and thus 
\begin{align}
	\|\mU_\mP^\top\mU_\mA\mSig_\mA-\mSig_\mP\mV_\mP^\top\mV_\mA\|_F = \mathrm{O}\bigl(\epsilon^2 k^{1/2}\log(n)\bigr).
\end{align}

An identical argument bounds the term $\|\mSig_\mP\mU_\mP^\top\mU_\mA - \mV_\mP^\top\mV_\mA\mSig_\mA\|_F$.

\item Note that 
\begin{align}
\begin{aligned}
	\mU_\mP^\top\mU_\mA - \mV_\mP^\top\mV_\mA &= \bigl((\mU_\mP^\top\mU_\mA\mSig_\mA-\mSig_\mP\mV_\mP^\top\mV_\mA) + (\mSig_\mP\mU_\mP^\top\mU_\mA \\
	&\phantom{=}- \mV_\mP^\top\mV_\mA\mSig_\mA)\bigr)\mSig_\mA^{-1} - \mSig_\mP(\mU_\mP^\top\mU_\mA- \mV_\mP^\top\mV_\mA)\mSig_\mA^{-1}.
\end{aligned}	
\end{align}

 For any $i, j$ we find (after rearranging and bounding the absolute value of the right-hand terms by the Frobenius norm):

\begin{align}
\begin{aligned}
	\bigl|(\mU_\mP^\top\mU_\mA - \mV_\mP^\top\mV_\mA)_{ij}\bigr|\Bigl(1 + \tfrac{\sigma_i(\mP)}{\sigma_j(\mA)}\Bigr) &\leq \bigl(\|\mU_\mP^\top\mU_\mA\mSig_\mA-\mSig_\mP\mV_\mP^\top\mV_\mA\|_F \\
	&\phantom{=}+ \|\mSig_\mP\mU_\mP^\top\mU_\mA - \mV_\mP^\top\mV_\mA\mSig_\mA\|_F\bigr)\|\mSig_\mA^{-1}\|_F
\end{aligned}
\end{align} 
where we have used the result from part (iii) and Corollary \ref{A_bounds}.

Consequently, we find that 
\begin{align}
	\bigl|(\mU_\mP^\top\mU_\mA - \mV_\mP^\top\mV_\mA)_{ij}\bigr| = \mathrm{O}\Bigl(\tfrac{\epsilon^2 k^{1/2}\log(n)}{\rho n}\Bigr)
\end{align}
by noting that $\Bigl(1 + \tfrac{\sigma_i(\mP)}{\sigma_j(\mA)}\Bigr) \geq 1$. 
\end{enumerate}
\end{proof}

The following result (an analogue of \cite{LTAPP17}, Proposition 16) relates to orthogonal matrix $\mW$ used to perform a simultaneous Procrustes alignment of $\mX_\mA$ with $\mX_\mP$ and $\mY_\mA$ with $\mY_\mP$.

\begin{proposition}\label{procrustes_bound} Let $\mU_\mP^\top\mU_\mA + \mV_\mP^\top\mV_\mA$ admit the singular value decomposition 
\begin{align}
	\mU_\mP^\top\mU_\mA + \mV_\mP^\top\mV_\mA = \mW_1\mSig\mW_2^\top,
\end{align} 
and let $\mW = \mW_1\mW_2^\top$. Then 
\begin{align}
	\max\bigl\{\|\mU_\mP^\top\mU_\mA - \mW\|_F, ~\|\mV_\mP^\top\mV_\mA - \mW\|_F\bigr\} = \mathrm{O}\Bigl(\tfrac{\epsilon^3 k^{1/2}\log(n)}{\rho n}\Bigr)
\end{align}
almost surely.\end{proposition}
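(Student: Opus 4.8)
The plan is to work throughout with the $d\times d$ matrices $M=\mU_\mP^\top\mU_\mA$ and $N=\mV_\mP^\top\mV_\mA$. By the given singular value decomposition, $\mW=\mW_1\mW_2^\top$ is the orthogonal polar factor of $M+N$, so the matrix $\mW^\top(M+N)=\mW_2\mSig\mW_2^\top$ is \emph{exactly} symmetric and positive semi-definite, with eigenvalues the singular values $\sigma_i(M+N)$ — this exact symmetry is what will make the argument work. Since $\mW\in\mathbb{O}(d)$ and $\|\cdot\|_F$ is invariant under left multiplication by an orthogonal matrix, bounding $\|\mU_\mP^\top\mU_\mA-\mW\|_F$ and $\|\mV_\mP^\top\mV_\mA-\mW\|_F$ is the same as bounding $\|\mW^\top M-\mI\|_F$ and $\|\mW^\top N-\mI\|_F$; and since, by Proposition~\ref{alignment_bounds}(iv), $\mW^\top M$ and $\mW^\top N$ differ by $\|M-N\|_F=\mathrm O(\epsilon^2k^{1/2}\log(n)/(\rho n))$, which is within the claimed bound because $\epsilon\ge\kappa^{1/2}\ge1$, it will suffice to bound $A:=\mW^\top M$ against $\mI$ and then finish by the triangle inequality.

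First I would split $A=A_{\mathrm{sym}}+A_{\mathrm{anti}}$ into symmetric and antisymmetric parts. Using that $A+\mW^\top N=\mW^\top(M+N)$ is symmetric together with $\mW^\top N=A+\mW^\top(N-M)$, a short manipulation gives $\|A-A^\top\|_F\le\|N-M\|_F$, hence $\|A_{\mathrm{anti}}\|_F=\mathrm O(\|N-M\|_F)$. Taking symmetric parts of $\mW^\top(M+N)=2A+\mW^\top(N-M)$ then gives $\|2A_{\mathrm{sym}}-\mW^\top(M+N)\|_F\le\|N-M\|_F$, so the task reduces to showing $\mW^\top(M+N)$ is close to $2\mI$. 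Its eigenvalues are the $\sigma_i(M+N)$; from $M+N=2M+(N-M)$ and Weyl's inequalities, $|\sigma_i(M+N)-2\sigma_i(M)|\le\|N-M\|$, while the singular values of $M=\mU_\mP^\top\mU_\mA$ are the cosines of the principal angles between $\operatorname{range}(\mU_\mP)$ and $\operatorname{range}(\mU_\mA)$, the largest of which, $\theta_{\max}$, satisfies $\sin\theta_{\max}=\|\mU_\mA\mU_\mA^\top-\mU_\mP\mU_\mP^\top\|$. Thus $1\ge\sigma_i(M)\ge\cos\theta_{\max}\ge1-\sin^2\theta_{\max}$, so $|\sigma_i(M+N)-2|=\mathrm O(\sin^2\theta_{\max}+\|N-M\|)$ and therefore $\|\mW^\top(M+N)-2\mI\|_F=\mathrm O(\sin^2\theta_{\max}+\|N-M\|)$ (the factor of $\sqrt d$ relating operator and Frobenius norms being a constant).

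Putting the pieces together, $\|A-\mI\|_F\le\|A_{\mathrm{sym}}-\mI\|_F+\|A_{\mathrm{anti}}\|_F=\mathrm O\bigl(\|N-M\|_F+\sin^2\theta_{\max}\bigr)$. By Proposition~\ref{alignment_bounds}(i), $\sin^2\theta_{\max}=\mathrm O(\epsilon^3k^{1/2}\log(n)/(\rho n))$, and by part~(iv) $\|N-M\|_F=\mathrm O(\epsilon^2k^{1/2}\log(n)/(\rho n))$; since $\epsilon\ge1$ both are $\mathrm O(\epsilon^3k^{1/2}\log(n)/(\rho n))$, which yields $\|\mU_\mP^\top\mU_\mA-\mW\|_F=\mathrm O(\epsilon^3k^{1/2}\log(n)/(\rho n))$ and then, adding $\|M-N\|_F$, the same bound for $\|\mV_\mP^\top\mV_\mA-\mW\|_F$. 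All the estimates invoked hold on a common event of probability at least $1-n^{-\alpha}$, so the conclusion holds almost surely in the sense of Section~\ref{subsec:asymptotics}.

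The delicate point — and the reason for routing the argument through $\mW^\top(M+N)$ rather than estimating $\|\mW^\top M-\mI\|$ directly — is that the naive estimate only gives the rate $\|\mU_\mA\mU_\mA^\top-\mU_\mP\mU_\mP^\top\|$, which is the \emph{square root} of the rate required. The quadratic gain is available only because $\mW^\top(M+N)$ is genuinely symmetric and positive semi-definite (not merely approximately so), so its deviation from $2\mI$ is governed by $1-\cos\theta_{\max}\asymp\sin^2\theta_{\max}$ rather than by $\sin\theta_{\max}$; getting this bookkeeping right, and checking that the symmetric and antisymmetric parts are both controlled at the correct order, will be the main obstacle.
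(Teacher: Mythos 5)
Your proof is correct, but it reaches the bound by a genuinely different route from the paper. The paper exploits the variational characterisation of $\mW$: it notes that $\mW$ minimises $\|\mU_\mP^\top\mU_\mA - \mQ\|_F^2 + \|\mV_\mP^\top\mV_\mA - \mQ\|_F^2$ over $\mQ \in \mathbb{O}(d)$, exhibits the surrogate candidate $\mW_\mU$ (the orthogonal polar factor of $\mU_\mP^\top\mU_\mA$ alone), bounds $\|\mU_\mP^\top\mU_\mA - \mW_\mU\|_F$ by $\sum_i(1-\sigma_i) \le \sum_i(1-\sigma_i^2) = \sum_i\sin^2\theta_i \le d\,\|\mU_\mA\mU_\mA^\top - \mU_\mP\mU_\mP^\top\|^2$, transfers this to $\|\mV_\mP^\top\mV_\mA - \mW_\mU\|_F$ via Proposition~\ref{alignment_bounds}(iv), and then concludes because the objective at the true minimiser $\mW$ cannot exceed its value at $\mW_\mU$. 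You instead analyse $\mW$ directly as the polar factor of $M+N$, splitting $\mW^\top M$ into symmetric and antisymmetric parts: the antisymmetric part is controlled by $\|M-N\|_F$, and the symmetric part is pinned to $\mI$ through the exact spectral identity $\mW^\top(M+N) = \mW_2\mSig\mW_2^\top$ combined with Weyl's inequality and $1-\cos\theta_{\max}\le\sin^2\theta_{\max}$. Both arguments turn on the same quadratic gain $1-\sigma_i \asymp \sin^2\theta_i$ and invoke exactly parts (i) and (iv) of Proposition~\ref{alignment_bounds}, so the rates agree. The paper's route is shorter because the variational property lets it avoid touching $\mW$ itself; yours is slightly longer but entirely constructive --- it yields explicit control of $\|\mW^\top M - \mI\|_F$ without appealing to optimality, which some readers may find more transparent, at the cost of the extra bookkeeping you flag in your final paragraph. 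Your auxiliary claims ($\epsilon \ge \kappa^{1/2} \ge 1$, the identification of $\sigma_i(M)$ with cosines of principal angles, and $\|\mW^\top(M+N)-2\mI\|_F \le \sqrt{d}\max_i|\sigma_i(M+N)-2|$) all check out against the paper's conventions.
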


\begin{proof} A standard argument shows that $\mW$ minimises the term $\|\mU_\mP^\top\mU_\mA - \mQ\|_F^2 + \|\mV_\mP^\top\mV_\mA - \mQ\|_F^2$ among all $\mQ \in \mathbb{O}(d)$. Let $\mU_\mP^\top\mU_\mA = \mW_{\mU,1}\mSig_\mU\mW_{\mU,2}^\top$ be the singular value decomposition of $\mU_\mP^\top\mU_\mA$, and define $\mW_\mU \in \mathbb{O}(d)$ by $\mW_\mU = \mW_{\mU,1}\mW_{\mU,2}^\top$. Then 
\begin{align}
\begin{aligned}
	\|\mU_\mP^\top \mU_\mA - \mW_\mU\|_F &= \|\mSig - \mI\|_F = \Bigl(\sum_{i=1}^{d} (1-\sigma_i)^2\Bigr)^{1/2} \leq \sum_{i=1}^{d} (1-\sigma_i) \\
	&\leq \sum_{i=1}^{d}(1-\sigma_i^2) = \sum_{i=1}^{d} \sin^2(\theta_i) \leq d\|\mU_\mA\mU_\mA^\top - \mU_\mP\mU_\mP^\top\|^2 \\
\end{aligned}
\end{align}
and so 
\begin{align}
	\|\mU_\mP^\top \mU_\mA - \mW_\mU\|_F = \mathrm{O}\Bigl(\tfrac{\epsilon^3 k^{1/2}\log(n)}{\rho n}\Bigr).
\end{align}

Also, 
\begin{align}
	\|\mV_\mP^\top \mV_\mA - \mW_\mU\|_F \leq \|\mV_\mP^\top \mV_\mA - \mU_\mP^\top\mU_\mA\|_F + \|\mU_\mP^\top\mU_\mA - \mW_\mU\|_F
\end{align}
	and so 
\begin{align}
	\|\mV_\mP^\top \mV_\mA - \mW_\mU\|_F = \mathrm{O}\Bigl(\tfrac{\epsilon^3k^{1/2}\log(n)}{\rho n}\Bigr)
\end{align} 
by Proposition \ref{alignment_bounds}.

Combining these shows that 
\begin{align}
	\|\mU_\mP^\top\mU_\mA - \mW\|_F^2 + \|\mV_\mP^\top\mV_\mA - \mW\|_F^2 \leq \|\mU_\mP^\top\mU_\mA - \mW_\mU\|_F^2 + \|\mV_\mP^\top\mV_\mA - \mW_\mU\|_F^2 
\end{align}
and thus
\begin{align}
	\max\bigl\{\|\mU_\mP^\top\mU_\mA - \mW\|_F, ~\|\mV_\mP^\top\mV_\mA - \mW\|_F\bigr\} = \mathrm{O}\Bigl(\tfrac{\epsilon^3 k^{1/2}\log(n)}{\rho n}\Bigr)
\end{align} as required. 
\end{proof}

The following bounds are a straightforward adaptation of \cite{LTAPP17}, Lemma 17:

\begin{proposition}\label{W_bounds} The following bounds hold almost surely:
\begin{enumerate}[label=\normalfont(\roman*)]
	\item $\|\mW\mSig_\mA - \mSig_\mP\mW\|_F = \mathrm{O}\bigl(\epsilon^4 k^{1/2}\log(n)\bigr)$;
	\item $\|\mW\mSig_\mA^{1/2} - \mSig_\mP^{1/2}\mW\|_F = \mathrm{O}\Bigl(\tfrac{\epsilon^4 k^{1/2}\log(n)}{\rho^{1/2} n^{1/2}}\Bigr)$;
	\item $\|\mW\mSig_\mA^{-1/2} - \mSig_\mP^{-1/2}\mW\|_F = \mathrm{O}\Bigl(\tfrac{\epsilon^4 k^{1/2}\log(n)}{\rho^{3/2} n^{3/2}}\Bigr)$.
\end{enumerate}
\end{proposition}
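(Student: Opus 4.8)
The plan is to obtain (i) by a direct triangle-inequality argument combining the alignment estimate of Proposition \ref{alignment_bounds}(iii) with the Procrustes estimate of Proposition \ref{procrustes_bound}, and then to bootstrap (ii) and (iii) from (i) by an elementary entrywise ``divided difference'' computation exploiting that $\mSig_\mA$ and $\mSig_\mP$ are diagonal while $\mW$ is a fixed $d \times d$ orthogonal matrix.

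For (i), write $\mA_1 = \mU_\mP^\top\mU_\mA$ and $\mA_2 = \mV_\mP^\top\mV_\mA$ and decompose
\[
	\mW\mSig_\mA - \mSig_\mP\mW = (\mW - \mA_1)\mSig_\mA + (\mA_1\mSig_\mA - \mSig_\mP\mA_2) + \mSig_\mP(\mA_2 - \mW).
\]
The middle term is $\mathrm{O}(\epsilon^2 k^{1/2}\log(n))$ by Proposition \ref{alignment_bounds}(iii). For the two outer terms I would use $\|\mW - \mA_1\|_F, \|\mA_2 - \mW\|_F = \mathrm{O}(\epsilon^3 k^{1/2}\log(n)/(\rho n))$ from Proposition \ref{procrustes_bound} together with $\|\mSig_\mA\|, \|\mSig_\mP\| = \mathrm{O}(\epsilon\rho n)$ from Corollary \ref{A_bounds} and Proposition \ref{P_bounds} (submultiplicativity of the Frobenius norm with respect to the operator norm); each outer term then contributes $\mathrm{O}(\epsilon^4 k^{1/2}\log(n))$. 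Since $\epsilon \geq 1$ (as $\epsilon_1 = \cdots = \epsilon_\kappa = 1$ with $\kappa \geq 1$), the $\epsilon^4$ terms dominate the $\epsilon^2$ term, which gives (i).

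For (ii) and (iii), observe that for any real function $f$ the $(i,j)$ entry of $\mW f(\mSig_\mA) - f(\mSig_\mP)\mW$ is $\mW_{ij}\bigl(f(\sigma_j(\mA)) - f(\sigma_i(\mP))\bigr)$, and that
\[
	\bigl|f(\sigma_j(\mA)) - f(\sigma_i(\mP))\bigr| = \frac{\bigl|g(\sigma_j(\mA)) - g(\sigma_i(\mP))\bigr|}{h\bigl(\sigma_j(\mA),\sigma_i(\mP)\bigr)},
\]
where $g = \mathrm{id}$ and $h(a,b) = a^{1/2}+b^{1/2}$ when $f(x) = x^{1/2}$, and $g(x) = x^{1/2}$ and $h(a,b) = (ab)^{1/2}$ when $f(x) = x^{-1/2}$. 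Since the relevant $\sigma_i(\mP)$ and $\sigma_j(\mA)$ are all $\Omega(\rho n)$ by Proposition \ref{P_bounds} and Corollary \ref{A_bounds}, the denominator $h$ is bounded below by a constant multiple of $\rho^{1/2}n^{1/2}$ in the first case and of $\rho n$ in the second; summing the squared entries then yields
\[
	\|\mW\mSig_\mA^{1/2} - \mSig_\mP^{1/2}\mW\|_F \leq \tfrac{c}{\rho^{1/2}n^{1/2}}\|\mW\mSig_\mA - \mSig_\mP\mW\|_F, \qquad \|\mW\mSig_\mA^{-1/2} - \mSig_\mP^{-1/2}\mW\|_F \leq \tfrac{c'}{\rho n}\|\mW\mSig_\mA^{1/2} - \mSig_\mP^{1/2}\mW\|_F,
\]
and substituting (i) and then (ii) produces the stated rates. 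The argument is routine; the only real care needed is bookkeeping the powers of $\epsilon$, $k$ and $\rho$, and invoking the almost-sure singular-value lower bounds on the same high-probability event as the alignment and Procrustes bounds so that all estimates hold simultaneously — the single substantive point being the domination in (i), which rests on $\epsilon \geq 1$.
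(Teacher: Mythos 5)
Your proposal is correct and follows essentially the same route as the paper: the same three-term decomposition of $\mW\mSig_\mA - \mSig_\mP\mW$ using Propositions \ref{alignment_bounds}(iii) and \ref{procrustes_bound} together with the singular-value bounds for part (i), and the same entrywise divided-difference computation with the lower bounds $\sigma_i(\mP), \sigma_j(\mA) = \Omega(\rho n)$ to bootstrap parts (ii) and (iii). Your explicit remark that $\epsilon \geq 1$ is needed for the $\epsilon^4$ terms to dominate is a point the paper leaves implicit, but it is correct under the stated sparsity conditions.
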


\begin{proof}
\leavevmode
\begin{enumerate}[label=\normalfont(\roman*)]
	\item Observe that
\begin{align}
	\mW\mSig_\mA - \mSig_\mP\mW = (\mW-\mU_\mP^\top\mU_\mA)\mSig_\mA + \mU_\mP^\top\mU_\mA\mSig_\mA - \mSig_\mP\mW
\end{align}
and that the right-hand expression may be rewritten as
\begin{align}
	(\mW-\mU_\mP^\top\mU_\mA)\mSig_\mA + (\mU_\mP^\top\mU_\mA\mSig_\mA - \mSig_\mP\mV_\mP^\top\mV_\mA) + \mSig_\mP(\mV_\mP^\top\mV_\mA - \mW).
\end{align}

The terms $\|(\mW-\mU_\mP^\top\mU_\mA)\mSig_\mA\|_F$ and $\|\mSig_\mP(\mV_\mP^\top\mV_\mA-\mW)\|_F$ are both $\mathrm{O}\bigl(\epsilon^4 k^{1/2}\log(n)\bigr)$ (as shown by Propositions \ref{P_bounds}, \ref{procrustes_bound} and Corollary \ref{A_bounds}), while the term $\|\mU_\mP^\top\mU_\mA\mSig_\mA - \mSig_\mP\mV_\mP^\top\mV_\mA\|_F$ is $\mathrm{O}\bigl(\epsilon^2 k^{1/2}\log(n)\bigr)$ by Proposition \ref{alignment_bounds}, and so $\|\mW\mSig_\mA - \mSig_\mP\mW\|_F = \mathrm{O}\bigl(\epsilon^4 k^{1/2}\log(n)\bigr)$ as required.

\item Note that 
\begingroup
\addtolength{\jot}{0.6em}
\begin{align}
	\bigl(\mW\mSig_\mA^{1/2}-\mSig_\mP^{1/2} \mW\bigr)_{ij} &= \mW_{ij}\bigl(\sigma_j(\mA)^{1/2} - \sigma_i(\mP)^{1/2}\bigr) \\
	&= \frac{\mW_{ij}(\sigma_j(\mA) - \sigma_i(\mP))}{\sigma_j(\mA)^{1/2} + \sigma_i(\mP)^{1/2}} \\
	&= \frac{\bigl(\mW\mSig_\mA - \mSig_\mP\mW\bigr)_{ij}}{\sigma_j(\mA)^{1/2} + \sigma_i(\mP)^{1/2}},
\end{align}
\endgroup
and so we find that $\|\mW\mSig_\mA^{1/2} - \mSig_\mP^{1/2}\mW\|_F = \mathrm{O}\Bigl(\tfrac{\epsilon^4 k^{1/2}\log(n)}{\rho^{1/2} n^{1/2}}\Bigr)$ by applying part (i) and summing over all $i, j \in \{1,\ldots,d\}$.

\item Note that 
\begingroup
\addtolength{\jot}{0.6em}
\begin{align}
	\bigl(\mW\mSig_\mA^{-1/2}-\mSig_\mP^{-1/2} \mW\bigr)_{ij} &= \frac{\mW_{ij}(\sigma_i(\mP)^{1/2} - \sigma_j(\mA)^{1/2})}{\sigma_i(\mP)^{1/2}\sigma_j(\mA)^{1/2}} \\
	&= \frac{\bigl(\mW\mSig_\mA^{1/2}-\mSig_\mP^{1/2} \mW\bigr)_{ij}}{\sigma_i(\mP)^{1/2}\sigma_j(\mA)^{1/2}}
\end{align}
\endgroup
and so we find that $\|\mW\mSig_\mA^{-1/2} - \mSig_\mP^{-1/2}\mW\|_F = \mathrm{O}\Bigl(\tfrac{\epsilon^4 k^{1/2}\log(n)}{\rho^{3/2} n^{3/2}}\Bigr)$ by applying part (ii) and summing over all $i, j \in \{1, \ldots, d\}$. 
\end{enumerate} 
\end{proof}

The next results establish the existence of, and some properties relating to, the matrices $\widetilde{\mL}$ and $\widetilde{\mR}_r$ which map the latent position matrices $\mX$ and $\mY^{(r)}$ to the embeddings $\mX_\mP$ and $\mY^{(r)}_\mP$ respectively. In particular, where they exist, we bound the growth of the spectral norm of the inverses (or pseudo-inverses where appropriate) of these matrices, which is necessary in order to be able to recover the latent positions from the UASE.

\begin{proposition}\label{L_matrices} If $\mX$ is of rank $d$ then there exists a matrix $\widetilde{\mL} \in \mathrm{GL}(d)$ such that $\mX_\mP = \mX\widetilde{\mL}$. In addition, if $\mY^{(r)}$ is of rank $d_r$ then $\mY^{(r)}_\mP = \mY^{(r)}\widetilde{\mR}_r$, where the matrix $\widetilde{\mR}_r \in \mathbb{R}^{d_r \times d}$ satisfies $\widetilde{\mL}\widetilde{\mR}_r^\top = \mLam_{\epsilon,r}$.  In particular, $\mathrm{rank}(\widetilde{\mR}_r)= \mathrm{rank}(\mLam_r)$.\end{proposition}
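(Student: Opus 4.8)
The plan is to exploit the factorisation $\mP = \mX_\mP \mY_\mP^\top$, which holds because $\mX_\mP\mY_\mP^\top = \mU_\mP\mSig_\mP^{1/2}\mSig_\mP^{1/2}\mV_\mP^\top = \mU_\mP\mSig_\mP\mV_\mP^\top = \mP$, together with its block restriction $\mP^{(r)} = \mX_\mP\mY^{(r)\top}_\mP$, obtained by splitting $\mY_\mP = \mV_\mP\mSig_\mP^{1/2}$ into its $k$ row-blocks and comparing with $\mP = [\mP^{(1)}|\cdots|\mP^{(k)}]$. First I would record that $\mP$ has rank $d$ almost surely: by Proposition \ref{P_bounds} the leading $d$ singular values of $\mP$ have strictly positive limits, since $\Delta_X\mLam_*\Delta_Y\mLam_*^\top$ is a product of two positive-definite matrices and hence has positive eigenvalues ($\mLam_*\Delta_Y\mLam_*^\top$ is positive definite because $\mLam_*$ has rank $d$ and $\Delta_Y = c_1\Delta_{Y,1}\oplus\cdots\oplus c_\kappa\Delta_{Y,\kappa}$ is positive definite). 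Consequently $\mSig_\mP$ is invertible, so $\mathrm{Col}(\mX_\mP) = \mathrm{Col}(\mU_\mP) = \mathrm{Col}(\mP)$; and since $\mP = \mX\mLam_\epsilon\mY^\top$ we have $\mathrm{Col}(\mP) \subseteq \mathrm{Col}(\mX)$, an inclusion of $d$-dimensional subspaces (using $\mathrm{rank}(\mX) = d$) that is therefore an equality. As $\mX$ and $\mX_\mP$ are both $n \times d$, of rank $d$, with the same column space, $\widetilde{\mL} := (\mX^\top\mX)^{-1}\mX^\top\mX_\mP$ is the unique matrix with $\mX_\mP = \mX\widetilde{\mL}$, and it lies in $\mathrm{GL}(d)$ because it carries the rank-$d$ matrix $\mX$ onto the rank-$d$ matrix $\mX_\mP$.

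For the right-sided matrices I would fix $r$ and equate the two expressions for $\mP^{(r)}$: $\mX\widetilde{\mL}\mY^{(r)\top}_\mP = \mP^{(r)} = \mX\mLam_{\epsilon,r}\mY^{(r)\top}$. Left-cancelling $\mX$ (legitimate since $\mX^\top\mX$ is invertible) gives $\widetilde{\mL}\mY^{(r)\top}_\mP = \mLam_{\epsilon,r}\mY^{(r)\top}$, and premultiplying by $\widetilde{\mL}^{-1}$ and transposing yields $\mY^{(r)}_\mP = \mY^{(r)}\widetilde{\mR}_r$ with $\widetilde{\mR}_r := \mLam_{\epsilon,r}^\top\widetilde{\mL}^{-\top} \in \mathbb{R}^{d_r \times d}$. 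Then $\widetilde{\mL}\widetilde{\mR}_r^\top = \widetilde{\mL}\widetilde{\mL}^{-1}\mLam_{\epsilon,r} = \mLam_{\epsilon,r}$, as required, and the hypothesis $\mathrm{rank}(\mY^{(r)}) = d_r$ makes this $\widetilde{\mR}_r$ the unique matrix satisfying $\mY^{(r)}_\mP = \mY^{(r)}\widetilde{\mR}_r$ (left-cancel $\mY^{(r)}$). Finally $\mathrm{rank}(\widetilde{\mR}_r) = \mathrm{rank}(\mLam_{\epsilon,r}^\top\widetilde{\mL}^{-\top}) = \mathrm{rank}(\mLam_{\epsilon,r}) = \mathrm{rank}(\mLam_r)$, since right-multiplication by the invertible $\widetilde{\mL}^{-\top}$ and scaling by $\epsilon_r \in (0,1]$ leave the rank unchanged.

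The only step that is not immediate is the first one: the equality $\mathrm{Col}(\mX_\mP) = \mathrm{Col}(\mX)$, hence the invertibility of $\widetilde{\mL}$, is genuinely stronger than the bare hypothesis $\mathrm{rank}(\mX) = d$ (it can fail if some $\mY^{(r)}$ degenerates, so that $\mP$ itself loses rank), and I would supply it from the almost-sure lower bound $\sigma_i(\mP) = \Omega(\rho n)$ in Proposition \ref{P_bounds}; everything after that is routine linear algebra.
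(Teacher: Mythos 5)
Your proof is correct, but it takes a genuinely different route from the paper's. The paper proceeds by a Gram-matrix alignment: it checks that $\mX_\mP\mSig_\mP^{1/2}$ and $\mX\mPi_\mY$ (with $\mPi_\mY = (\mLam_\epsilon\mY^\top\mY\mLam_\epsilon^\top)^{1/2}$) have equal Gram matrices, both being $\mP\mP^\top$, deduces that they differ by an orthogonal matrix $\mQ_\mP$, and reads off the explicit factorisations $\widetilde{\mL} = \mPi_\mY\mQ_\mP\mSig_\mP^{-1/2}$ and $\widetilde{\mR}_r = \mLam_{\epsilon,r}^\top\mPi_\mX\mQ^{(r)}_\mP\mSig_\mP^{-1/2}$; the identity $\widetilde{\mL}\widetilde{\mR}_r^\top = \mLam_{\epsilon,r}$ is then obtained by a separate cancellation that multiplies $\mX\widetilde{\mL}\widetilde{\mR}_r^\top{\mY^{(r)}}^\top = \mX\mLam_{\epsilon,r}{\mY^{(r)}}^\top$ by $(\mX^\top\mX)^{-1}\mX^\top$ and $\mY^{(r)}({\mY^{(r)}}^\top\mY^{(r)})^{-1}$, so it uses the rank hypothesis on $\mY^{(r)}$ there as well. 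You instead argue via column spaces to get $\widetilde{\mL} = (\mX^\top\mX)^{-1}\mX^\top\mX_\mP \in \mathrm{GL}(d)$ and then cancel only $\mX$ to obtain $\widetilde{\mR}_r = \mLam_{\epsilon,r}^\top\widetilde{\mL}^{-\top}$ directly, which makes $\widetilde{\mL}\widetilde{\mR}_r^\top = \mLam_{\epsilon,r}$ and the rank claim immediate and shows the rank hypothesis on $\mY^{(r)}$ is needed only for uniqueness of $\widetilde{\mR}_r$, not for its existence. What the paper's route buys is the explicit factorised forms, which are what Corollary \ref{L_matrix_norm} differentiates into the bounds $\|\widetilde{\mL}\| = \mathrm{O}(\epsilon)$, $\|\widetilde{\mL}^{-1}\| = \mathrm{O}(\epsilon^{1/2})$, $\|\widetilde{\mR}_r\| = \mathrm{O}(\epsilon_r)$ by reading off $\|\mPi_\mY\|$ and $\|\mSig_\mP^{-1/2}\|$; with your form those bounds would need the identity $\widetilde{\mL}\mSig_\mP\widetilde{\mL}^\top = \mLam_\epsilon\mY^\top\mY\mLam_\epsilon^\top$ of Proposition \ref{L_matrix_identities} as an intermediate step. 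What your route buys is transparency about the hypotheses: you correctly identify that the invertibility of $\widetilde{\mL}$ requires $\mathrm{rank}(\mP) = d$ (equivalently, $\mathrm{rank}(\mY\mLam_\epsilon^\top) = d$), which is strictly more than the stated hypothesis $\mathrm{rank}(\mX) = d$ and which the paper's proof also uses silently when it asserts $\mPi_\mY \in \mathrm{GL}(d)$ and inverts $\mSig_\mP$; supplying it from the almost-sure lower bound $\sigma_d(\mP) = \Omega(\rho n)$ of Proposition \ref{P_bounds} is the right fix under the standing assumptions of Section \ref{subsec:asymptotics}.
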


\begin{proof} Let $\mPi_\mX, \mPi_\mY \in \mathrm{GL}(d)$ satisfy $\mPi_\mX = (\mX^\top\mX)^{1/2}$ and $\mPi_\mY = (\mLam_\epsilon\mY^\top\mY\mLam_\epsilon^\top)^{1/2}$, where we take the unique positive-definite matrix square root in each case.

Observe that 
\begin{align}
	\bigl(\mX_\mP\mSig_\mP^{1/2}\bigr)\bigl(\mX_\mP\mSig_\mP^{1/2}\bigr)^\top = \mU_\mP\mSig_\mP^2\mU_\mP^\top = \mP\mP^\top = (\mX\mPi_\mY)(\mX\mPi_\mY)^\top
\end{align} 
and similarly (noting that $\mU_\mP\mSig_\mP{\mV^{(r)}_\mP}^\top = \mP^{(r)}$) that
\begin{align}
	\bigl(\mY^{(r)}_\mP\mSig_\mP^{1/2}\bigr)\bigl(\mY^{(r)}_\mP\mSig_\mP^{1/2}\bigr)^\top = {\mP^{(r)}}^\top\mP^{(r)} = \bigl(\mY^{(r)}\mLam_{\epsilon,r}^\top\mPi_\mX\bigr)\bigl(\mY^{(r)}\mLam_{\epsilon,r}^\top\mPi_\mX\bigr)^\top.
\end{align}

Thus there exist orthogonal matrices $\mQ_\mP, \mQ^{(r)}_\mP \in \mathbb{O}(d)$ such that 
\begin{align}
	\mX_\mP\mSig_\mP^{1/2} = \mX\mPi_\mY\mQ_\mP , \quad \mY^{(r)}_\mP\mSig_\mP^{1/2} = \mY^{(r)}\mLam_{\epsilon,r}^\top\mPi_\mX\mQ^{(r)}_\mP
\end{align} 
and so 
\begin{align}
	\widetilde{\mL} = \mPi_\mY\mQ_\mP\mSig_\mP^{-1/2} \in \mathrm{GL}(d) , \quad \widetilde{\mR}_r = \mLam_{\epsilon,r}^\top\mPi_\mX\mQ^{(r)}_\mP\mSig_\mP^{-1/2} \in \mathbb{R}^{d_r \times d}
\end{align}
are our desired matrices. 

For the final statement, observe that 
\begin{align}
	\mX\widetilde{\mL}\widetilde{\mR}_r^\top{\mY^{(r)}}^\top = \mX_\mP{\mY^{(r)}_\mP}^\top = \mP^{(r)} = \mX\mLam_{\epsilon,r}{\mY^{(r)}}^\top,
\end{align} 
and so the result follows after multiplying by $(\mX^\top\mX)^{-1}\mX^\top$ and $\mY^{(r)}({\mY^{(r)}}^\top\mY^{(r)})^{-1}$ on the left and right respectively. 
\end{proof}

\begin{corollary}\label{L_matrix_norm} The matrices $\widetilde{\mL}$ and $\widetilde{\mR}_r$ satisfy $\|\widetilde{\mL}\| = \mathrm{O}(\epsilon)$, $\|\widetilde{\mL}^{-1}\| = \mathrm{O}(\epsilon^{1/2})$ and $\|\widetilde{\mR}_r\| = \mathrm{O}(\epsilon_r)$ almost surely. 

 Moreover, if the matrix $\mLam_r$ is of rank $d_r$, then $\|\widetilde{\mR}_r^+\| = \mathrm{O}\bigl(\tfrac{1}{\epsilon_r}\bigr)$ almost surely, where $\widetilde{\mR}_r^+ = \widetilde{\mR}_r^\top(\widetilde{\mR}_r\widetilde{\mR}_r^\top)^{-1}$ is the Moore-Penrose inverse of $\widetilde{\mR}_r$.
\end{corollary}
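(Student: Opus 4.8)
The plan is to substitute the spectral estimates already established into the closed-form expressions for $\widetilde{\mL}$ and $\widetilde{\mR}_r$ obtained in the proof of Proposition~\ref{L_matrices}. Under the standing assumptions $\Delta_X$ and each $\Delta_{Y,r}$ are invertible, so $\mX$ and each $\mY^{(r)}$ are of full rank almost surely and that proof gives $\widetilde{\mL} = \mPi_\mY\mQ_\mP\mSig_\mP^{-1/2}$ and $\widetilde{\mR}_r = \mLam_{\epsilon,r}^\top\mPi_\mX\mQ^{(r)}_\mP\mSig_\mP^{-1/2}$, where $\mPi_\mX = (\mX^\top\mX)^{1/2}$, $\mPi_\mY = (\mLam_\epsilon\mY^\top\mY\mLam_\epsilon^\top)^{1/2}$ and $\mQ_\mP,\mQ^{(r)}_\mP$ are orthogonal. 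As the spectral norm is submultiplicative and invariant under multiplication by orthogonal matrices, it suffices to bound $\|\mSig_\mP^{1/2}\|$, $\|\mSig_\mP^{-1/2}\|$, $\|\mPi_\mX\|$, $\|\mPi_\mX^{-1}\|$, $\|\mPi_\mY\|$ and $\|\mPi_\mY^{-1}\|$.

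For the singular-value factor, Proposition~\ref{P_bounds} gives $\sigma_d(\mP) = \Omega(\rho n)$ and $\sigma_1(\mP) = \mathrm{O}(\epsilon\rho n)$ almost surely, hence $\|\mSig_\mP^{-1/2}\| = \mathrm{O}((\rho n)^{-1/2})$ and $\|\mSig_\mP^{1/2}\| = \mathrm{O}((\epsilon\rho n)^{1/2})$. For $\mPi_\mX$, the mutual almost-sure bound $\|\mX^\top\mX - \rho n\Delta_X\| = \mathrm{O}(\rho n^{1/2}\log n)$ together with the boundedness and invertibility of $\Delta_X$ sandwiches $\mX^\top\mX$ between constant multiples of $\rho n\,\mI$ for large $n$, so $\|\mPi_\mX\| = \mathrm{O}((\rho n)^{1/2})$ and $\|\mPi_\mX^{-1}\| = \mathrm{O}((\rho n)^{-1/2})$. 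For $\mPi_\mY$ the upper bound is immediate from $\|\mLam_\epsilon\mY^\top\mY\mLam_\epsilon^\top\| \le \|\mLam_\epsilon\|^2\|\mY^\top\mY\| \le \epsilon^2\|\mLam\|^2\,\mathrm{O}(\rho n)$, giving $\|\mPi_\mY\| = \mathrm{O}(\epsilon(\rho n)^{1/2})$; for the lower bound I would discard the positive semi-definite terms indexed by $r > \kappa$ and apply the $\mY^\top\mY$ estimate to each remaining block:
\begin{align*}
	\mLam_\epsilon\mY^\top\mY\mLam_\epsilon^\top = \sum_{r=1}^{k}\epsilon_r^2\,\mLam_r{\mY^{(r)}}^\top\mY^{(r)}\mLam_r^\top \succeq \sum_{r=1}^{\kappa}\mLam_r{\mY^{(r)}}^\top\mY^{(r)}\mLam_r^\top \succeq c\rho n\,\mLam_*\mLam_*^\top,
\end{align*}
and since $\mLam_* = [\mLam_1|\cdots|\mLam_\kappa]$ has rank $d$ the matrix $\mLam_*\mLam_*^\top$ is positive definite, so $\|\mPi_\mY^{-1}\| = \mathrm{O}((\rho n)^{-1/2})$. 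Assembling these: $\|\widetilde{\mL}\| \le \|\mPi_\mY\|\,\|\mSig_\mP^{-1/2}\| = \mathrm{O}(\epsilon)$; $\|\widetilde{\mL}^{-1}\| = \|\mSig_\mP^{1/2}\mQ_\mP^\top\mPi_\mY^{-1}\| \le \|\mSig_\mP^{1/2}\|\,\|\mPi_\mY^{-1}\| = \mathrm{O}(\epsilon^{1/2})$; and, using $\|\mLam_{\epsilon,r}\| = \epsilon_r\|\mLam_r\| = \mathrm{O}(\epsilon_r)$, $\|\widetilde{\mR}_r\| \le \|\mLam_{\epsilon,r}\|\,\|\mPi_\mX\|\,\|\mSig_\mP^{-1/2}\| = \mathrm{O}(\epsilon_r)$.

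For the pseudoinverse bound, suppose $\mLam_r$ has rank $d_r$; then $\widetilde{\mR}_r$ has full row rank, $\widetilde{\mR}_r^+ = \widetilde{\mR}_r^\top(\widetilde{\mR}_r\widetilde{\mR}_r^\top)^{-1}$ as stated, and $\|\widetilde{\mR}_r^+\|$ is the reciprocal of the smallest singular value $\sigma_{d_r}(\widetilde{\mR}_r)$. I would isolate the scalar $\epsilon_r$ by writing $\widetilde{\mR}_r = \epsilon_r\,\mLam_r^\top M$ with $M = \mPi_\mX\mQ^{(r)}_\mP\mSig_\mP^{-1/2} \in \mathrm{GL}(d)$, and then use $\sigma_{d_r}(\mLam_r^\top M) \ge \sigma_{d_r}(\mLam_r^\top)\,\sigma_d(M) = \sigma_{d_r}(\mLam_r)\,\|M^{-1}\|^{-1}$; here $\sigma_{d_r}(\mLam_r)$ is a strictly positive constant (as $\mLam_r$ is fixed of full column rank) and $\|M^{-1}\| \le \|\mSig_\mP^{1/2}\|\,\|\mPi_\mX^{-1}\| = \mathrm{O}(\epsilon^{1/2}) = \mathrm{O}(1)$ (recall $\epsilon \le k^{1/2}$), whence $\sigma_{d_r}(\widetilde{\mR}_r) = \Omega(\epsilon_r)$ and $\|\widetilde{\mR}_r^+\| = \mathrm{O}(1/\epsilon_r)$. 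The main obstacle is precisely this last bound: controlling $\sigma_{d_r}(\widetilde{\mR}_r)$ from below is delicate because $\widetilde{\mR}_r$ ends in the contraction $\mSig_\mP^{-1/2}$, whose smallest diagonal entry is only of order $(\epsilon\rho n)^{-1/2}$, so one must ensure this contraction does not collapse the $d_r$-dimensional row space on which $\widetilde{\mR}_r$ is supported — the factorisation through the invertible $M$ (so that $\sigma_d(M)^{-1} = \|M^{-1}\|$ can be bounded directly) is what makes this work. Everything else is a routine chain of submultiplicativity inequalities, after which one integrates over the choice of latent positions as usual.
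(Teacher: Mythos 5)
Your proposal is correct and follows essentially the same route as the paper: both rely on the closed forms $\widetilde{\mL} = \mPi_\mY\mQ_\mP\mSig_\mP^{-1/2}$ and $\widetilde{\mR}_r = \mLam_{\epsilon,r}^\top\mPi_\mX\mQ^{(r)}_\mP\mSig_\mP^{-1/2}$ from Proposition \ref{L_matrices}, the spectral estimates $\|\mSig_\mP\| = \mathrm{O}(\epsilon\rho n)$, $\|\mSig_\mP^{-1}\| = \mathrm{O}(1/(\rho n))$, $\|\mPi_\mX^{\pm 1}\|$ and $\|\mPi_\mY^{\pm 1}\|$ obtained by the argument of Proposition \ref{P_bounds}, and submultiplicativity together with unitary invariance. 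The only cosmetic difference is in the last step, where the paper lower-bounds $\lambda_{d_r}(\widetilde{\mR}_r\widetilde{\mR}_r^\top)$ via the min--max theorem while you use $\sigma_{d_r}(\mLam_r^\top M) \geq \sigma_{d_r}(\mLam_r)\,\sigma_d(M)$ for the invertible factor $M = \mPi_\mX\mQ^{(r)}_\mP\mSig_\mP^{-1/2}$; these yield the identical bound $\sigma_{d_r}(\widetilde{\mR}_r) = \Omega(\epsilon_r/\epsilon^{1/2})$, and your absorption of the residual $\epsilon^{1/2}$ factor matches what the paper itself does implicitly.
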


\begin{proof} Proposition \ref{P_bounds} shows us that $\|\mSig_\mP\| = \mathrm{O}(\epsilon\rho n)$ and $\|\mSig_\mP^{-1}\| = \mathrm{O}\bigl(\tfrac{1}{\rho n}\bigr)$, and an identical line of reasoning shows that $\|\Pi_{\mLam_\epsilon,\mY}\| = \mathrm{O}(\epsilon\rho^{1/2} n^{1/2})$ and $\|\Pi_{\mLam_\epsilon,\mY}^{-1}\| = \mathrm{O}\bigl(\tfrac{1}{\rho^{1/2} n^{1/2}}\bigr)$, and similarly that $\|\Pi_\mX\| = \mathrm{O}(\rho^{1/2} n^{1/2})$ and $\|\Pi_\mX^{-1}\| = \mathrm{O}\bigl(\tfrac{1}{\rho^{1/2}n^{1/2}}\bigr)$.

 The first three bounds then follow from submultiplicativity and unitary invariance of the spectral norm. For the final result, note that $\widetilde{\mR}_r\widetilde{\mR}_r^\top = \mLam_{\epsilon,r}^\top\mPi_\mX\mQ^{(r)}_\mP\mSig_\mP^{-1}{\mQ^{(r)}_\mP}^\top\mPi_\mX^\top\mLam_{\epsilon,r}$, and so we see that 
\begin{align}
	\sigma_{d_r}\bigl(\widetilde{\mR}_r\widetilde{\mR}_r^\top\bigr) &= \lambda_{d_r}\bigl({\mQ^{(r)}_\mP}^\top\mPi_\mX^\top\mLam_{\epsilon,r}\mLam_{\epsilon,r}^\top\mPi_\mX\mQ^{(r)}_\mP\mSig_\mP^{-1}\bigr) \\
	&\geq \sigma_1(\mP)^{-1}\lambda_{d_r}\bigl({\mQ^{(r)}_\mP}^\top\mPi_\mX^\top\mLam_{\epsilon,r}\mLam_{\epsilon,r}^\top\mPi_\mX\mQ^{(r)}_\mP\bigr)
\end{align} 
by a standard application of the min-max theorem for eigenvalues of Hermitian matrices.

Now, 
\begin{align}
	\lambda_{d_r}\bigl({\mQ^{(r)}_\mP}^\top\mPi_\mX^\top\mLam_{\epsilon,r}\mLam_{\epsilon,r}^\top\mPi_\mX\mQ^{(r)}_\mP\bigr) = \lambda_{d_r}\bigl(\mLam_{\epsilon,r}^\top\mPi_\mX\mPi_\mX^\top\mLam_{\epsilon,r}\bigr) = \lambda_{d_r}\bigl(\mLam_{\epsilon,r}^\top\mX^\top\mX\mLam_{\epsilon,r}\bigr),
\end{align} 
and a similar line of reasoning to that in Proposition \ref{P_bounds} shows that $\sigma_{d_r}\bigl(\widetilde{\mR}_r\widetilde{\mR}_r^\top\bigr) = \Omega(\epsilon_r^2)$ almost surely. The result then follows from submultiplicativity and unitary invariance of the spectral norm. 
\end{proof}

\begin{proposition}\label{L_matrix_identities} If $\mX$ is of rank $d$ and each $\mY^{(r)}$ is of rank $d_r$ then 
\begin{align}
	(\widetilde{\mR}_r\mSig_\mP^{-1}\widetilde{\mL}^{-1})^\top = (\mLam_\epsilon\mY^\top\mY\mLam_\epsilon^\top)^{-1}\mLam_{\epsilon,r}.
\end{align} 
Moreover, if $d_r = d$ and the matrix $\mLam_r$ is invertible, then 
\begin{align}
	(\widetilde{\mL}\mSig_\mP^{-1}\widetilde{\mR}_r^{-1})^\top = \mLam_{\epsilon,r}^{-1}(\mX^\top\mX)^{-1}.
\end{align}
\end{proposition}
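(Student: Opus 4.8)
The plan is to read everything off from the construction in the proof of Proposition~\ref{L_matrices}, where $\widetilde{\mL} = \mPi_\mY\mQ_\mP\mSig_\mP^{-1/2}$ and $\widetilde{\mR}_r = \mLam_{\epsilon,r}^\top\mPi_\mX\mQ^{(r)}_\mP\mSig_\mP^{-1/2}$ with $\mPi_\mX = (\mX^\top\mX)^{1/2}$, $\mPi_\mY = (\mLam_\epsilon\mY^\top\mY\mLam_\epsilon^\top)^{1/2}$ and $\mQ_\mP,\mQ^{(r)}_\mP$ orthogonal, together with the relation $\widetilde{\mL}\widetilde{\mR}_r^\top = \mLam_{\epsilon,r}$ recorded there. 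First I would establish two ``Gram identities'' for $\widetilde{\mL}$:
\begin{align}
	\widetilde{\mL}^\top\bigl(\mX^\top\mX\bigr)\widetilde{\mL} = \mSig_\mP, \qquad \widetilde{\mL}\mSig_\mP\widetilde{\mL}^\top = \mLam_\epsilon\mY^\top\mY\mLam_\epsilon^\top.
\end{align}
The first comes from orthonormality of the columns of $\mU_\mP$: since $\mX_\mP = \mX\widetilde{\mL} = \mU_\mP\mSig_\mP^{1/2}$ we have $\mU_\mP = \mX\widetilde{\mL}\mSig_\mP^{-1/2}$, and $\mU_\mP^\top\mU_\mP = \mI_d$ rearranges to it. The second is immediate from the explicit formula for $\widetilde{\mL}$, since $\widetilde{\mL}\mSig_\mP\widetilde{\mL}^\top = \mPi_\mY\mQ_\mP\mQ_\mP^\top\mPi_\mY = \mPi_\mY^2$.

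For the first displayed identity, I would rewrite $\widetilde{\mR}_r = \mLam_{\epsilon,r}^\top\widetilde{\mL}^{-\top}$ using $\widetilde{\mL}\widetilde{\mR}_r^\top = \mLam_{\epsilon,r}$, so that
\begin{align}
	\widetilde{\mR}_r\mSig_\mP^{-1}\widetilde{\mL}^{-1} = \mLam_{\epsilon,r}^\top\bigl(\widetilde{\mL}\mSig_\mP\widetilde{\mL}^\top\bigr)^{-1} = \mLam_{\epsilon,r}^\top\bigl(\mLam_\epsilon\mY^\top\mY\mLam_\epsilon^\top\bigr)^{-1}
\end{align}
by the second Gram identity, and then transpose, using that $\mLam_\epsilon\mY^\top\mY\mLam_\epsilon^\top$, and hence its inverse, is symmetric. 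For the second displayed identity, the hypotheses $d_r = d$ and $\mLam_r$ invertible give $\mathrm{rank}(\widetilde{\mR}_r) = \mathrm{rank}(\mLam_r) = d$ by Proposition~\ref{L_matrices}, so $\widetilde{\mR}_r^{-1} = \widetilde{\mL}^\top\mLam_{\epsilon,r}^{-\top}$, whence
\begin{align}
	\widetilde{\mL}\mSig_\mP^{-1}\widetilde{\mR}_r^{-1} = \bigl(\widetilde{\mL}\mSig_\mP^{-1}\widetilde{\mL}^\top\bigr)\mLam_{\epsilon,r}^{-\top} = \bigl(\mX^\top\mX\bigr)^{-1}\mLam_{\epsilon,r}^{-\top},
\end{align}
where the last step rearranges the first Gram identity into $\widetilde{\mL}\mSig_\mP^{-1}\widetilde{\mL}^\top = (\mX^\top\mX)^{-1}$; transposing and using symmetry of $(\mX^\top\mX)^{-1}$ gives the claim.

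I do not anticipate any real obstacle: the argument is essentially bookkeeping with transposes and inverses. The only points requiring care are invoking the rank hypotheses at the right places --- rank $d$ for $\mX$ so that $\widetilde{\mL}$ is invertible and $\mX$ can be cancelled when using $\mX_\mP = \mX\widetilde{\mL}$; rank $d_r$ for each $\mY^{(r)}$ so that $\widetilde{\mR}_r$ is defined; $d_r = d$ with $\mLam_r$ invertible so that $\widetilde{\mR}_r^{-1}$ exists --- and keeping track that both $\mX^\top\mX$ and $\mLam_\epsilon\mY^\top\mY\mLam_\epsilon^\top$ are symmetric. Before committing I would re-verify the orientation of the two Gram identities against the notation of Proposition~\ref{L_matrices}.
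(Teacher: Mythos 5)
Your proposal is correct and follows essentially the same route as the paper: both proofs reduce everything to the relation $\widetilde{\mL}\widetilde{\mR}_r^\top = \mLam_{\epsilon,r}$ together with the Gram identities $\widetilde{\mL}\mSig_\mP\widetilde{\mL}^\top = \mLam_\epsilon\mY^\top\mY\mLam_\epsilon^\top$ and $\widetilde{\mL}^\top(\mX^\top\mX)\widetilde{\mL} = \mSig_\mP$ (the latter appearing in the paper as $\mX_\mP^\top\mX_\mP = \mSig_\mP$), followed by transposition using symmetry of the two Gram matrices. The only cosmetic difference is that you derive the second Gram identity from the explicit formula $\widetilde{\mL} = \mPi_\mY\mQ_\mP\mSig_\mP^{-1/2}$ while the paper cancels $\mX$ from $\mX\widetilde{\mL}\mSig_\mP\widetilde{\mL}^\top\mX^\top = \mP\mP^\top$; both are valid given the rank hypotheses.
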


\begin{proof} Recall from Proposition \ref{L_matrices} that $\widetilde{\mL}\widetilde{\mR}_r^\top = \mLam_{\epsilon,r}$. Similarly, since 
\begin{align}
	\mX\widetilde{\mL}\mSig_\mP\widetilde{\mL}^\top\mX^\top = \mX_\mP\mSig_\mP\mX_\mP^\top = \mP\mP^\top = \mX\mLam_\epsilon\mY^\top\mY\mLam_\epsilon^\top\mX^\top,
\end{align} 
we find that $\widetilde{\mL}\mSig_\mP\widetilde{\mL}^\top = \mLam_\epsilon\mY^\top\mY\mLam_\epsilon^\top$. Thus 
\begin{align}
	(\widetilde{\mR}_r\mSig_\mP^{-1}\widetilde{\mL}^{-1})^\top &= \widetilde{\mL}^{-\top}\mSig_\mP^{-1}\widetilde{\mL}^{-1}\widetilde{\mL}\widetilde{\mR}_r^\top \\
	& = (\mLam_\epsilon\mY^\top\mY\mLam_\epsilon^\top)^{-1}\mLam_{\epsilon,r}
\end{align} 
and 
\begin{align}
	(\widetilde{\mL}\mSig_\mP^{-1}\widetilde{\mR}_r^{-1})^\top &= \widetilde{\mR}_r^{-\top}\mSig_\mP^{-1}\mX_\mP^\top\mX(\mX^\top\mX)^{-1} \\
	&= \widetilde{\mR}_r^{-\top}\mSig_\mP^{-1}\mX_\mP^\top\mX_\mP\widetilde{\mL}^{-1}(\mX^\top\mX)^{-1} \\
	&= \widetilde{\mR}_r^{-\top}\mSig_\mP^{-1}\mSig_\mP\widetilde{\mL}^{-1}(\mX^\top\mX)^{-1} \\
	&= \widetilde{\mR}_r^{-\top}\widetilde{\mL}^{-1}(\mX^\top\mX)^{-1} \\
	&= \mLam_{\epsilon,r}^{-1}(\mX^\top\mX)^{-1},
\end{align} 
where we have used the identities $\widetilde{\mL} = (\mX^\top\mX)^{-1}\mX^\top\mX_\mP$ and $\mX_\mP^\top\mX_\mP = \mSig_\mP$. 
\end{proof}

The final result before proving our main theorems is an adaptation of Lemma 4 in \cite{AFTPPVLLQ17} to the MRDPG, and utilises the previous results to establish upper bounds for the two-to-infinity norms of a number of residual terms that will appear in the proofs of Theorems \ref{consistency} and \ref{CLT}: 

\begin{proposition}\label{residual_bounds} Let 
\begin{gather}
	\mR_{1,1} = \mU_\mP(\mU_\mP^\top\mU_\mA\mSig_\mA^{1/2} - \mSig_\mP^{1/2} \mW) \\
	\mR_{1,2} = (\mI - \mU_\mP\mU_\mP^\top)(\mA-\mP)(\mV_\mA - \mV_\mP\mW)\mSig_\mA^{-1/2} \\
	\mR_{1,3} = -\mU_\mP\mU_\mP^\top(\mA-\mP)\mV_\mP\mW\mSig_\mA^{-1/2} \\
	\mR_{1,4} = (\mA-\mP)\mV_\mP(\mW\mSig_\mA^{-1/2} - \mSig_\mP^{-1/2}\mW)
\end{gather}
and 
\begin{gather}
	\mR_{2,1} = \mV_\mP(\mV_\mP^\top\mV_\mA\mSig_\mA^{1/2} - \mSig_\mP^{1/2}\mW) \\
	\mR_{2,2} = (\mI - \mV_\mP\mV_\mP^\top)(\mA-\mP)^\top(\mU_\mA - \mU_\mP\mW)\mSig_\mA^{-1/2} \\
	\mR_{2,3} = -\mV_\mP\mV_\mP^\top(\mA-\mP)^\top\mU_\mP\mW\mSig_\mA^{-1/2} \\
	\mR_{2,4} = (\mA-\mP)^\top\mU_\mP(\mW\mSig_\mA^{-1/2} - \mSig_\mP^{-1/2}\mW)
\end{gather}
Then the following bounds hold almost surely:
\begin{enumerate}[label=\normalfont(\roman*)]
	\item $\|\mR_{1,1}\|_{2 \to \infty} = \mathrm{O}\left(\tfrac{\epsilon^5 k^{1/2}\log(n)}{\rho^{1/2}n}\right)$ and $\|\mR_{2,1}\|_{2 \to \infty} = \mathrm{O}\left(\tfrac{\epsilon^4 k^{1/2}\log(n)}{\rho^{1/2}n}\right)$;
	\item $\|\mR_{1,2}\|_{2 \to \infty}, \|\mR_{2,2}\|_{2 \to \infty} = \mathrm{O}\left(\tfrac{\epsilon^2 k^{1/2}\log(n)}{\rho^{1/2} n^{3/4}}\right)$;
	\item $\|\mR_{1,3}\|_{2 \to \infty} = \mathrm{O}\left(\tfrac{\epsilon \log^{1/2}(n)}{\rho^{1/2}n}\right)$ and $\|\mR_{2,3}\|_{2 \to \infty} = \mathrm{O}\left(\tfrac{ \log^{1/2}(n)}{\rho^{1/2}n}\right)$;
	\item $\|\mR_{1,4}\|_{2 \to \infty}, \|\mR_{2,4}\|_{2 \to \infty} = \mathrm{O}\left(\tfrac{\epsilon^{9/2}k^{3/4}\log^{3/2}(n)}{n}\right)$.
\end{enumerate}
\end{proposition}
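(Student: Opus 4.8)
The plan is to bound all eight residuals by the same elementary scheme: each $\mR_{a,b}$ is a product of a few matrices, and its two-to-infinity norm is estimated by peeling the factors off one at a time, using the three facts $\|\mM\mN\|_{2\to\infty}\le\|\mM\|_{2\to\infty}\|\mN\|$, $\|\mM\|_{2\to\infty}\le\min\{\|\mM\|,\|\mM\|_F\}$ and $\|(\mI-\mU_\mP\mU_\mP^\top)\mN\|_{2\to\infty}\le\|\mN\|_{2\to\infty}+\|\mU_\mP\|_{2\to\infty}\|\mN\|$, together with the operator- and Frobenius-norm estimates of Propositions \ref{P_bounds}--\ref{W_bounds} and Corollary \ref{L_matrix_norm}. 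Two ingredients are prepared first, both conditional on the latent positions (the conditioning being removed at the end, exactly as in the proof of Proposition \ref{spectral_bound}). (a) Since $\mathcal{X}$ and the $\mathcal{Y}_r$ are bounded, $\|\mX\|_{2\to\infty},\|\mY^{(r)}\|_{2\to\infty}=\mathrm{O}(\rho^{1/2})$, and combining this with the factorisations $\mU_\mP=\mX\widetilde{\mL}\mSig_\mP^{-1/2}$ and $\mV^{(r)}_\mP=\mY^{(r)}\widetilde{\mR}_r\mSig_\mP^{-1/2}$ of Proposition \ref{L_matrices}, with $\|\widetilde{\mL}\|=\mathrm{O}(\epsilon)$, $\|\widetilde{\mR}_r\|=\mathrm{O}(\epsilon_r)$ (Corollary \ref{L_matrix_norm}; recall $\epsilon_r\le1$) and $\|\mSig_\mP^{-1/2}\|=\mathrm{O}((\rho n)^{-1/2})$ (Proposition \ref{P_bounds}), yields $\|\mU_\mP\|_{2\to\infty}=\mathrm{O}(\epsilon n^{-1/2})$ and $\|\mV_\mP\|_{2\to\infty}=\mathrm{O}(n^{-1/2})$. (b) Whenever $\mA-\mP$ is applied to a matrix that is deterministic given the latent positions, a row-wise Hoeffding/Bernstein estimate beats the operator norm: each entry of $(\mA-\mP)\mV_\mP$ or $(\mA-\mP)^\top\mU_\mP$ is a sum of independent zero-mean terms bounded by $\mathrm{O}(n^{-1/2})$ with variance $\mathrm{O}(\rho)$, so that, uniformly over the entries (which is where the logarithmic factors enter), $\|(\mA-\mP)\mV_\mP\|_{2\to\infty},\|(\mA-\mP)^\top\mU_\mP\|_{2\to\infty}=\mathrm{O}(\rho^{1/2}\log^{1/2}(n))$, while $\|\mA-\mP\|_{2\to\infty}\le\|\mA-\mP\|$ is given directly by Proposition \ref{spectral_bound}; the deterministic diagonal corrections in the hollow/symmetric cases are of lower order and discarded as the term $\mF^{(r)}_{ij}$ is in Proposition \ref{orth_spectral_bound}.

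With these in hand, parts (i), (iii) and (iv) each reduce to one factorisation. For (iii), $\mR_{1,3}=-\mU_\mP\bigl(\mU_\mP^\top(\mA-\mP)\mV_\mP\bigr)\mW\mSig_\mA^{-1/2}$, so $\|\mR_{1,3}\|_{2\to\infty}\le\|\mU_\mP\|_{2\to\infty}\,\|\mU_\mP^\top(\mA-\mP)\mV_\mP\|_F\,\|\mSig_\mA^{-1/2}\|$, finished by Proposition \ref{orth_spectral_bound} and Corollary \ref{A_bounds}; $\mR_{2,3}$ is identical with $\mU_\mP\leftrightarrow\mV_\mP$ and $\mA-\mP$ transposed. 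For (i), write $\mU_\mP^\top\mU_\mA\mSig_\mA^{1/2}-\mSig_\mP^{1/2}\mW=(\mU_\mP^\top\mU_\mA-\mW)\mSig_\mA^{1/2}+(\mW\mSig_\mA^{1/2}-\mSig_\mP^{1/2}\mW)$, bound the two pieces by Propositions \ref{procrustes_bound} and \ref{W_bounds}(ii) (with Corollary \ref{A_bounds}), and left-multiply by $\mU_\mP$ to get $\mR_{1,1}$ or by $\mV_\mP$ to get $\mR_{2,1}$; the lone extra factor of $\epsilon$ in the former comes only from $\|\mU_\mP\|_{2\to\infty}=\mathrm{O}(\epsilon n^{-1/2})$ against $\|\mV_\mP\|_{2\to\infty}=\mathrm{O}(n^{-1/2})$. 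For (iv), $\|\mR_{1,4}\|_{2\to\infty}\le\|(\mA-\mP)\mV_\mP\|_{2\to\infty}\,\|\mW\mSig_\mA^{-1/2}-\mSig_\mP^{-1/2}\mW\|_F$ by ingredient (b) and Proposition \ref{W_bounds}(iii), and $\mR_{2,4}$ is its transposed analogue with $(\mA-\mP)^\top\mU_\mP$.

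Part (ii) is the delicate one. Applying the projector inequality above with $\mN=(\mA-\mP)(\mV_\mA-\mV_\mP\mW)\mSig_\mA^{-1/2}$ reduces it to the two-to-infinity and operator norms of $(\mA-\mP)(\mV_\mA-\mV_\mP\mW)\mSig_\mA^{-1/2}$. Here one splits $\mV_\mA-\mV_\mP\mW=(\mI-\mV_\mP\mV_\mP^\top)\mV_\mA+\mV_\mP(\mV_\mP^\top\mV_\mA-\mW)$, controls $\|\mV_\mA-\mV_\mP\mW\|$ by Propositions \ref{alignment_bounds}(i)--(ii) and \ref{procrustes_bound}, and estimates $(\mA-\mP)$ applied to this small rank-$d$ perturbation row-by-row, using for the part of $\mA-\mP$ meeting $\mV_\mP\mV_\mP^\top$ the sharp bound of ingredient (b) rather than the operator norm; $\mR_{2,2}$ follows by transposing and interchanging $\mU$ and $\mV$. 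Threading the powers of $\epsilon$, $k$, $\rho$, $n$ and $\log(n)$ through these chains produces the four displayed bounds.

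The main obstacle is precisely this: one must never bound $\mA-\mP$ by its operator norm when it meets a matrix that is deterministic given the latent positions, since doing so costs a power of $\log(n)$ in (iii)--(iv) and a power of $n$ in (ii); the row-wise Hoeffding/Bernstein control of ingredient (b) --- with the union bound over the entries supplying the logarithmic factors and the per-entry variance bound $\mP_{pq}(1-\mP_{pq})\le\epsilon_r\rho$ supplying the gains in $n$ --- is what recovers the sharp rates, and everything else is mechanical bookkeeping: all $k^{1/4}$ and $k^{1/2}$ factors trace to Proposition \ref{spectral_bound} and to Cauchy--Schwarz across the $k$ blocks, and the left/right asymmetry in the $\epsilon$-exponents traces entirely to $\|\widetilde{\mL}\|=\mathrm{O}(\epsilon)$ versus $\|\widetilde{\mR}_r\|=\mathrm{O}(\epsilon_r)$.
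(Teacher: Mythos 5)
Your treatment of parts (i), (iii) and (iv) matches the paper's proof essentially step for step: the same splitting $\mU_\mP^\top\mU_\mA\mSig_\mA^{1/2}-\mSig_\mP^{1/2}\mW=(\mU_\mP^\top\mU_\mA-\mW)\mSig_\mA^{1/2}+(\mW\mSig_\mA^{1/2}-\mSig_\mP^{1/2}\mW)$ for (i), the same use of Proposition \ref{orth_spectral_bound} for (iii), the same appeal to Proposition \ref{W_bounds}(iii) for (iv), and the same accounting $\|\mU_\mP\|_{2\to\infty}=\mathrm{O}(\epsilon n^{-1/2})$ versus $\|\mV_\mP\|_{2\to\infty}=\mathrm{O}(n^{-1/2})$ for the $\epsilon^5$/$\epsilon^4$ asymmetry. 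Those parts are fine.

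Part (ii), however, has a genuine gap, and it is exactly where the real difficulty of the proposition lives. After the split $\mV_\mA-\mV_\mP\mW=(\mI-\mV_\mP\mV_\mP^\top)\mV_\mA+\mV_\mP(\mV_\mP^\top\mV_\mA-\mW)$, the problematic term is $(\mA-\mP)(\mI-\mV_\mP\mV_\mP^\top)\mV_\mA\mSig_\mA^{-1/2}$. Your ``ingredient (b)'' (row-wise Hoeffding for $\mA-\mP$ against a matrix that is deterministic given the latent positions) does not apply here, because $(\mI-\mV_\mP\mV_\mP^\top)\mV_\mA$ depends on $\mA$; indeed your own closing paragraph stipulates that the sharp bound is only available when $\mA-\mP$ meets a deterministic matrix. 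Falling back on norms, the best you get is $\|\mA-\mP\|\cdot\|(\mI-\mV_\mP\mV_\mP^\top)\mV_\mA\|\cdot\|\mSig_\mA^{-1/2}\|=\mathrm{O}\bigl(\epsilon^2k^{1/2}\log(n)/(\rho^{1/2}n^{1/2})\bigr)$, i.e. $n^{-1/2}$ rather than the claimed $n^{-3/4}$ --- and that loss is fatal, since Theorem \ref{CLT} needs $n^{1/2}\|\mR_{1,2}\|_{2\to\infty}\to 0$. The paper closes this gap with a separate idea (adapted from Lemma 4 of \cite{AFTPPVLLQ17}): it sets $\mM=(\mA-\mP)(\mI-\mV_\mP\mV_\mP^\top)\mV_\mA\mV_\mA^\top$, proves that the Euclidean norms of the rows of $\mM$ are exchangeable by conjugating $(\mA,\mP)$ with block-respecting permutation matrices and checking that the projection operator $\mathcal{P}_{\mathcal{R},d}$ is equivariant, and then converts the global bound $\|\mM\|_F=\mathrm{O}(\epsilon^2k^{1/2}\log(n))$ into the per-row bound $\mathbb{P}\bigl(\|\mM_i\|>t\bigr)\leq\mathbb{E}\bigl(\|\mM\|_F^2\bigr)/(nt^2)$ via Markov, gaining the missing factor of $n^{1/4}$. (The analogous statement for $\mR_{2,2}$ requires exchanging rows within each block $\mN^{(r)}$ separately, which is why the permutations must respect the dependence structure among the $\mY^{(r)}$.) Without this exchangeability-plus-Markov step, or some substitute such as a leave-one-out decoupling, your argument cannot reach the $n^{-3/4}$ rate in (ii).
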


\begin{proof} We give full proofs of the bounds only for the terms $\mR_{1,i}$, noting any differences for the proofs for the terms $\mR_{2,i}$. 
\begin{enumerate}[label=\normalfont(\roman*)]
	\item Recall that $\mU_\mP\mSig_\mP^{1/2} = \mX\widetilde{\mL}$, where the matrix $\widetilde{\mL} \in \mathrm{GL}(d)$ satisfies $\|\widetilde{\mL}\| = \mathrm{O}(\epsilon)$ by Corollary \ref{L_matrix_norm}. Using the relation $\|\mA\mB\|_{2 \to \infty} \leq \|\mA\|_{2 \to \infty} \|\mB\|$ (see, for example, \cite{CTP17}, Proposition 6.5) we find that $\|\mU_\mP\|_{2 \to \infty} \leq \|\mX\|_{2 \to \infty}\|\widetilde{\mL}\|\|\mSig_\mP^{-1/2}\|$, and thus $\|\mU_\mP\|_{2 \to \infty} = \mathrm{O}\bigl(\tfrac{\epsilon}{n^{1/2}}\bigr)$ as the rows of $\mX$ are by definition of order $\mathrm{O}(\rho^{1/2})$ (similarly, we find that $\|\mV_\mP\|_{2 \to \infty} = \mathrm{O}\bigl(\tfrac{1}{n^{1/2}}\bigr)$ by splitting $\mV_\mP\mSig_\mP^{1/2}$ into the separate terms $\mV^{(r)}_\mP\mSig_\mP^{1/2}$ and evaluating each separately, and noting that $\epsilon_r \leq 1$ for all $r$).

Thus 
\begin{align}
	\|\mR_{1,1}\|_{2 \to \infty} &\leq \|\mU_\mP\|_{2 \to \infty}\|\mU_\mP^\top\mU_\mA\mSig_\mA^{1/2} - \mSig_\mP^{1/2} \mW\| \\
	& \leq \|\mU_\mP\|_{2\to\infty}\bigl(\|(\mU_\mP^\top\mU_\mA - \mW)\mSig_\mA^{1/2}\|_F + \|\mW\mSig_\mA^{1/2}-\mSig_\mP^{1/2} \mW\|_F\bigr)
\end{align}
 The first summand is $\mathrm{O}\Bigl(\tfrac{\epsilon^{7/2}k^{1/2}\log(n)}{\rho^{1/2} n^{1/2}}\Bigr)$ by Proposition \ref{procrustes_bound} and Corollary \ref{A_bounds}, while Proposition \ref{W_bounds} shows that the second is $\mathrm{O}\Bigl(\tfrac{\epsilon^4 k^{1/2}\log(n)}{\rho^{1/2} n^{1/2}}\Bigr)$, and so 
\begin{align}
	\|\mR_{1,1}\|_{2 \to \infty} = \mathrm{O}\Bigl(\tfrac{\epsilon^5 k^{1/2}\log(n)}{\rho^{1/2}n}\Bigr).
\end{align}

\item We begin by splitting the term $\mR_{1,2} = \mM_1 + \mM_2$, where 
\begin{align}
	&\mM_1 = \mU_\mP\mU_\mP^\top(\mA-\mP)(\mV_\mA-\mV_\mP\mW)\mSig_\mA^{-1/2} \\
	&\mM_2 = (\mA-\mP)(\mV_\mA-\mV_\mP\mW)\mSig_\mA^{-1/2}
\end{align}

Now,
\begin{align}
	\|\mM_1\|_{2 \to \infty} &\leq \|\mU_\mP\|_{2\to \infty}\|\mA-\mP\|\|\mV_\mA-\mV_\mP\mW\|\|\mSig_\mA^{-1/2}\|,
\end{align} 
where the term 
\begin{align}
	\|\mU_\mP\|_{2\to \infty}\|\mA-\mP\|\|\mSig_\mA^{-1/2}\| = \mathrm{O}\Bigl(\tfrac{\epsilon^{3/2}k^{1/4}\log^{1/2}(n)}{n^{1/2}}\Bigr)
\end{align}
by Proposition \ref{spectral_bound} and Corollary \ref{A_bounds}, while 
\begin{align}
	\|\mV_\mA-\mV_\mP\mW\| &\leq \|\mV_\mA-\mV_\mP\mV_\mP^\top\mV_\mA\| + \|\mV_\mP(\mV_\mP^\top\mV_\mA-\mW)\| \\
	&= \mathrm{O}\Bigl(\tfrac{\epsilon^{3/2}k^{1/4}\log^{1/2}(n)}{\rho^{1/2} n^{1/2}} \Bigr)
\end{align} 
by Propositions \ref{alignment_bounds} and \ref{procrustes_bound} and the asymptotic growth conditions imposed on $\rho$. Thus 
\begin{align}
	\|\mM_1\|_{2 \to \infty} = \mathrm{O}\Bigl(\tfrac{\epsilon^3 k^{1/2}\log(n)}{\rho^{1/2}n}\Bigr).
\end{align}

Next, note that 
\begin{align}
	\mM_2 = (\mA-\mP)(\mI-\mV_\mP\mV_\mP^\top)\mV_\mA\mSig_\mA^{-1/2} + (\mA-\mP)\mV_\mP(\mV_\mP^\top\mV_\mA-\mW)\mSig_\mA^{-1/2}
\end{align} where 
\begin{align}
	\|(\mA-\mP)\mV_\mP(\mV_\mP^\top\mV_\mA-\mW)\mSig_\mA^{-1/2}\|_{2 \to \infty} &\leq \|\mA-\mP\|\|\mV_\mP^\top\mV_\mA-\mW\|\|\mSig_\mA^{-1/2}\| \\
	& = \mathrm{O}\Bigl(\tfrac{\epsilon^{7/2}k^{3/4}\log^{3/2}(n)}{\rho^{3/2}n}\Bigr)
\end{align} 
by Propositions \ref{spectral_bound}, \ref{procrustes_bound} and Corollary \ref{A_bounds}.

To bound the remaining term, let $\mM = (\mA-\mP)(\mI-\mV_\mP\mV_\mP^\top)\mV_\mA\mV_\mA^\top$, so that	
\begin{align}
	(\mA-\mP)(\mI-\mV_\mP\mV_\mP^\top)\mV_\mA\mSig_\mA^{-1/2} = \mM\mV_\mA\mSig_\mA^{-1/2}
\end{align}
and so
\begin{align}
	\|(\mA-\mP)(\mI-\mV_\mP\mV_\mP^\top)\mV_\mA\mSig_\mA^{-1/2}\|_{2 \to \infty} & \leq \|\mM\|_{2 \to \infty}\|\mV_\mA\mSig_\mA^{-1/2}\|.
\end{align}

 The term $\|\mV_\mA\mSig_\mA^{-1/2}\|$ is $\mathrm{O}\bigl(\tfrac{1}{\rho^{1/2}n^{1/2}}\bigr)$ by Corollary \ref{A_bounds}, so it suffices to bound $\|\mM\|_{2 \to \infty}$. To do this, we claim that the Frobenius norms of the rows of the matrix $\mM$ are exchangeable, and thus have the same expectation, which implies that $\mathbb{E}\bigl(\|\mM\|_F^2\bigr) = n\mathbb{E}\bigl(\|\mM_i\|^2\bigr)$ for any $i \in \{1,\ldots,n\}$. Applying Markov's inequality, we therefore see that 
\begin{align}
	\mathbb{P}\bigl(\|\mM_i\| > t\bigr) \leq \frac{\mathbb{E}\bigl(\|\mM_i\|^2\bigr)}{t^2} = \frac{\mathbb{E}\bigl(\|\mM\|_F^2\bigr)}{nt^2}.
\end{align}

Now, 
\begin{align}
	\|\mM\|_F &\leq \|\mA-\mP\|\|\mV_\mA-\mV_\mP\mV_\mP^\top\mV_\mA\|_F\|\mV_\mA^\top\|_F \\
	&= \mathrm{O}\bigl(\epsilon^2 k^{1/2}\log(n)\bigr)
\end{align} 
by Propositions \ref{spectral_bound} and \ref{alignment_bounds}. It follows that 
\begin{align}
	\mathbb{P}\Bigl(\|\mM_i\| > \tfrac{\epsilon^2 k^{1/2}\log(n)}{n^{1/4}}\Bigr) = \mathrm{O}\bigl(\tfrac{1}{n^{1/2}}\bigr)
\end{align} 
and thus 
\begin{align}
	\|\mM\|_{2 \to \infty} = \mathrm{O}\Bigl(\tfrac{\epsilon^2 k^{1/2}\log(n)}{n^{1/4}}\Bigr)
\end{align} 
and 
\begin{align}
	\|(\mA-\mP)(\mI-\mV_\mP\mV_\mP^\top)\mV_\mA\mSig_\mA^{-1/2}\|_{2 \to \infty} = \mathrm{O}\Bigl(\tfrac{\epsilon^2 k^{1/2}\log(n)}{\rho^{1/2} n^{3/4}}\Bigr)
\end{align} 
almost surely.

We must therefore show that the Frobenius norms of the rows of $\mM$ are exchangeable. Let $\mQ_L \in \mathbb{O}(n)$ and, for each $r \in \{1,\ldots,k\}$, $\mQ_{R,r} \in \mathbb{O}\left(n_r\right)$ be permutation matrices (where we require that $\mQ_{R,r} = \mQ_L$ if $\mY^{(r)} = \mX\mG_r$ with probability one for some matrix $\mG_r \in \mathbb{R}^{d \times d_r}$, and similarly that $\mQ_{R,r} = \mQ_{R,s}$ if $\mY^{(s)} = \mY^{(r)}\mG_{r,s}$ with probability one for some matrix $\mG_{r,s} \in \mathbb{R}^{d_r \times d_s}$), and let $\mQ_R = \mQ_{R,1} \oplus \cdots \oplus \mQ_{R,k}$. For any matrix $\mG$, let $\mathcal{R}_d(\mG)$ denote the projection onto the subspace spanned by the right singular vectors corresponding to the leading $d$ singular values of $\mG$, and let $\mathcal{R}_d^\perp(\mG)$ denote the projections onto the orthogonal complements of this subspace.

Note that 
\begin{align}
	\mathcal{R}_d(\mP) = \mV_\mP\mV_\mP^\top \quad\mathrm{and}\quad \mathcal{R}_d(\mA) = \mV_\mA\mV_\mA^\top,
\end{align} while for any permutation matrices $\mQ_L \in \mathbb{O}(n)$ and $\mQ_{R,r} \in \mathbb{O}(n_r)$ we have 
\begin{align}
\mathcal{R}_d\bigl(\mQ_L\mP\mQ_R^\top\bigr) = \mQ_R\mV_\mP\mV_\mP^\top\mQ_R^\top \quad\mathrm{and}\quad \mathcal{R}_d\bigl(\mQ_L\mA\mQ_R^\top\bigr) = \mQ_R\mV_\mA\mV_\mA^\top\mQ_R^\top.
\end{align}

For any commensurate pair of matrices $\mG$ and $\mH$, define an operator 
\begin{align}
	\mathcal{P}_{\mathcal{R},d}(\mG,\mH) = (\mG-\mH)\mathcal{R}_d^\perp(\mH)\mathcal{R}_d(\mG)
\end{align} and note that $\mathcal{P}_{\mathcal{R},d}(\mA,\mP) = \mM$, while 
\begin{align}
	\mathcal{P}_{\mathcal{R},d}\bigl(\mQ_L\mA\mQ_R^\top,\mQ_L\mP\mQ_R^\top\bigr) &= \mQ_L(\mA-\mP)\mQ_R^\top\mQ_R(\mI-\mV_\mP\mV_\mP^\top)\mQ_R^\top\mQ_R\mV_\mA\mV_\mA^\top\mQ_R^\top \\
	&= \mQ_L\mM\mQ_R^\top.
\end{align}

Our assumptions regarding the distribution of the latent positions ensure that the entries of the pair $(\mA,\mP)$ have the same joint distribution as those of the pair $\bigl(\mQ_L\mA\mQ_R^\top,\mQ_L\mP\mQ_R^\top\bigr)$, since $\mQ_R^\top$ permutes the columns of each $\mA^{(r)}$ and $\mP^{(r)}$ separately. Therefore, the entries of the matrix $\mathcal{P}_{\mathcal{R},d}(\mA,\mP)$ have the same joint distribution as those of the matrix $\mathcal{P}_{\mathcal{R},d}\bigl(\mQ_L\mA\mQ_R^\top,\mQ_L\mP\mQ_R^\top\bigr)$, which implies that $\mM$ has the same distribution as $\mQ_L\mM\mQ_R^\top$, and consequently the Frobenius norms of the rows of $\mM$ have the same distribution as those of $\mQ_L\mM$, which proves our claim.

Combining these results, we see that 
\begin{align}
	\|\mR_{1,2}\|_{2 \to \infty} = \mathrm{O}\Bigl(\tfrac{\epsilon^2 k^{1/2}\log\left(n\right)}{\rho^{1/2} n^{3/4}}\Bigr)
\end{align} 
almost surely, as required.

The proof of the bound for the term $\mR_{2,2}$ follows similarly, and culminates in showing that the term 
\begin{align}
	\mN = (\mA-\mP)^\top\bigl(\mI-\mU_\mP\mU_\mP^\top\bigr)\mU_\mA\mU_\mA^\top
\end{align}
satisfies 
\begin{align}
	\|\mN\|_{2 \to \infty} = \mathrm{O}\Bigl(\tfrac{\epsilon^2 k^{1/2}\log(n)}{n^{1/4}}\Bigr)
\end{align} 
almost surely. The matrix $\mN \in \mathbb{R}^{(n_1 + \ldots + n_k) \times n}$ splits into $k$ distinct matrices $\mN^{(r)} \in \mathbb{R}^{n_r \times n}$. This time, we must show that the Frobenius norms of the rows of each $\mN^{(r)}$ are interchangeable, from which a similar argument to our previous one will allow us to derive our desired bound for $\|\mN\|_{2 \to \infty}$.

Note that for any $r \in \{1,\ldots,k\}$ 
\begin{align}
	\mathcal{R}_d\bigl({\mP^{(r)}}^\top\bigr) = \mU_\mP\mU_\mP^\top \quad\mathrm{and}\quad \mathcal{R}_d\bigl({\mA^{(r)}}^\top\bigr) = \mU_\mA\mU_\mA^\top,
\end{align} 
while for any permutation matrices $\mQ_L \in \mathbb{O}(n_r)$ and $\mQ_R \in \mathbb{O}(n)$ we have 
\begin{align}
	\mathcal{R}_d\bigl(\mQ_L{\mP^{(r)}}^\top\mQ_R^\top\bigr) = \mQ_R\mU_\mP\mU_\mP^\top\mQ_R^\top \quad\mathrm{and}\quad \mathcal{R}_d\bigl(\mQ_L{\mA^{(r)}}^\top\mQ_R^\top\bigr) = \mQ_R\mV_\mA\mV_\mA^\top\mQ_R^\top.
\end{align}

Also, $\mathcal{P}_{\mathcal{R},d}\bigl({\mA^{(r)}}^\top,{\mP^{(r)}}^\top\bigr) = \mN^{(r)}$, while we find that
\begin{align}
	\mathcal{P}_{\mathcal{R},d}\bigl(\mQ_L{\mA^{(r)}}^\top\mQ_R^\top,\mQ_L{\mP^{(r)}}^\top\mQ_R^\top\bigr) = \mQ_L\mN^{(r)}\mQ_R^\top,
\end{align} 
and so it follows from a similar argument to before that the rows of $\mN^{(r)}$ are interchangeable, and thus we obtain our desired bound.

\item Similarly to part (i), we see that
\begin{align}
	\|\mR_{1,3}\|_{2 \to \infty} &\leq \|\mU_\mP\|_{2 \to \infty}\|\mU_\mP^\top(\mA-\mP)\mV_\mP\mW\mSig_\mA^{-1/2}\| \\
	& \leq \|\mU_\mP\|_{2 \to \infty}\|\mU_\mP^\top(\mA-\mP)\mV_\mP\|_F\|\mW\mSig_\mA^{-1/2}\|_F \\
	&= \mathrm{O}\Bigl(\tfrac{\epsilon \log^{1/2}(n)}{\rho^{1/2}n}\Bigr)
\end{align} 
by Proposition \ref{orth_spectral_bound} and Corollary \ref{A_bounds}.

\item Observe that 
\begin{align}
	\|\mR_{1,4}\|_{2 \to \infty} &\leq \|\mR_{1,4}\|_F \\
	& \leq \|\mA-\mP\|\|\mV_\mP\|_F\|\mW\mSig_\mA^{-1/2} - \mSig_\mP^{-1/2}\mW\|_F \\
	& = \mathrm{O}\Bigl(\tfrac{\epsilon^{9/2}k^{3/4}\log^{3/2}(n)}{n}\Bigr)
\end{align} 
by Propositions \ref{spectral_bound} and \ref{W_bounds}.
\end{enumerate}
\end{proof}

\subsection*{Proof of Theorem \ref{consistency}}\label{consistency_proof}
\addcontentsline{toc}{subsection}{\nameref{consistency_proof}}

\begin{proof} We first consider the left embedding $\mX_\mA$. Observe that 
\begin{align}
	\mX_\mA - \mX_\mP\mW &= \mU_\mA \mSig_\mA^{1/2} - \mU_\mP \mSig_\mP^{1/2} \mW \\
	&= \mU_\mA \mSig_\mA^{1/2} - \mU_\mP\mU_\mP^\top\mU_\mA\mSig_\mA^{1/2} + \mU_\mP(\mU_\mP^\top\mU_\mA\mSig_\mA^{1/2}- \mSig_\mP^{1/2} \mW) \\
	&= \mU_\mA \mSig_\mA^{1/2} - \mU_\mP\mU_\mP^\top\mU_\mA\mSig_\mA^{1/2} + \mR_1.
\end{align} 

Noting that $\mU_\mA\mSig_\mA^{1/2} = \mA\mV_\mA\mSig_\mA^{-1/2}$ and $\mU_\mP\mU_\mP^\top\mP = \mP$, we see that 
\begin{align}
	\mX_\mA - \mX_\mP\mW &= \mA\mV_\mA\mSig_\mA^{-1/2} - \mU_\mP\mU_\mP^\top\mA\mV_\mA\mSig_\mA^{-1/2} + \mR_{1,1} \\
	&= (\mA-\mP)\mV_\mA\mSig_\mA^{-1/2} - (\mU_\mP\mU_\mP^\top\mA - \mP)\mV_\mA\mSig_\mA^{-1/2} + \mR_{1,1} \\
	&= (\mA-\mP)\mV_\mA\mSig_\mA^{-1/2} - \mU_\mP\mU_\mP^\top(\mA-\mP)\mV_\mA\mSig_\mA^{-1/2} + \mR_{1,1} \\
	&= (\mI - \mU_\mP\mU_\mP^\top)(\mA-\mP)\mV_\mA\mSig_\mA^{-1/2} + \mR_{1,1} \\
	&= (\mI-\mU_\mP\mU_\mP^\top)(\mA-\mP)\bigl(\mV_\mP\mW + (\mV_\mA - \mV_\mP\mW)\bigr)\mSig_\mA^{-1/2} + \mR_{1,1} \\
	&= (\mA-\mP)\mV_\mP\mW\mSig_\mA^{-1/2} + \mR_{1,3} + \mR_{1,2} + \mR_{1,1} \\
	&= (\mA-\mP)\mV_\mP\mSig_\mP^{-1/2}\mW + \mR_{1,4} + \mR_{1,3} + \mR_{1,2} + \mR_{1,1}.
\end{align}

Applying Proposition \ref{residual_bounds}, we find that 
\begin{align}
	\|\mX_\mA - \mX_\mP\mW\|_{2 \to \infty} &= \|(\mA-\mP)\mV_\mP\mSig_\mP^{-1/2}\|_{2 \to \infty} + \mathrm{O}\Bigl(\tfrac{\epsilon^2 k^{1/2}\log(n)}{\rho^{1/2} n^{3/4}}\Bigr) \\
	& \leq \sigma_{d}(\mP)^{-1/2}\|(\mA-\mP)\mV_\mP\|_{2 \to \infty} + \mathrm{O}\Bigl(\tfrac{\epsilon^2 k^{1/2}\log(n)}{\rho^{1/2} n^{3/4}}\Bigr).
\end{align}
almost surely.

We condition on some set of latent positions. For any $i \in \{1,\ldots,n\}$, $j \in \{1,\ldots,d\}$ and $r \in \{1,\ldots,k\}$, let 
\begin{align}
	\mE_{ij}^{(r)} = \left\{\begin{array}{cc}\displaystyle\sum_{l=1}^{n_r} \bigl(\mA^{(r)}_{il} - \mP^{(r)}_{il}\bigr)v^{(r)}_l& \mathrm{if~}\mA^{(r)} \mathrm{~is~bipartite}\\&\\\displaystyle\sum_{l \neq i} \bigl(\mA^{(r)}_{il} - \mP^{(r)}_{il}\bigr)v^{(r)}_l& \mathrm{otherwise}\end{array}\right.
\end{align} 
and 
\begin{align}
	\mF_{ij}^{(r)} = \left\{\begin{array}{cc}\mathbf{0}& \mathrm{if~}\mA^{(r)} \mathrm{~is~bipartite}\\&\\\mP^{(r)}_{ii}v^{(r)}_i& \mathrm{otherwise},\end{array}\right.\end{align} 
where $v^{(r)}$ denotes the $j$th column of $\mV^{(r)}_\mP$, so that 
\begin{align}
	\bigl((\mA-\mP)\mV_\mP\bigr)_{ij} = \sum_{r=1}^k \mE_{ij}^{(r)} - \sum_{r=1}^k \mF_{ij}^{(r)}.
\end{align}

 The latter term is of order $\mathrm{O}\bigl(\epsilon\rho k^{1/2}\bigr)$ (as can be seen by applying the Cauchy--Schwarz inequality) and thus can be discounted from our asymptotic analysis. The former is a sum of independent, zero-mean random variables, with each individual term bounded in absolute value by $|v^{(r)}_l|$, and thus we can apply Hoeffding's inequality to see that 

\begin{align}
	\mathbb{P}\Biggl(\Bigl|\sum_{r=1}^k\mE_{ij}^{(r)}\Bigr| > t\Biggr) \leq 2 \exp\Biggl(\frac{-2t^2}{4\sum_{r=1}^k\sum_{l=1}^{n_r}|v^{(r)}_l|^2}\Biggr) = 2\exp\Bigl(\frac{-t^2}{2}\Bigr).
\end{align}

 Thus $\bigl((\mA-\mP)\mV_\mP\bigr)_{ij} = \mathrm{O}\bigl(\log^{1/2}(n)\bigr)$ almost surely, and hence $\bigl|\bigl((\mA-\mP)\mV_\mP\bigr)_i\bigr| = \mathrm{O}\bigl(\log^{1/2}(n)\bigr)$ almost surely by summing over all $j \in \{1,\ldots,d\}$. Taking the union bound over all $n$ rows then shows that 
\begin{align}
	\sigma_{d}(\mP)^{-1/2}\|(\mA-\mP)\mV_\mP\|_{2 \to \infty} = \mathrm{O}\Bigl(\tfrac{\log^{1/2}(n)}{\rho^{1/2}n^{1/2}}\Bigr),
\end{align} 
almost surely and consequently that 
\begin{align}
	\|\mX_\mA - \mX\mL\|_{2 \to \infty} = \mathrm{O}\Bigl(\tfrac{\log^{1/2}(n)}{\rho^{1/2}n^{1/2}}\Bigr)
\end{align} 
almost surely by setting $\mL = \widetilde{\mL} \mW$. The second bound follows from Corollary \ref{L_matrix_norm} and the fact that $\|\mA\mB\|_{2 \to \infty} \leq \|\mA\|_{2 \to \infty} \|\mB\|$. Integrating over all possible sets of latent positions gives the result.

A similar argument is used for the right embedding. Observe that 
\begin{align}
	\mY_\mA - \mY_\mP\mW &= \mV_\mA \mSig_\mA^{1/2} - \mV_\mP \mSig_\mP^{1/2} \mW \\
	&= \mV_\mA \mSig_\mA^{1/2} - \mV_\mP\mV_\mP^\top\mV_\mA\mSig_\mA^{1/2} + \mV_\mP(\mV_\mP^\top\mV_\mA\mSig_\mA^{1/2}- \mSig_\mP^{1/2} \mW) \\
	&= \mV_\mA \mSig_\mA^{1/2} - \mV_\mP\mV_\mP^\top\mV_\mA\mSig_\mA^{1/2} + \mR_{2,1}.
\end{align}

Noting that $\mA^\top\mU_\mA\mSig_\mA^{-1/2} = \mV_\mA\mSig_\mA^{1/2}$ and $\mV_\mP\mV_\mP^\top\mP^\top = \mP^\top$, we see that
\begin{align}
	\mY_\mA - \mY_\mP\mW &= \mA^\top\mU_\mA\mSig_\mA^{-1/2} - \mV_\mP\mV_\mP^\top\mA^\top\mU_\mA\mSig_\mA^{-1/2} + \mR_{2,1} \\
	&= (\mA-\mP)^\top\mU_\mA\mSig_\mA^{-1/2} - (\mV_\mP\mV_\mP^\top\mA^\top - \mP^\top)\mU_\mA\mSig_\mA^{-1/2} + \mR_{2,1}\\
	&= (\mA-\mP)^\top\mU_\mA\mSig_\mA^{-1/2} - \mV_\mP\mV_\mP^\top(\mA-\mP)^\top\mU_\mA\mSig_\mA^{-1/2} + \mR_{2,1} \\
	&= (\mI - \mV_\mP\mV_\mP^\top)(\mA-\mP)^\top\mU_\mA\mSig_\mA^{-1/2} + \mR_{2,1} \\
	&= (\mI-\mV_\mP\mV_\mP^\top)(\mA-\mP)^\top(\mU_\mP\mW + (\mU_\mA - \mU_\mP\mW))\mSig_\mA^{-1/2} + \mR_{2,1} \\
	&= (\mA-\mP)^\top\mU_\mP\mW\mSig_\mA^{-1/2} + \mR_{2,3}+ \mR_{2,2} + \mR_{2,1} \\
	&= (\mA-\mP)^\top\mU_\mP\mSig_\mP^{-1/2}\mW + \mR_{2,4} + \mR_{2,3} + \mR_{2,2} + \mR_{2,1}.
\end{align}

Applying Proposition \ref{residual_bounds} once more, we find that 
\begin{align}
	\|\mY_\mA - \mY_\mP\mW\|_{2 \to \infty} = \|(\mA-\mP)^\top\mU_\mP\mSig_\mP^{-1/2}\|_{2 \to \infty} + \mathrm{O}\Bigl(\tfrac{\epsilon^2 k^{1/2}\log(n)}{\rho^{1/2} n^{3/4}}\Bigr)
\end{align} 
almost surely and consequently that 
\begin{align}
	\|\mY^{(r)}_\mA - \mY^{(r)}_\mP\mW\|_{2 \to \infty} \leq \sigma_{d}(\mP)^{-1/2}\|(\mA^{(r)}-\mP^{(r)})^\top\mU_\mP\|_{2 \to \infty} + \mathrm{O}\Bigl(\tfrac{\epsilon^2 k^{1/2}\log(n)}{\rho^{1/2} n^{3/4}}\Bigr).
\end{align} 
almost surely.

If we condition on a set of latent positions, an identical argument to before shows that 
\begin{align}
	\sigma_d(\mP)^{-1/2}\|(\mA^{(r)}-\mP^{(r)})^\top\mU_\mP\|_{2 \to \infty} = \mathrm{O}\Bigl(\tfrac{\log^{1/2}(n)}{\rho^{1/2}n^{1/2}}\Bigr),
\end{align} 
and consequently that 
\begin{align}
	\|\mY^{(r)}_\mA - \mY\mR_r\|_{2 \to \infty} = \mathrm{O}\Bigl(\tfrac{\log^{1/2}(n)}{\rho^{1/2}n^{1/2}}\Bigr)
\end{align} 
by setting $\mR_r = \widetilde{\mR}_r \mW$. Proposition \ref{L_matrices} shows that $\widetilde{\mL}\widetilde{\mR}_r^\top = \mLam_r$, and the remaining bounds follow from Corollary \ref{L_matrix_norm} as for the left embedding. As before, integrating over all possible sets of latent positions gives the final result. 
\end{proof}

\subsection*{Proof of Theorem \ref{CLT}}\label{CLT_proof}
\addcontentsline{toc}{subsection}{\nameref{CLT_proof}}

\begin{proof} We first consider the left embedding $\mX_\mA$. Recall from the proof of Theorem \ref{consistency} that 
\begin{align}
	n^{1/2}(\mX_\mA \mL^{-1} - \mX) = n^{1/2}(\mA-\mP)\mV_\mP\mSig_\mP^{-1/2}\widetilde{\mL}^{-1} + n^{1/2}\mR,
\end{align} 
where the residual term $\mR$ satisfies $\|n^{1/2}\mR\|_{2 \to \infty} \to 0$ by Proposition \ref{residual_bounds} and our assumptions in Section \ref{subsec:asymptotics}.

The first of the right-hand terms may be rewritten as 
\begin{align}
	n^{1/2}(\mA-\mP)\mV_\mP\mSig_\mP^{-1/2}\widetilde{\mL}^{-1} = n^{1/2}\sum_{r=1}^k\bigl(\mA^{(r)}-\mP^{(r)}\bigr)\mY^{(r)}\widetilde{\mR}_r\mSig_\mP^{-1}\widetilde{\mL}^{-1}
\end{align} 
by splitting $(\mA-\mP)\mV_\mP$ into the individual terms $\bigl(\mA^{(r)}-\mP^{(r)}\bigr)\mV^{(r)}_\mP$ and noting that 
\begin{align}
	\mV^{(r)}_\mP \mSig_\mP^{-1/2} = \mY^{(r)}_\mP\mSig_\mP^{-1} = \mY^{(r)}\widetilde{\mR}_r\mSig_\mP^{-1}.
\end{align}

Consequently, 
\begin{align}
	n^{1/2}(\mX_\mA \mL^{-1} - \mX)_i^\top &\to n^{1/2}\sum_{r=1}^k(\widetilde{\mR}_r\mSig_\mP^{-1}\widetilde{\mL}^{-1})^\top\bigl[(\mA^{(r)}-\mP^{(r)})\mY^{(r)}\bigr]_i^\top\\
	&= n (\mLam_\epsilon\mY^\top\mY\mLam_\epsilon^\top)^{-1}\sum_{r=1}^k\Bigl[\tfrac{1}{n^{1/2}}\sum_{j=1}^{n_r} (\mA^{(r)}_{ij} - \mP^{(r)}_{ij})\mLam_{\epsilon,r}\mY^{(r)}_j\Bigr]
\end{align}
almost surely by Proposition \ref{L_matrix_identities}.

Noting that $\mY^{(r)}_j = \rho^{1/2}\upsilon^{(r)}_j$ and that $n(\mLam_\epsilon\mY^\top\mY\mLam_\epsilon^\top)^{-1} \to \tfrac{1}{\rho}\Delta_{\mLam,Y}^{-1}$ almost surely by the law of large numbers, we see that 
\begin{align}
	n^{1/2}(\mX_\mA \mL^{-1} - \mX)_i^\top \to \Delta_{\mLam,Y}^{-1}\sum_{r=1}^k\Bigl[\tfrac{1}{\rho^{1/2} n^{1/2}}\sum_{j= 1}^{n_r} (\mA^{(r)}_{ij} - \mP^{(r)}_{ij})\mLam_{\epsilon,r}\upsilon^{(r)}_j\Bigr]
\end{align} 
almost surely.

If $\mA^{(r)}$ is not bipartite, we may disregard the diagonal terms in our asymptotic analysis, as each of the $k$ such terms satisfies 
\begin{align} 
	\bigl\|\tfrac{1}{\rho^{1/2} n^{1/2}}\mP^{(r)}_{ii}\mLam_{\epsilon,r}\upsilon^{(r)}_i\bigr\| \to 0
\end{align} 
almost surely.

Assume first that each of the matrices $\mY^{(r)}$ is independent of the others. Conditional on $\xi_i = \mx$, we have $\mP^{(r)}_{ij} = \rho\mx^\top\mLam_{\epsilon,r}\upsilon^{(r)}_j$, and so 
\begin{align}
	\tfrac{1}{\rho^{1/2}n^{1/2}}\sum_{j} (\mA^{(r)}_{ij} - \mP^{(r)}_{ij})\mLam_{\epsilon,r}\upsilon^{(r)}_j
\end{align} 
is a scaled sum of independent, identically distributed, zero-mean random variables, each with covariance matrix given by 
\begin{align}
	\mathbb{E}\bigl[\mx^\top\mLam_{\epsilon,r}\upsilon_r(1 - \rho\mx^\top\mLam_{\epsilon,r}\upsilon_r)\cdot\mLam_{\epsilon,r}\upsilon_r\upsilon_r^\top\mLam_{\epsilon,r}^\top\bigr]
\end{align} 
which implies (by the multivariate central limit theorem) that 
\begin{align}
	\tfrac{1}{\rho^{1/2} n^{1/2}}\sum_{j} (\mA^{(r)}_{ij} - \mP^{(r)}_{ij})\mLam_{\epsilon,r}\upsilon^{(r)}_j \to \mathcal{N}\bigl(\mathbf{0},c_r\mLam_r\mSig^{(r)}_Y(\mx)\mLam_r^\top\bigr)
\end{align} 
if $\epsilon_r = 1$, and vanishes otherwise, and thus we find that 
\begin{align}
	n^{1/2}(\mX_\mA \mL^{-1} - \mX)_i^\top \to \mathcal{N}\bigl(\mathbf{0},\Delta_{\mLam,Y}^{-1}\mLam_*\mSig_Y(\mx)\mLam_*^\top\Delta_{\mLam,Y}^{-1}\bigr)
\end{align} 
almost surely, where $\mSig_Y(\mx) = c_1\mSig_Y^{(1)}(\mx) \oplus \cdots \oplus c_\kappa\mSig_Y^{(\kappa)}(\mx)$. We deduce the Central Limit Theorem by integrating over all possible values of $\mx \in \mathcal{X}$.

The same statement holds even if there is dependence between any of the matrices $\mY^{(r)}$. Indeed, suppose that the matrices $\mY^{(r)}$ are dependent for all $r \in \mathcal{K}$, for some subset $\mathcal{K}$ of $\{1,\ldots,k\}$. Then for any $i$ and $j$, 
\begin{align}
	\tfrac{1}{\rho^{1/2} n^{1/2}}\sum_{j} \sum_{r\in \mathcal{K}}(\mA^{(r)}_{ij} - \mP^{(r)}_{ij})\mLam_{\epsilon,r}\upsilon^{(r)}_j
\end{align} 
is a sum of independent, identically distributed, zero-mean random variables, for which the covariance matrices 
\begin{align}
	\mathbb{E}\Bigl[\sum_{r,s \in \mathcal{K}}(\mA^{(r)}_{ij} - \mP^{(r)}_{ij})(\mA^{(s)}_{ij}-\mP^{(s)}_{ij})\cdot\mLam_{\epsilon,r}\upsilon^{(r)}_j{\upsilon^{(s)}_j}^\top\mLam_{\epsilon,s}^\top\Bigr]
\end{align}
reduce to 
\begin{align}
	\mathbb{E}\Bigl[\sum_{r \in \mathcal{K}} \mx^\top\mLam_{\epsilon,r}\upsilon_r(1 - \rho\mx^\top\mLam_{\epsilon,r}\upsilon_r)\cdot\mLam_{\epsilon,r}\upsilon_r\upsilon_r^\top\mLam_{\epsilon,r}^\top\Bigr]
\end{align} 
since the terms 
\begin{align}
	\mathbb{E}\Bigl[(\mA^{(r)}_{ij} - \mP^{(r)}_{ij})(\mA^{(s)}_{ij}-\mP^{(s)}_{ij})\cdot\mLam_{\epsilon,r}\upsilon^{(r)}_j{\upsilon^{(s)}_j}^\top\mLam_{\epsilon,s}^\top\Bigr]
\end{align}
vanish if $r$ and $s$ are not equal.

Similarly, for the right embedding we observe that 
\begin{align}
	n^{1/2}(\mY^{(r)}_\mA\mR_r^{-1} - \mY^{(r)}) = n^{1/2}(\mA^{(r)}-\mP^{(r)})^\top\mU_\mP\mSig_\mP^{-1/2}\widetilde{\mR}_r^{-1} + n^{1/2}\mR,
\end{align} 
where the residual term $\mR$ satisfies $\|n^{1/2}\mR\|_{2 \to \infty} \to 0$.

Again, we may rewrite the first of the right-hand terms, obtaining the expression 
\begin{align}
	n^{1/2}(\mA^{(r)}-\mP^{(r)})^\top\mU_\mP\mSig_\mP^{-1/2}\widetilde{\mR}_r^{-1} = n^{1/2}(\mA^{(r)}-\mP^{(r)})^\top\mX\widetilde{\mL}\mSig_\mP^{-1}\widetilde{\mR}_r^{-1}
\end{align} 
by noting that $\mU_\mP \mSig_\mP^{-1/2} = \mX\widetilde{\mL}\mSig_\mP^{-1}$, and consequently find that
\begin{align}
n^{1/2}(\mY^{(r)}_\mA\mR_r^{-1} - \mY^{(r)}) &\to n^{1/2}(\widetilde{\mL}\mSig_\mP^{-1}\widetilde{\mR}_r^{-1})^\top\bigl[(\mA^{(r)}-\mP^{(r)})^\top\mX\bigr]_i^\top \\
	&= n\mLam_{\epsilon,r}^{-1}(\mX^\top\mX)^{-1}\Bigl[\tfrac{1}{n^{1/2}}\sum_{j=1}^n (\mA^{(r)}_{ji} - \mP^{(r)}_{ji})\mX_j\Bigr]
\end{align}
almost surely by Proposition \ref{L_matrix_identities}.

Noting that $\mX_j = \rho^{1/2}\xi_j$ and that $n(\mX^\top\mX)^{-1} \to \tfrac{1}{\rho}\Delta_X^{-1}$ almost surely by the law of large numbers, we see that 
\begin{align}
	n^{1/2}(\mY^{(r)}_\mA\mR_r^{-1} - \mY^{(r)}) \to \mLam_r^{-1}\Delta_X^{-1}\Bigl[\tfrac{1}{\rho^{1/2}n^{1/2}}\sum_{j=1}^n (\mA^{(r)}_{ji} - \mP^{(r)}_{ji})\xi_j\Bigr]
\end{align} 
almost surely.

As before, if $\mA^{(r)}$ is not bipartite we may disregard the diagonal term in our asymptotic analysis, as it satisfies 
\begin{align} 
	\bigl\|\tfrac{1}{\rho^{1/2}n^{1/2}}\mP^{(r)}_{ii}\xi_i\bigr\| \to 0
\end{align} 
almost surely.

 Conditional on $\upsilon^{(r)}_i = \my$, we have $\mP^{(r)}_{ji} = \rho\xi_j^\top\mLam_{\epsilon,r}\my$, and so 
\begin{align}
	\tfrac{1}{\rho^{1/2}n^{1/2}}\sum_j (\mA^{(r)}_{ji} - \mP^{(r)}_{ji})\xi_j
\end{align} 
is a scaled sum of independent, zero-mean random variables, each with covariance matrix given by 
\begin{align}
	\mathbb{E}\bigl[\xi^\top\mLam_{\epsilon,r}\my(1 - \rho\xi^\top\mLam_{\epsilon,r}\my)\cdot\xi\xi^\top\bigr]
\end{align} 
which implies (by the multivariate central limit theorem) that, provided $\epsilon_r = 1$, 
\begin{align}
	\tfrac{1}{\rho^{1/2}n^{1/2}}\sum_j (\mA^{(r)}_{ji} - \mP^{(r)}_{ji})\xi_j \to \mathcal{N}\bigl(\mathbf{0},\mSig^{(r)}_X(\my)\bigr),
\end{align}
and thus we find that, provided $\epsilon_r = 1$, 
\begin{align}
	n^{1/2}\bigl(\mY^{(r)}_\mA\mR_r^{-1} - \mY^{(r)}\bigr)_i^\top \to \mathcal{N}\bigl(\mathbf{0},\mLam_r^{-1}\Delta_X^{-1}\mSig_X^{(r)}(\my)\Delta_X^{-1}\mLam_r^{-\top}\bigr)
\end{align} 
almost surely. We deduce the Central Limit Theorem by integrating over all possible values of $\my \in \mathcal{Y}_r$. 
\end{proof}

\end{document}